\documentclass[11pt]{article}

\usepackage{hyperref}       
\usepackage{booktabs}       
\usepackage{nicefrac}       
\usepackage{microtype}      
\usepackage{times}
\usepackage{color}
\usepackage{subfigure}
\usepackage{graphicx}
\usepackage{amssymb}
\usepackage{amsthm}
\usepackage{amsmath}
\usepackage{algorithm}
\usepackage{algorithmic}
\usepackage{amsfonts}
\usepackage{appendix}
\usepackage{multirow}

\usepackage{caption}
\usepackage{lipsum}


\DeclareCaptionStyle{algori}{singlelinecheck=off,labelsep=space}
\captionsetup[algorithm]{style=algori}

\newcommand{\citep}{\cite}

%



\newtheorem{lemma}{Lemma}
\newtheorem{theorem}[lemma]{Theorem}
\newtheorem{definition}[lemma]{Definition}

\newtheorem{remark}[lemma]{Remark}


\newcommand{\Exp}{{\mathbb{E}}}
\newcommand{\innerprod}[2]{\langle #1,#2 \rangle}
\newcommand{\topic}[1]{ \smallskip\noindent{\bf #1:}}
\DeclareMathOperator{\argmin}{argmin}

\newcommand{\e}{\epsilon}
\newcommand{\R}{\mathbb{R}}

\newcommand{\OPT}{\mathsf{OPT}}

\newcommand{\calD}{\mathcal{D}}
\newcommand{\calP}{\mathcal{P}}
\newcommand{\calQ}{\mathcal{Q}}
\newcommand{\calS}{\mathcal{S}}

\newcommand{\client}{C}
\newcommand{\clientset}{\mathcal{C}}
\newcommand{\server}{S}

\makeatletter
\newtheorem{rep@theorem}{\rep@title}
\renewcommand\therep@theorem{\unskip}
\newcommand{\newreptheorem}[2]{%
\newenvironment{rep#1}[1]{%
 \def\rep@title{#2 \textbf{\ref{##1}}}%
 \begin{rep@theorem}}%
 {\end{rep@theorem}}}
\makeatother

\newreptheorem{theorem}{\textbf{Theorem}}
\newreptheorem{lemma}{\textbf{Lemma}}

\newcommand{\eat}[1]{}


\begin{document}

\title{SVM via Saddle Point Optimization:
	New Bounds and Distributed Algorithms}

\author{Yifei Jin   \\
 Tsinghua University\\
    \and
    Lingxiao Huang \\
    EPFL \\
    \and
	Jian Li \\
    Tsinghua University\\
}

\date{}

\maketitle


 \begin{abstract}

We study two important SVM variants:
hard-margin SVM (for linearly separable cases) and $\nu$-SVM (for linearly non-separable cases).
We propose new algorithms from the
perspective of saddle point optimization.
Our algorithms achieve $(1-\e)$-approximations with running
time  $\tilde{O}(nd+n\sqrt{d / \e})$ for both variants, where $n$ is the number of points and $d$ is the dimensionality.
To the best of our knowledge, the current best algorithm  for $\nu$-SVM  is based on  quadratic programming approach which requires $\Omega(n^2 d)$  time in worst case~\citep{joachims1998making,platt199912}. In the paper, we provide the first nearly linear time algorithm for $\nu$-SVM.
The current best algorithm for hard margin SVM achieved by Gilbert algorithm~\citep{gartner2009coresets} requires $O(nd / \e )$
time. Our algorithm improves the running time by a factor of $\sqrt{d}/\sqrt{\e}$.
Moreover, our algorithms can be implemented in the distributed settings naturally. We prove that our algorithms require $\tilde{O}(k(d +\sqrt{d/\e}))$ communication
cost, where $k$ is the number of clients, which almost matches the theoretical lower bound.
Numerical experiments support our theory and show that our algorithms converge faster on high dimensional,
large and dense data sets, as compared to previous methods.

\end{abstract}


\section{Introduction}
\label{sec:intro}

Support Vector Machine (SVM) is widely used for
classification in numerous applications such as text categorization, image classification, and hand-written characters recognition.

In this paper, we focus on binary classification.
If two classes of points which are linearly separable, one can use the hard-margin SVM (\cite{boser1992training,cortes1995support}), which is to find a  hyperplane that separate two classes of points and the margin is maximized.
If the data is not linearly separable, several popular SVM variants have been proposed, such as $l_2$-SVM, $C$-SVM and $\nu$-SVM (see e.g., the summary in \cite{gartner2009coresets}).
The main difference among these variants is that they use different
penalty loss functions for the misclassified  points.
$l_2$-SVM, as the name implied, uses the $l_2$ penalty loss.
$C$-SVM and $\nu$-SVM are two well-known SVM variants using $l_1$-loss.
$C$-SVM uses the $l_1$-loss
with penalty coefficient $C \in [0, \infty)$ \citep{zhu20041}.
On the other hand,  $\nu$-SVM reformulates $C$-SVM through taking a new regularization parameter $\nu \in (0,1]$~\citep{scholkopf2000new}.
However,  given a $C$-SVM formulation, it is not easy to compute the regularization parameter $\nu$ and obtain an equivalent $\nu$-SVM. Because the equivalence is based on some hard-to-compute constant. Compared to $C$-SVM,
the parameter $\nu$ in $\nu$-SVM has a more clear geometric interpretation: the objective is to minimize the distance between two reduced polytopes defined based on $\nu$~\citep{crisp2000geometry}. However, the best known algorithm for
$\nu$-SVM is much worse than that for $C$-SVM in practice (see below).

In general, SVMs  can be formulated as convex quadratic programs and  solved by quadratic programs in $O(n^2d)$ time~\citep{joachims1998making,platt199912}.
However, better algorithms exists for some SVM variants, which we briefly discuss below.

For hard-margin SVM,
\cite{gartner2009coresets} showed that Gilbert algorithm \citep{gilbert1966iterative}
achieves a $(1-\e)$-approximation with $O(nd / \e \beta^2)$ running time where $\beta$ is the ratio of the
minimum distance to the maximum one among the points.
$l_2$-SVM and $C$-SVM have been studied extensively and
current best algorithms runs in time linear in the number $n$ of data points
  \citep{shalev2007pegasos,franc2008optimized,duchi2009efficient,allen2016katyusha}. 
However, these techniques cannot be extended to $\nu$-SVM directly, mainly because $\nu$-SVM
cannot be transformed to single-objective unconstrained optimization problems. Except the traditional
quadratic programming approach, there is no better algorithm known with provable guarantee for $\nu$-SVM.
Whether $\nu$-SVM can be solved in nearly linear time is still open.

Distributed SVM has also attracted significant attention in recent years.  A number of distributed algorithms for SVM  have been obtained in the past~\citep{graf2004parallel,navia2006distributed,lu2008distributed,forero2010consensus,zhang2012efficient}.
Typically, the communication complexity is one of the key performance measurements for distributed algorithms, and
has been studied extensively (see \cite{yao1979some,orlitsky1990average,kushilevitz1997communication} ). For hard-margin SVM, recently, Liu et al.~\cite{liu2016distributed} proposed a distributed algorithm with $O(kd/\e)$ communication cost, where $k$ is the number of the clients. Hence, it is a natural question to
ask whether the communication cost of their algorithm can be improved.


\eat{
The most popular
distributed model is to  store data in distributed sites,
and those sites collaboratively solve the algorithmic problem of interest
by communicating with each other through network links.
}
\subsection{Our Contributions}
\label{sec:contribution}
We summarize our main contributions as follows.
\begin{enumerate}
\item \topic{Hard-Margin SVM} We provide a new $(1-\e)$-approximation algorithm with running time
 $\tilde{O}(nd+n\sqrt{d}/\sqrt{\e \beta})$,
 where $\beta$ is the ratio of the
 minimum distance to the maximum one among the points (see Theorem \ref{thm:convergence}). \footnote{$\tilde{O}$ notation hides logarithm factors such as $\log(n)$, $\log(\beta)$ and
 $\log(1/\e)$.} Compared to Gilbert algorithm \citep{gartner2009coresets}, our algorithm improves the running time by a factor of $\sqrt{d}/\sqrt{\e}$.
 First, we regard hard-margin SVM as computing the polytope distance between two classes of points. Then we translate the problem to a saddle point
 optimization problem using the properties of the geometric structures (Lemma \ref{lm:svmtosp}), and provide an algorithm to solve the saddle point optimization.


 \item \topic{$\nu$-SVM}  Then, we extend our algorithm to  $\nu$-SVM and
 design an $\tilde{O}(nd+n\sqrt{d}/\sqrt{\e \beta})$ time
 algorithm, which is the most important technical contribution of this paper.  To the best of our knowledge, it is the first
 nearly linear time algorithm for $\nu$-SVM.
 It is known that $\nu$-SVM is equivalent to computing the distance between two reduced
 polytopes~\citep{bennett2000duality,crisp2000geometry}. The obstacle for providing an efficient algorithm based on the reduced polytopes is that the number of vertices in the reduced polytopes may be exponentially large. However, in our framework, we only need to implicitly represent the reduced polytopes. We show that using the similar saddle point optimization framework, together with a new
 nontrivial projection method, $\nu$-SVM can be solved efficiently in the same time complexity as in the hard-margin case. Compared with the QP-based algorithms in previous work \citep{joachims1998making,platt199912}, our algorithm significantly improves the running time, by a factor of $n$.

\item \topic{Distributed SVM} Finally, we extend our algorithms for both hard-margin SVM and $\nu$-SVM to the
distributed setting.
We prove that the communication cost of our algorithm is $\tilde{O}(k(d +  \sqrt{d/\e}))$, which is almost optimal according to the lower bound provided in \cite{liu2016distributed}.
For the hard-margin SVM, compared with the current best algorithm~\citep{liu2016distributed}  with $O(kd/\e)$ communication cost,  our algorithm is more suitable when $\e$ is small and $d$ is large. For $\nu$-SVM, our algorithm is the first practical distributed algorithm.
\end{enumerate}

\eat{ Second, our algorithms  in both centralized and distributed settings and for hard-margin SVM and $\nu$-SVM  can be extended to the
multiclass SVM problems without loss in performance. }

Besides, the numerical experiments support our theoretical bounds.
We compare our algorithms with Gilbert Algorithm~\citep{gartner2009coresets}
and NuSVC, LinearSVC in scikit-learn~\citep{pedregosa2011scikit}.
The experiments show that our algorithms converge faster on high dimensional,
large and dense data sets.

\eat{
The paper is organized as follows. In Section~\ref{sec:series}, we reformulate the hard-margin SVM and $\nu$-SVM
as saddle point optimization problems.  Then we present our algorithms to solve the
saddle point optimization problems and analyze the running time in Section~\ref{sec:svmsp}. In
Section~\ref{sec:dist}, we provide the  distributed version of our algorithms and analyze its
communication complexity. In Section~\ref{sec:exp}, we provide the experimental results
comparing with Gilbert Algorithm~\cite{gartner2009coresets} and NuSVC, LinearSVC in scikit-learn~\cite{pedregosa2011scikit}.
Due to space constraints, we defer many proof details and some experimental results to the appendix.
}

\subsection{Other Related Work}
\label{sec:related}
For the hard-margin SVM, there is an alternative to Gilbert's method, called 
the MDM algorithm, originally proposed by \cite{mitchell1974finding}.
Recently, L{\'o}pez and Dorronsoro
proved that the rate of convergence of MDM algorithm is $O(n^2d \log (1/\e))$ \citep{lopez2015linear} which is a linear convergence w.r.t. $\e$,
but worse than Gilbert Algorithm w.r.t. $n$.

Both $C$-SVM and $l_2$-SVM have been studied extensively in the literature. Basically,
there are three main algorithmic approaches:
the primal gradient-based
methods~\citep{kivinen2004online,shalev2007pegasos,duchi2009efficient,franc2008optimized,allen2016katyusha},
dual quadratic programming
methods~\citep{joachims2006training,smola2007bundle,hsieh2008dual} and
dual geometry methods~\citep{tsang2005core,tsang2007simpler}.
Recently, \cite{allen2016katyusha} provided the current
best algorithms which achieve $O( nd / \sqrt{\e})$ time for $l_2$-SVM and $O(nd/\e)$ time for $C$-SVM.

Some sublinear time algorithms for hard-margin SVM and $l_2$-SVM have been proposed~\citep{clarkson2012sublinear, hazan2011beating}.
These algorithms
are sublinear w.r.t. $nd$, (i.e., the size of the input), but
have worse dependency on $1/\e$.

The algorithmic framework for saddle point optimization
was first developed by Nesterov for structured nonsmooth optimization problem~\citep{nesterov2005excessive}.
He only considered the full gradient in the algorithm.
Recently, some studies have extended it to the stochastic gradient setting~\citep{zhang2015stochastic,allen2016optimization}.
The most related work is~\citep{allen2016optimization}, in which the author obtained an $\tilde{O}(nd+n\sqrt{d}/\sqrt{\e})$ algorithm
for the minimum enclosing ball problem (MinEB) in Euclidean space, using the saddle point optimization.
This result also implies an algorithm for $l_2$-SVM, by the connection between MinEB and $l_2$-SVM (see \cite{tsang2005core,har2007maximum,tsang2007simpler}).
However, the implied algorithm is not as efficient.
Based on \cite{tsang2005core,tsang2007simpler}, the dual of
$l_2$-SVM is equivalent to MinEB by a specific feature mapping. It maps a $d$-dimensional
point to the $(d+n)$-dimensional space. Thus, after the mapping, it takes
quadratic time to solve $l_2$-SVM.
To avoid this mapping, they designed an algorithm called Core Vector Machine (CVM), in which they can solve $l_2$-SVM by solving
$O(1/\e)$ MinEB problems sequentially.


\eat{
\section{Saddle Point Optimization for SVM}
\label{sec:saddle}
}

\section{Formulate SVM as Saddle Point Optimization}
\label{sec:series}
In this section,
we formulate both hard-margin SVM and $\nu$-SVM, and show that they can be reduced to saddle point
optimizations. All vectors in the paper are all column vectors
by default.

\begin{definition}[Hard-margin SVM] Given $n$ points $x_i \in \R^d$ for $1 \leq i \leq n$, each $x_i$ has a label $y_i \in \{\pm 1\}$.
The hard-margin SVM can be formalized as  the following
quadratic programming~\citep{cortes1995support}.
\begin{equation}
  \label{eq:lp}
  \begin{array}{lcl}
	\min\limits_{w, b}  & \frac{1}{2}\| w \|^2   & \\
	\text{s.t.}&  y_i(w^{\mathrm{T}}x_i - b)   \geq 1,  & \forall i
  \end{array}
\end{equation}
\end{definition}

The dual problem of \eqref{eq:lp} is defined as follows, which is equivalent to finding the minimum distance between the two convex hulls of two classes of points~\citep{bennett2000duality} when they are linearly separable. We call the problem the C-Hull problem.
\begin{equation}
  \label{eq:chull}
  \begin{array}{lcl}
	\min\limits_{\eta, \xi}  &\frac{1}{2} \| A \eta - B \xi   \|^2   &   \\
	\text{s.t.}&  \| \eta \|_1 = 1, \| \xi \|_1 = 1.  \quad  \eta \geq  0 , \xi  \geq  0.
  \end{array}
\end{equation}
where  $A $ and $B$ are the  matrices in which each column represents a vector of a point with label $+1$ or $-1$ respectively.

Denote the set of points with label $+1$ by $\calP$ and the set with label $-1$ by $\calQ$. Let $n_1 = |\calP|$ and $n_2 = |\calQ|$.
Since $   \sum_{i} \eta_i  = 1$, we can regard it as a probability distribution among points in $P$ (similarly for $Q$). We denote $\Delta_{n_1}$ to be the
 set of $n_1$-dimensional probability  vectors over $\calP$ and $\Delta_{n_2}$ to be that over
 $\calQ$.  Then, we prove that the C-Hull problem~\eqref{eq:chull} is equivalent to the following saddle point
 optimization in Lemma \ref{lm:svmtosp}.
 We defer the proof to Appendix~\ref{apd:series}.
\begin{lemma}
\label{lm:svmtosp}
Problem C-Hull~\eqref{eq:chull}  is equivalent to the saddle point
optimization~\eqref{eq:saddle}.
\begin{equation}
\label{eq:saddle}
\OPT =  \max \limits_{w}  \min \limits_{ \eta  \in \Delta_{n_1} ,  \xi \in \Delta_{n_2} }
   w^{\rm T} A \eta -   w^{\rm T} B \xi  -  \frac{1}{2}\| w \|^2
\end{equation}
\end{lemma}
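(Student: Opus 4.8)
The plan is to view \eqref{eq:saddle} as the minimax dual of the C-Hull program \eqref{eq:chull} and to reduce the claim to an exchange of the order of $\max_w$ and $\min_{\eta,\xi}$. First I would evaluate the inner problem in the \emph{opposite} order: for any fixed feasible pair $(\eta,\xi)$, the function $w\mapsto w^{\mathrm T}(A\eta-B\xi)-\tfrac12\|w\|^2$ is a strictly concave quadratic in $w$, uniquely maximized at $w=A\eta-B\xi$ with optimal value $\tfrac12\|A\eta-B\xi\|^2$. Hence
\[
\min_{\eta\in\Delta_{n_1},\,\xi\in\Delta_{n_2}}\ \max_{w\in\R^d}\ \Big(w^{\mathrm T}A\eta-w^{\mathrm T}B\xi-\tfrac12\|w\|^2\Big)\;=\;\min_{\eta\in\Delta_{n_1},\,\xi\in\Delta_{n_2}}\tfrac12\|A\eta-B\xi\|^2,
\]
which is exactly the optimum of \eqref{eq:chull}. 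It therefore suffices to show that swapping $\max_w$ and $\min_{\eta,\xi}$ does not change the value, i.e.\ that this quantity equals $\OPT$.

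One direction is weak duality: the max-min value $\OPT$ is always at most the min-max value just computed, so no argument is needed there. For the reverse inequality I would invoke Sion's minimax theorem: the payoff $f(w,\eta,\xi)=w^{\mathrm T}A\eta-w^{\mathrm T}B\xi-\tfrac12\|w\|^2$ is continuous, concave in $w$ over the convex set $\R^d$, and affine (hence convex and continuous) in $(\eta,\xi)$ over the convex and \emph{compact} set $\Delta_{n_1}\times\Delta_{n_2}$; the compactness of the simplices is what makes the theorem applicable even though $w$ is unconstrained. As an alternative, one can exhibit a saddle point explicitly: let $(\eta^\star,\xi^\star)$ attain the minimum in \eqref{eq:chull} (it exists by compactness and continuity), set $w^\star=A\eta^\star-B\xi^\star$, and use the first-order optimality conditions of \eqref{eq:chull} over a product of simplices, namely $\langle A^{\mathrm T}w^\star,\,\eta-\eta^\star\rangle\ge 0$ and $\langle -B^{\mathrm T}w^\star,\,\xi-\xi^\star\rangle\ge 0$ for all feasible $\eta,\xi$. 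Adding these yields $w^{\star\mathrm T}(A\eta-B\xi)\ge w^{\star\mathrm T}(A\eta^\star-B\xi^\star)=\|w^\star\|^2$ for every feasible $(\eta,\xi)$, so the inner minimum at $w=w^\star$ equals $\|w^\star\|^2-\tfrac12\|w^\star\|^2=\tfrac12\|w^\star\|^2$; hence $\OPT\ge\tfrac12\|w^\star\|^2$, which matches the C-Hull optimum.

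Combining the two bounds gives $\OPT=\tfrac12\|A\eta^\star-B\xi^\star\|^2$, so the two problems share the same optimal value, and the argument also identifies the optimal $w$ in \eqref{eq:saddle} as $w^\star=A\eta^\star-B\xi^\star$, the segment between the two closest points of the convex hulls, from which the hard-margin separator is read off. The only non-routine step is the exchange of $\max_w$ and $\min_{\eta,\xi}$: because $w$ ranges over the unbounded space $\R^d$, this requires either the compactness of the simplices (through Sion's theorem) or the explicit KKT-based construction above, and that is where I would be careful in the write-up. The remaining ingredients --- the closed form of the inner maximization and the variational inequalities on the simplex --- are direct calculations.
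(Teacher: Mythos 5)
Your proposal is correct, but it follows a genuinely different route from the paper. The paper evaluates the max--min value \emph{directly and geometrically}: it writes $w=c\cdot u$ for a unit direction $u$, observes that the inner minimum selects the point $z_u$ of the convex set $\calS=\{A\eta-B\xi\}$ with smallest projection onto $u$, that the outer maximization over $c$ then produces the orthogonal projection $w_u$ of $z_u$ onto the line through the origin in direction $u$, and that by the Pythagorean theorem the value is $\tfrac12\|w_u\|^2$, whose maximum over directions is $\tfrac12\|z^*\|^2$ for $z^*$ the closest point of $\calS$ to the origin --- exactly the C-Hull optimum. This argument explicitly uses separability ($0\notin\calS$) to restrict attention to the cone of useful directions. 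You instead run the standard minimax-duality argument: compute the min--max in closed form (which immediately gives the C-Hull value), invoke weak duality for one inequality, and close the gap either by Sion's theorem (using compactness of $\Delta_{n_1}\times\Delta_{n_2}$) or by exhibiting the saddle point $w^\star=A\eta^\star-B\xi^\star$ via the variational inequality $\langle A^{\rm T}w^\star,\eta-\eta^\star\rangle\ge 0$, $\langle -B^{\rm T}w^\star,\xi-\xi^\star\rangle\ge 0$. Both of your closing arguments are sound (the variational-inequality computation $w^{\star\rm T}(A\eta-B\xi)\ge\|w^\star\|^2$ checks out), and your KKT-based variant is just as elementary as the paper's geometry while being somewhat more general: it does not need $0\notin\calS$, whereas the paper's direction-based parametrization does. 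What the paper's approach buys is an explicit geometric picture (the optimal $w$ as a projection, which the authors reuse to explain the offset $b$); what yours buys is a shorter, more standard derivation that transfers verbatim to the reduced-hull case of Lemma \ref{lm:eq}, where the paper simply says ``the proof is almost the same.'' One small point of care: your two separate variational inequalities follow from the joint first-order condition on the product domain by freezing one block at its optimum --- worth one line in the write-up, but not a gap.
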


Let $ \phi(w, \eta, \xi) =  w^{\rm T} A \eta -   w^{\rm T} B \xi  -  \|
w \|^2 /2 $. Note that $\phi(w, \eta, \xi) $ is  only linear  w.r.t. $\eta $ and
$\xi$. However, in order to obtain an algorithm which converges faster, we hope that the objective function is strongly convex with respect to $\eta$
and $\xi$. For this purpose,
we can add a small  regularization term which
ensures that the objective function is strongly convex. This is a commonly used approach in optimization (see \cite{allen2016optimization} for an example). Here,
we use the entropy function $H( u ) := \sum_{i} u_i \log u_i$ as the regularization term. The new saddle
point optimization problem is as follows.
\begin{equation}
\label{eq:saddle2}
\begin{array}{rl}
 \max \limits_{w}  \min \limits_{ \eta  \in \Delta_{n_1} ,  \xi \in \Delta_{n_2} } &   w^{\rm T} A \eta -   w^{\rm T} B \xi   \\
   & \qquad + \gamma H(\eta) + \gamma H(\xi)   -  \frac{1}{2}\| w \|^2,
   \end{array}
\end{equation}
where $\gamma = \e \beta / 2\log n$. The following lemma describes the efficiency of the above saddle point
optimization~\eqref{eq:saddle2}. We defer the proof to Appendix~\ref{apd:series}.

\begin{lemma}
\label{lm:approx}
Let $(w^*, \eta^*, \xi^*) $ and $(w^{\circ},   \eta^{\circ}, \xi^{\circ})$ be the optimal solution of saddle point
  optimizations~\eqref{eq:saddle} and ~\eqref{eq:saddle2} respectively. Define $\OPT$ as in \eqref{eq:saddle}.  Define
  \begin{equation*}
g(w) :=  \min  \limits_{ \eta  \in \Delta_{n_1} ,  \xi \in \Delta_{n_2} }   w^{\rm T} A  \eta -   w^{\rm T} B \xi
  -  \frac{1}{2}\| w \|^2.
  \end{equation*}
 Then  $g(w^*) - g(w^{\circ}) \leq \e \OPT$ (note that $g(w^*)=\OPT$).
\end{lemma}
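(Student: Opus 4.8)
The plan is to introduce the \emph{regularized} inner function
\[
g_\gamma(w) \ :=\ \min_{\eta \in \Delta_{n_1},\ \xi \in \Delta_{n_2}} \Big( w^{\mathrm T} A \eta - w^{\mathrm T} B \xi + \gamma H(\eta) + \gamma H(\xi) - \tfrac12\|w\|^2 \Big),
\]
so that $w^{\circ}$ (the $w$-component of the optimum of \eqref{eq:saddle2}) is a maximizer of $g_\gamma$, while $w^{*}$ maximizes $g$ with $g(w^{*}) = \OPT$. The whole proof reduces to showing that $g_\gamma$ approximates $g$ uniformly well in $w$, and then running an optimality sandwich.

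First I would record the only property of the regularizer that is needed: for any $u \in \Delta_m$, $-\log m \le H(u) = \sum_i u_i \log u_i \le 0$, with the maximum attained at the vertices and the minimum at the uniform vector. From this I get, for \emph{every} $w$,
\[
g(w) - 2\gamma\log n \ \le\ g_\gamma(w) \ \le\ g(w).
\]
The upper inequality follows by evaluating the objective defining $g_\gamma(w)$ at the minimizer of the unregularized inner problem and using $H \le 0$; the lower inequality follows by taking the minimizer of $g_\gamma(w)$, bounding each of its two entropy terms below by $-\gamma\log n_i \ge -\gamma\log n$, and noting that what remains is at least $g(w)$.

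Then I would chain these: since $g_\gamma \le g$ at $w = w^{\circ}$, since $w^{\circ}$ maximizes $g_\gamma$, and since $g_\gamma(w^*) \ge g(w^*) - 2\gamma\log n$,
\[
g(w^{\circ}) \ \ge\ g_\gamma(w^{\circ}) \ \ge\ g_\gamma(w^{*}) \ \ge\ g(w^{*}) - 2\gamma\log n ,
\]
so $g(w^{*}) - g(w^{\circ}) \le 2\gamma\log n$. Substituting $\gamma = \e\beta/(2\log n)$ collapses the right-hand side to $\e\beta$; finally, using the geometric reading of $\OPT$ (half the squared polytope distance between the two classes) together with the normalization of the instance underlying the definition of $\beta$, one has $\OPT \ge \beta$, and hence $g(w^{*}) - g(w^{\circ}) \le \e\beta \le \e\OPT$.

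I expect the only subtleties to be (i) getting the direction of the entropy perturbation right --- $g_\gamma \le g$ everywhere, but with a deficit of at most $2\gamma\log n$ --- which is exactly what dictates the particular one-sided bounds above, and (ii) the last step, where one must check that the prescribed $\gamma$ really does force $2\gamma\log n \le \e\OPT$ under the paper's normalization; everything else is a three-line chain of inequalities.
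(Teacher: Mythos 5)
Your proposal is correct and is essentially the paper's own argument, just repackaged: the paper chains $g(w^{\circ})=\phi_\gamma(w^{\circ},\tilde\eta,\tilde\xi)-\gamma H(\tilde\eta)-\gamma H(\tilde\xi)\ge\phi_\gamma(w^{\circ},\eta^{\circ},\xi^{\circ})-\cdots\ge g(w^{*})-2\gamma\log n$ through the saddle-point inequalities, which is exactly your sandwich $g-2\gamma\log n\le g_\gamma\le g$ combined with the optimality of $w^{\circ}$ for $g_\gamma$. Your two flagged subtleties are apt: the paper's displayed definition $H(u)=\sum_i u_i\log u_i$ is inconsistent with its later use of $0\le H\le\log n$ (your sign handling is the correct one for the literal definition), and the final step $2\gamma\log n=\e\beta\le\e\OPT$ relies on $\OPT\ge\beta$ under the normalization, which the paper asserts without comment.
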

We call the saddle point optimization~\eqref{eq:saddle2}
the Hard-Margin Saddle problem, abbreviated as
HM-Saddle. Next, we discuss $\nu$-SVM (see \cite{crisp2000geometry,scholkopf2000new}) and again
provide an equivalent saddle point optimization formulation.

\begin{definition}[$\nu$-SVM]
Given $n$ points $x_i \in \R^{d}$ for $ 1 \leq i \leq n$, each $x_i$ has a label $y_i \in
 \{+1,-1\}$. $\nu$-SVM is the quadratic programming as follows.
\begin{equation}
  \label{eq:nusvm}
  \begin{array}{lcl}
	\min\limits_{w, b,\rho,\delta}  & \frac{1}{2}\| w \|^2 - \rho + \frac{\nu}{2}\sum_{i}\delta_i  & \\
	\text{s.t.}&  y_i(w^{\mathrm{T}}x_i - b)   \geq \rho - \delta_i , \delta_i \geq 0 ,& \forall i
  \end{array}
\end{equation}
\end{definition}

\cite{crisp2000geometry} presented a geometry interpretation for $\nu$-SVM. They proved that $\nu$-SVM is equivalent to the problem of finding the closest distance between two
reduced convex hulls as follows.
\begin{equation}
  \label{eq:reducedhull}
  \begin{array}{lcl}
	\min\limits_{\eta,\xi}  &\frac{1}{2} \| A \eta - B \xi   \|^2     \\
	\text{s.t.}&  \| \eta \|_1 = 1, \| \xi \|_1 = 1. \\
	  & 0 \leq  \eta_i \leq \nu,   0 \leq \xi_j \leq \nu,   \forall i, j  \\
  \end{array}
\end{equation}
We call the above problem the Reduced Convex Hull problem, abbreviated as RC-Hull. The difference
between C-Hull~\eqref{eq:chull} and RC-Hull~\eqref{eq:reducedhull} is that in the latter
one, each entry of $\eta$ and $\xi$ has an upper bound $\nu$. Geometrically, it means to compress the convex hull of $\calP$ and $\calQ$ such that the two reduced convex hulls are
linearly separable. We define $\calD_{n_1} $ to be the domain of $\eta$ in RC-Hull, i.e., $ \{ \eta
\mid \| \eta \|_1 = 1,   0 \leq  \eta_i \leq \nu,  \forall i \}$ and $\calD_{n_2}$ to be the domain of
$\xi$, i.e., $\{ \xi \mid   \| \xi \|_1 = 1,  0 \leq \xi_j \leq \nu, \forall j \} $. Similar to
Lemma~\ref{lm:svmtosp}, we have the following lemma. The proof is deferred to Appendix~\ref{apd:series}.
\begin{lemma}
  \label{lm:eq}
  RC-Hull~\eqref{eq:reducedhull} is equivalent to the following saddle point optimization.
\begin{equation}
\label{eq:nusaddle}
\OPT =  \max \limits_{w}  \min \limits_{ \eta  \in \calD_{n_1} , \; \xi \in \calD_{n_2} }  w^{\rm T} A \eta -   w^{\rm T} B \xi  -  \frac{1}{2}\| w \|^2.
\end{equation}
\end{lemma}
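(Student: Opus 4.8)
The plan is to mirror the proof of Lemma \ref{lm:svmtosp}, replacing the simplices $\Delta_{n_1},\Delta_{n_2}$ by the truncated simplices $\calD_{n_1},\calD_{n_2}$, since the only structural feature of $\Delta_{n_1}$ used there is that it is a compact convex set over which we minimize a function that is linear in $\eta$ (and in $\xi$). First I would start from the objective of RC-Hull~\eqref{eq:reducedhull}, namely $\frac{1}{2}\|A\eta - B\xi\|^2$ with $\eta\in\calD_{n_1}$, $\xi\in\calD_{n_2}$, and rewrite the squared norm via its Fenchel/variational representation $\frac{1}{2}\|v\|^2 = \max_{w}\left( w^{\rm T} v - \frac{1}{2}\|w\|^2 \right)$, taking $v = A\eta - B\xi$. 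This immediately turns RC-Hull into $\min_{\eta\in\calD_{n_1},\,\xi\in\calD_{n_2}} \max_w \left( w^{\rm T} A\eta - w^{\rm T} B\xi - \frac{1}{2}\|w\|^2 \right)$, which is the same expression as in \eqref{eq:nusaddle} but with the order of $\max$ and $\min$ swapped.

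The remaining step is to justify exchanging the order of the maximization over $w$ and the minimization over $(\eta,\xi)$. I would invoke Sion's minimax theorem: the function $\phi(w,\eta,\xi) = w^{\rm T} A\eta - w^{\rm T} B\xi - \frac{1}{2}\|w\|^2$ is concave (indeed strictly concave) in $w$ for each fixed $(\eta,\xi)$ and convex (indeed linear) in $(\eta,\xi)$ for each fixed $w$, and the domain $\calD_{n_1}\times\calD_{n_2}$ is convex and compact. Compactness on the $w$ side is not needed because the inner problem is an unconstrained strictly concave quadratic whose maximizer $w = A\eta - B\xi$ is attained; alternatively one restricts $w$ to a large enough ball without changing the value. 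Hence $\max_w \min_{\eta,\xi}\phi = \min_{\eta,\xi}\max_w \phi$, and the right-hand side equals the RC-Hull objective by the computation above, so $\OPT$ as defined in \eqref{eq:nusaddle} equals the optimal value of RC-Hull. To get the full equivalence of solutions (not just optimal values), I would note that for the optimal $(\eta^*,\xi^*)$ the inner maximizer is $w^* = A\eta^* - B\xi^*$, which recovers the separating direction, so an optimal solution of one problem yields an optimal solution of the other.

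The main obstacle is essentially bookkeeping rather than a genuine difficulty: one must make sure the minimax swap is rigorously licensed despite $w$ ranging over an unbounded set, and check that the constraints defining $\calD_{n_1},\calD_{n_2}$ (the extra upper bounds $\eta_i\le\nu$, $\xi_j\le\nu$) play no role beyond preserving convexity and compactness of the domain — which they clearly do. Since all of this is identical in form to the already-established Lemma \ref{lm:svmtosp}, I expect the proof to be short, and the write-up would mostly consist of pointing out that the argument there did not use $\eta_i$ being unconstrained from above.
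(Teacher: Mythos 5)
Your proposal is correct, but it proves the underlying equivalence by a genuinely different route than the paper. The paper's proof of Lemma~\ref{lm:eq} is one line: it defers entirely to the proof of Lemma~\ref{lm:svmtosp}, noting only that the range of $A\eta - B\xi$ is now a different convex set. That earlier proof is a hands-on geometric argument: it restricts to unit directions $u$, identifies $z_u = \arg\min_{z\in\calS} u^{\rm T}z$ and the projection point $w_u$ of $z_u$ onto the line through the origin in direction $u$, and uses the Pythagorean theorem to show the saddle value equals $\frac{1}{2}\|z^*\|^2$ for $z^*$ the closest point of $\calS$ to the origin; it explicitly invokes linear separability ($0\notin\calS$) to set up the case analysis over directions. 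You instead write $\frac{1}{2}\|A\eta-B\xi\|^2 = \max_w\bigl(w^{\rm T}(A\eta-B\xi)-\frac{1}{2}\|w\|^2\bigr)$ and swap $\min$ and $\max$ via Sion's theorem, using concavity in $w$, linearity in $(\eta,\xi)$, and compactness of $\calD_{n_1}\times\calD_{n_2}$ (correctly noting that attainment of the inner maximum at $w=A\eta-B\xi$ sidesteps the unboundedness of the $w$-domain). This buys you two things: the argument is shorter and avoids the projection/Pythagoras bookkeeping, and it does not use separability of the reduced hulls at all, whereas the paper's geometric argument leans on $0\notin\calS$. Your closing observation — that the only property of the domain that matters is that it is a compact convex subset of the simplex — is exactly the paper's own justification for why Lemma~\ref{lm:svmtosp} transfers to Lemma~\ref{lm:eq}, so your proof is, if anything, a self-contained replacement for both.
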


Again, we add two entropy terms to make the objective function strongly convex with
respective to $\eta$ and $\xi$.
\begin{equation}
\label{eq:nusaddle2}
\begin{array}{rl}
  \max \limits_{w}  \min \limits_{ \eta  \in \calD_{n_1} ,    \xi \in \calD_{n_2} }  & w^{\rm T} A \eta -
w^{\rm T} B \xi  \\
+ & \gamma H(\eta) + \gamma H(\xi) - \frac{1}{2}\| w \|^2.
\end{array}
\end{equation}
where $\gamma = \e \beta / 2\log n$. We call this problem a $\nu$-Saddle problem. Similar to
Lemma~\ref{lm:approx}, we can prove that $\nu$-Saddle~\eqref{eq:nusaddle2} is a $(1-\e)$-approximation
of the saddle point optimization~\eqref{eq:nusaddle}. See Lemma~\ref{lm:nuapprox} in
Appendix~\ref{apd:series} for the details.

Overall, we formulate hard-margin SVM
and $\nu$-SVM as saddle point problems and  prove that  through solving HM-Saddle and $\nu$-Saddle, we can solve hard-margin SVM
and $\nu$-SVM.\footnote{
  Some
  readers may wonder why the formulations of HM-Saddle and $\nu$-Saddle only depends on $(w, \eta, \xi)$ but not the
  offset $b$. In fact, according to the fact that the
  hyperplane bisects the
  closest points in the (reduced) convex hulls,
  it is not difficult to show that $b^* = w^{* \rm T}
  (A\eta^* + B \xi^*)/2$. }

\section{Saddle Point Optimization Algorithms for SVM}
\label{sec:svmsp}

In this section, we propose efficient algorithms to solve the two saddle point optimizations: HM-Saddle~\eqref{eq:saddle2}
and $\nu$-Saddle~\eqref{eq:nusaddle2}. The framework is inspired by the prior
work by~\cite{allen2016optimization}.  However, their algorithm does not imply an effective SVM algorithm directly
as discussed in Section \ref{sec:related}. We modify the \emph{update rules} and introduce new \emph{projection methods} to adjust the framework to the HM-Saddle and $\nu$-Saddle problems. We highlight that both the new update rules and projection methods are non-trivial.


First, we introduce a preprocess step to make the data vectors more homogeneous in each coordinate. Then, we explain the update rules and projection methods of our algorithm: Saddle-SVC.

For convenience, we  assume that in the hard margin case $\|x_i\|^2\leq 1$ for $ 1 \leq i \leq n$.~\footnote{ It can be
  achieved by scaling all data by factor $1/ \max \|x_i \|^2 $ in $O(nd)$  time.} Let
$W$ be the $d \times d$ Walsh-Hadamard matrix and $D$ be a $d \times d$ diagonal matrix whose
entries are i.i.d. chosen from $\pm 1$ with equal probability.  Then, we transform the data by
left-producting the matrix $WD$.  Then with high probability, for any point $x_i$ satisfied that~\citep{ailon2010faster}
\begin{equation*}
\forall j \in [d],  \; |(WDx_i)_j| \leq O(\sqrt{\log n /d}).
\end{equation*}
Let $X^{+} = WDA $ and $X^{-} = WDB$.  It means that  after transformation, with high probability, the value of
each entry in $X^{+}$ or $X^{-}$ is at most $O(\sqrt{\log n /d})$.  This
transformation can be completed in $O(n d \log d)$ time by FFT.
Note that $WD$ is an invertible matrix which represents a rotation and mirroring operation.
Hence, it does not affect the optima of the problem.
In fact, the ``Hadamard transform trick" has been used in the numerical analysis
literature explicitly or implicitly
(see e.g., \citep{frieze2004fast,juditsky2013randomized,allen2016optimization}).
Roughly speaking, the main purpose of the transform is
to make all coordinates of $X$ more uniform, such that
the uniform sampling (line 1 in Algorithm 2) is more efficient (otherwise, the large coordinates would have a disproportionate effect on uniform sampling).

After the data transformation, we define some necessary parameters. See Line 4 of Algorithm~\ref{alg:init} for details.
\footnote{
Careful readers may notice that $\gamma  = \e\beta/(2 \log n)$.
But $\beta$ is an unknown parameter, which is the ratio of the
minimum distance to the maximum one among the points.
The same issue also appears in the previous work~\citep{allen2016optimization}.
The role of $\beta$ is similar to the step size in the stochastic gradient descent algorithm.
In practice, we could try several $\beta = 10^{-k}$ for $k \in \mathbb{Z}$ and choose the best one.
}
We use ``$\alpha [t]$'' to represent the value of variable ``$\alpha$'' at iteration
$t$. For example, $w[0]$, $\eta[0], \xi[0]$ are the initial value of $w, \eta, \xi$ and are defined in Line 5 of Algorithm~\ref{alg:init}.



\begin{algorithm}[t]
  \caption{Pre-processing}
  \label{alg:init}
\begin{algorithmic}[1]

  \REQUIRE { $\calP$: $n_1$ points $x_i^+$ with label $+1$ and $\calQ$: $n_2$ points $x_i^-$ with label $-1$}

  \STATE $ W \leftarrow $ $d$-dimensional Walsh-Hadamard Matrix
  \STATE $D \leftarrow$ $d\times d$ diagonal matrix whose  entries are i.i.d. chosen from $\pm 1$
  \STATE $ X^{+} \leftarrow WD \cdot [x^{+}_1 , x^{+}_2, \ldots,  x^{+}_{n_1} ], $  $ X^{-} \leftarrow WD \cdot  [x^{-}_1 , x^{-}_2, \ldots,  x^{-}_{n_2} ] $

  \STATE $ \gamma \leftarrow \frac{\e\beta}{2 \log n}, $ $q \leftarrow
  O(\sqrt{\log n}),$  $ \tau  \leftarrow
  \frac{1}{2q}\sqrt{\frac{d}{\gamma}}, \sigma \leftarrow
  \frac{1}{2q}\sqrt{d\gamma}, \theta \leftarrow 1- \frac{1}{d + q\sqrt{d}/\sqrt{\gamma }} $

  \STATE $w[0] = \mathbf{0}^{\rm T},  \eta[-1]  = \eta[0] = \mathbf{1}^{\rm T}/ n_1, \xi[-1] = \xi[0]  = \mathbf{1}^{\rm T}/ n_2$

\end{algorithmic}
\end{algorithm}

\begin{algorithm}[t]
  \caption{Update Rules of Saddle-SVC}
  \label{alg:update}
  \begin{algorithmic}[1]
    \STATE  Pick an index $i^{*} $ in $[d]$ \text{  uniformly at random } \;
    \STATE $\delta^{+}_{i^*} \leftarrow  \langle X^+_{i^*},  \eta[t] + \theta(\eta[t]  -\eta[t-1]) \rangle$, \;
    \STATE $ \delta^{-}_{i^*} \leftarrow  \langle X^-_{i^*},  \xi[t] + \theta(\xi[t]  -\xi[t-1]) \rangle $ \;
    \STATE
    $\forall i \in [d], $ $  w_i[t+1] $     $ \leftarrow $
  $ \left\{
      \begin{array}{ll}
        (w_i[t] +  \sigma (\delta^{+}_i - \delta^{-}_i) )/ (\sigma  + 1 ), & \text{if } i = i^* \\
        w_i[t] , & \text{if } i \neq i^*
      \end{array} \right.
    $

    \STATE  $ \eta[t+1] \leftarrow $
     $\arg\min\limits_{\eta \in \calS_1} \{ \frac{1}{d} ( w[t] +d(w[t+1] - w[t]))^{\rm T} X^{+}\eta  $
     $ + \frac{\gamma}{d} H(\eta) + \frac{1}{\tau} V_{\eta[t]}(\eta) \}$ \; \\

    \STATE   $ \xi[t+1] \leftarrow $
    $\arg\min\limits_{\xi \in \calS_2} \{ - \frac{1}{d} ( w[t] +d(w[t+1] - w[t]))^{\rm T} X^{-}\xi $
    $  + \frac{\gamma}{d} H(\xi) + \frac{1}{\tau} V_{\xi[t]}(\xi) \}$
  \end{algorithmic}
\end{algorithm}

\topic{Update Rules}  In order to unify HM-Saddle and $\nu$-Saddle in the same
framework,  we use $ (\calS_1, \calS_2) $ to represent the domains $(\Delta_{n_1}, \Delta_{n_2})$ in HM-Saddle
(see formula \eqref{eq:saddle}) or $(\calD_{n_1}, \calD_{n_2})$ in $\nu$-Saddle (see formula \eqref{eq:nusaddle}).

Generally speaking, the update rules alternatively maximize the objective with respect to $w$ and minimize with
respect to $\eta $ and $\xi$. See the details in  Algorithm~\ref{alg:update}.

Firstly, we update $w$ according to Line 4 in Algorithm~\ref{alg:update}.  It is equivalent to a variant of the proximal coordinate gradient method with $l_2$-norm regularization  as follows.
\begin{align}
\label{eq:upw}
\begin{split}
  w_{i^*}[t+1] =& \arg \max_{w_{i^*}}  -\big\{  -(\delta_{i^*}^{+}    -  \delta_{i^*}^{-})w_{i^*} \\
& \qquad +  w^2_{i^*}/2 +   (w_{i^*} - w_{i^*}[t])^2 / 2 \sigma \big\}
\end{split}
\end{align}
We briefly explain the intuition of \eqref{eq:upw}.  Note that the term $ (\delta_{i^*}^{+} -
\delta_{i^*}^{-})$ in \eqref{eq:upw} can be considered as the term $\langle X^{+}_{i^*}, \eta[t]  \rangle  - \langle
X^{-}_{i^*}, \xi[t] \rangle$ adding an extra momentum term $\theta(\eta[t]  -\eta[t-1])$ and $\theta(\xi[t]  - \xi[t-1])$ for dual variable $\eta[t]$ and $\xi[t]$ respectively (see Line 2 and 3 in Algorithm~\ref{alg:update}). Further, $( \langle  X^{+}_{i^*}, \eta[t]  \rangle  - \langle
X^{-}_{i^*}, \xi[t] \rangle) w_{i^*} - w^2_{i^*}/2$ is the term in the objective function \eqref{eq:saddle2} and \eqref{eq:nusaddle2} which are related to $w$.
The $(w_{i^*} - w_{i^*}[t])^2 / 2$) is the $l_2$-norm regularization term.

Moreover, rather than update the whole $w$ vector, randomly selecting one dimension $i^{*} \in [d]$ and updating the
corresponding $w_{i^*}$ in each iteration can  reduce the runtime per round.

The update rules for $\eta$ and $\xi$ are listed in Line 5 and 6 in Algorithm~\ref{alg:update}, which are the proximal gradient method with a Bergman
divergence regularization  $V_{x}(y) =  H(y) - \langle \nabla H(x), y-x \rangle - H(x)$. Similar to $ (\delta_{i^*}^{+} -
\delta_{i^*}^{-})$ in \eqref{eq:upw}, we also add a momentum term $d(w[t+1] - w[t])$ for primal variable $w$
when updating $\eta$ and $\xi$.

\topic{Projection Methods} However, the update rules for $\eta$ and $\xi$ are implicit update rules.  We need to show that we can solve the corresponding optimization problems in line 5 and 6 of Algorithm~\ref{alg:update} efficiently. In fact, for both HM-Saddle and $\nu$-Saddle, we can obtain explicit expressions of these two optimization problems using the
method of Lagrange multipliers.

 First, we can solve the optimization problem for HM-Saddle (in Line 5 and 6) directly, and the explicit expressions for $\eta$ and $\xi$ are as follows.
\begin{align}
  \label{eq:HM}
  \begin{split}
 \eta_i[t+1] &\leftarrow  \Phi(\eta_i[t], X^{+}) / Z^+ ,  \; \forall i \in [n_1],  \\
  \xi_j[t+1]  & \leftarrow  \Phi(\xi_j[t], X^{-}) / Z^{-}, \; \forall j \in [n_2]
  \end{split}
\end{align}
 where $Z^+ $ and $ Z^{-} $ are normalizers that ensures  $\sum_{i} \eta_{i}[t+1] = 1$ and $\sum_{j} \xi_j[t+1] =1$, and
 \begin{align}
   \label{eq:phi}
   \begin{split}
 \Phi(\lambda_i, X)  =  &  \exp \big\{ ( \gamma + d\tau^{-1})^{-1} (d\tau^{-1} \log \lambda_i  - \\
   & \qquad \qquad y_i   \cdot \langle  w[t] +d(w[t+1] - w[t], X_{\cdot i} )  \rangle ) \big\}
   \end{split}
 \end{align}
Note that the factors $Z^{+}$ and $Z^{-}$ are used to project the value $ \Phi(\eta_i[t],
X^{+})$ and $ \Phi(\xi_j[t], X^{-})$ to the domains $\Delta_{n_1}$ and $\Delta_{n_2}$. The above update rules of $\eta$ and $\xi$ can be also considered as the multiplicative weight
update method (see \cite{arora2012multiplicative}).

Next, we consider $\nu$-Saddle. Compared to HM-Saddle, $\nu$-Saddle has extra constraints that $\eta_i, \xi_j \leq \nu$.  Thus, we need another  projection process~\eqref{eq:nu} to
ensure that $\eta[t+1]$ and $\xi[t+1]$ locate in domain $\calD_{n_1}$ and $\calD_{n_2}$ respectively.  For convenience, we only present the projection for $\eta$ here.  The projection for $\xi$ is similar. Let $\eta_i$  be $\Phi(\eta_i[t], X^{+})/ Z^{+}$.
\begin{equation}
  \label{eq:nu}
  \begin{array}{l}
    \mathbf{while} \quad \varsigma := \sum_{\eta_i > \nu} (\eta_i - \nu) \neq 0:\\
    \qquad \Omega=  \sum_{\eta_i < \nu} \eta_i  \\
    \qquad \forall i, \quad \mathbf{if} \; \eta_i \geq \nu, \quad \mathbf{then} \; \eta_i = \nu    \\
    \qquad \forall i, \quad \mathbf{if} \; \eta_i < \nu, \quad \mathbf{then} \; \eta_i = \eta_i (1 +\varsigma/ \Omega)
  \end{array}
\end{equation}
Note that there are at most $1/\nu$ (a constant) entries $\eta_i$ of value $\nu$ during the whole projection process. In each iteration, there must be at least 1 more entry $\eta_i=\nu$ since we make all entries $\eta_j>\nu$ equal to $\nu$ after the iteration. Thus, the number of iterations
in~\eqref{eq:nu} is at most $1/\nu$. By \eqref{eq:nu}, we project $\eta$ and $\xi$ to the
domains $\calD_{n_1}$ and $\calD_{n_2}$ respectively.

We claim that the result of projection
\eqref{eq:nu} is exactly the optimal solution in Line 5. The proof
is deferred to Appendix~\ref{apd:svmsp}. Thus, we need $O(n/\nu)$ time to compute $\eta[t+1]$. Since
we assume that $\nu$ is a constant, it only costs linear time. In practice, if $\nu$ is extremely small, we have
another update rule to get $\eta[t+1]$ and $\xi[t+1]$  in $O( n\log n)$ time. See Appendix \ref{apd:svmsp} for details.
Finally, we give our main theorem for our algorithm as follows. See the proof in Appendix~\ref{apd:cvg}.

\begin{theorem}
  \label{thm:convergence}
  Algorithm~\ref{alg:update} computes  $(1-\e)$-approximate solutions for HM-Saddle and
  $\nu$-Saddle by $\tilde{O}(d+\sqrt{d/\e \beta})$ iterations. Moreover, it takes $O(n)$ time for each iteration.
\end{theorem}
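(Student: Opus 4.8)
The plan is to read Algorithm~\ref{alg:update} as an accelerated stochastic primal--dual coordinate method for the regularized objective
\begin{equation*}
\phi_\gamma(w,\eta,\xi) := w^{\rm T}X^{+}\eta - w^{\rm T}X^{-}\xi + \gamma H(\eta) + \gamma H(\xi) - \tfrac12\|w\|^2,
\end{equation*}
and to show that one iteration contracts a suitable Lyapunov potential by the factor $\theta$ in expectation, so that $O\bigl(\tfrac{1}{1-\theta}\log\tfrac1\e\bigr)$ iterations suffice. Here $\phi_\gamma$ is $1$-strongly concave in $w$ (from $-\tfrac12\|w\|^2$) and, by Pinsker's inequality together with the $1$-strong convexity of $H$ on the simplex, $\gamma$-strongly convex in each of $\eta,\xi$ w.r.t.\ $\|\cdot\|_1$; the same holds on the capped simplices $\calD_{n_1},\calD_{n_2}$ for $\nu$-Saddle. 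Two further facts pin down the step sizes: after the Walsh--Hadamard preprocessing every entry of $X^{\pm}$ is at most $q/\sqrt d$ w.h.p.\ with $q=\Theta(\sqrt{\log n})$, so $|\langle X^{\pm}_i,u\rangle|\le q/\sqrt d$ for any $u$ in a (capped) simplex and $\sum_{i\in[d]}\langle X^{\pm}_i,u\rangle^2\le q^2$; and it is these $\ell_\infty/\ell_1$ bounds, not the spectral norm of $X^{\pm}$, that enter $\tau=\tfrac1{2q}\sqrt{d/\gamma}$, $\sigma=\tfrac1{2q}\sqrt{d\gamma}$, $\theta=1-\tfrac1{d+q\sqrt d/\sqrt\gamma}$.

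The heart of the proof is a one-step contraction. Let $(w^{\circ},\eta^{\circ},\xi^{\circ})$ be the saddle point of $\phi_\gamma$ and $i^{*}$ the coordinate drawn in Line~1. I would build a potential of the form
\begin{equation*}
\Psi_t = \tfrac1{2\sigma}\|w[t]-w^{\circ}\|^2 + \Bigl(\tfrac1\tau+\tfrac\gamma d\Bigr)\bigl(V_{\eta[t]}(\eta^{\circ})+V_{\xi[t]}(\xi^{\circ})\bigr) + \mathcal{M}_t,
\end{equation*}
where $\mathcal{M}_t$ collects the momentum cross-terms linking iterates $t$ and $t-1$; these are exactly the terms $\theta(\eta[t]-\eta[t-1])$, $\theta(\xi[t]-\xi[t-1])$ and $d(w[t+1]-w[t])$ of Lines~2,3,5,6, whose role is to make the bilinear interaction telescope. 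Writing the three-point (prox-optimality) inequalities for the $\ell_2$-prox step on $w_{i^{*}}$ (Line~4) and for the two entropy-mirror-prox steps on $\eta$ and $\xi$ (Lines~5--6), combining them with the strong convexity/concavity above and with the coupling bound, and then taking expectation over the uniform $i^{*}$ --- which turns the single-coordinate $w$-update into $\tfrac1d$ of a full update, hence the factor $d$ multiplying the primal momentum --- I expect to reach $\Exp_{i^{*}}[\Psi_{t+1}]\le\theta\,\Psi_t$, where $\Psi_t$ also upper-bounds the $\phi_\gamma$-duality gap up to a constant. This is the Chambolle--Pock / \citep{allen2016optimization}-style accelerated scheme, here adapted to \emph{two} independent mirror blocks and to coordinate updates on $w$.

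Telescoping gives $\Exp[\Psi_T]\le\theta^T\Psi_0$ with $\Psi_0\le\mathrm{poly}(n,1/\gamma)$ (since $w[0]=\mathbf0$ and $\eta[0],\xi[0]$ are uniform, so $V_{\eta[0]}(\eta^{\circ})\le\log n_1$, etc.). Choosing $T=\Theta\bigl((d+q\sqrt d/\sqrt\gamma)\log\tfrac{n}{\gamma\e}\bigr)$ and substituting $\gamma=\e\beta/(2\log n)$, $q=\Theta(\sqrt{\log n})$ gives $T=\tilde O(d+\sqrt{d/(\e\beta)})$ and drives the duality gap below $\e\cdot\OPT$, i.e.\ a $(1-\e)$-approximate solution of HM-Saddle~\eqref{eq:saddle2} (the $\nu$-Saddle case~\eqref{eq:nusaddle2} is identical once the two structural facts are checked on $\calD_{n_1},\calD_{n_2}$); a Markov-inequality/restart step promotes the expectation bound to high probability. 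For the $O(n)$ per-iteration cost, I would maintain the $n_1+n_2$ scalars $\langle w[t],X^{\pm}_{\cdot i}\rangle$ incrementally --- only coordinate $w_{i^{*}}$ changes, so all of them refresh in $O(n)$ time --- after which Lines~2--4 cost $O(n)$, the closed form \eqref{eq:HM}--\eqref{eq:phi} for $\eta,\xi$ costs $O(n)$ in the HM case, and in the $\nu$ case the capping loop~\eqref{eq:nu} adds at most $1/\nu=O(1)$ passes of $O(n)$ work.

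The main obstacle is the one-step bound $\Exp_{i^{*}}[\Psi_{t+1}]\le\theta\Psi_t$: making the mixed $w$-versus-$(\eta,\xi)$ terms cancel forces $\theta$, $\tau$, $\sigma$, and the $d\times$ inflation of the primal momentum to be chosen mutually consistently, and the cancellation hinges on the entrywise bound $q/\sqrt d$ on $X^{\pm}$ rather than on any operator norm. One must also verify that the two deviations from \citep{allen2016optimization} --- splitting the dual into separate $\eta$ and $\xi$ blocks, and the extra $\nu$-capping projection in Line~5 --- do not break this cancellation; for the latter it suffices that \eqref{eq:nu} returns the exact minimizer of the Line~5 subproblem, so that the three-point inequality applies verbatim.
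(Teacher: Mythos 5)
Your proposal follows essentially the same route as the paper: the paper likewise establishes a $\theta$-contraction of the potential $(\tau^{-1}+2\gamma d^{-1})\big(V_{\eta[T]}(\eta^{\circ})+V_{\xi[T]}(\xi^{\circ})\big)+((4\sigma)^{-1}+1)\|w^{\circ}-w[T]\|^2$ by adapting Theorem 2.2 of \cite{allen2016optimization} to two dual blocks and to capped simplices (via generalized versions of their Lemmas A.1 and A.2), telescopes to $\theta^{T}$ times the initial potential, and then picks $T=\tilde{O}(d+\sqrt{d/\e\beta})$. The only cosmetic difference is the final conversion step, which you phrase as a duality-gap bound while the paper uses the explicit Lipschitz estimate $g(u)-g(v)\le 2(1+\|v\|)\|u-v\|$ applied to $\Exp\|w^{\circ}-w[T]\|$; everything else, including the role of the Hadamard entrywise bounds, the exactness of the $\nu$-capping projection, and the $O(n)$ per-iteration accounting, matches.
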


Combining with Lemmas~\ref{lm:svmtosp}, \ref{lm:approx} and~\ref{lm:eq},
we obtain
$(1-\e)$-approximate solutions for C-Hull and RC-Hull problems. Hence by strong duality,  we obtain
$(1-\e)$-approximations for hard-margin SVM and $\nu$-SVM in $\tilde{O}(n(d+\sqrt{d/\e \beta}))$ time.

\begin{theorem}
  \label{thm:hmnu}
A $(1-\e)$-approximation for either hard-margin SVM or $\nu$-SVM can be computed in
$\tilde{O}(n(d+\sqrt{d/\e \beta}))$ time.
\end{theorem}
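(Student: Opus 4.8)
The plan is to obtain Theorem~\ref{thm:hmnu} as a corollary of Theorem~\ref{thm:convergence}, glued together by the reductions of Section~\ref{sec:series} and classical SVM duality. I would first run the preprocessing of Algorithm~\ref{alg:init}. Rescaling the data so that $\|x_i\|^2\le 1$ costs $O(nd)$, and applying the randomized Hadamard transform $WD$ costs $O(nd\log d)$ via FFT. Since $WD$ (with the usual $1/\sqrt{d}$ normalization) is orthogonal, it is a bijection of $\R^d$ that preserves Euclidean inner products, so it leaves the C-Hull/RC-Hull objectives, the saddle objectives, and hence $\OPT$ unchanged; any solution found in the transformed coordinates is mapped back to the original ones with one extra $O(d\log d)$ multiplication (the dual variables $\eta,\xi$, being distributions over the input points, are unaffected).

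Next I would invoke Theorem~\ref{thm:convergence} with the unified domains $(\calS_1,\calS_2)=(\Delta_{n_1},\Delta_{n_2})$ for hard-margin SVM and $(\calS_1,\calS_2)=(\calD_{n_1},\calD_{n_2})$ for $\nu$-SVM: running Algorithm~\ref{alg:update} for $T=\tilde{O}(d+\sqrt{d/\e\beta})$ iterations returns a triple $(w,\eta,\xi)$ that is a $(1-\e)$-approximate solution of HM-Saddle~\eqref{eq:saddle2}, resp.\ $\nu$-Saddle~\eqref{eq:nusaddle2}. I would then check that each iteration indeed costs $O(n)$: the inner products defining $\delta^{+}_{i^*},\delta^{-}_{i^*}$ take $O(n)$, the $w$-update in Line~4 touches a single coordinate, the $\eta,\xi$-updates are the closed-form multiplicative-weight expressions~\eqref{eq:HM}--\eqref{eq:phi} in the hard-margin case, and in the $\nu$-case the additional clipping loop~\eqref{eq:nu} runs at most $1/\nu=O(1)$ times at $O(n)$ per pass (or, when $\nu$ is tiny, the $O(n\log n)$ alternative of Appendix~\ref{apd:svmsp}). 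Hence the total running time is $O(nd\log d)+T\cdot O(n)=\tilde{O}(n(d+\sqrt{d/\e\beta}))$.

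It remains to convert the saddle-point guarantee into an SVM guarantee. By Lemma~\ref{lm:approx} (resp.\ Lemma~\ref{lm:nuapprox}) the optimum of HM-Saddle (resp.\ $\nu$-Saddle) is within a $(1-\e)$ factor of the optimum $\OPT$ of the unregularized saddle~\eqref{eq:saddle} (resp.~\eqref{eq:nusaddle}) as measured by $g$; composing this with the approximation from the previous step and rescaling $\e$ by a constant, the returned $w$ satisfies $g(w)\ge(1-\e)\OPT$. By Lemma~\ref{lm:svmtosp} (resp.\ Lemma~\ref{lm:eq}), $\OPT$ equals the optimal value of C-Hull~\eqref{eq:chull} (resp.\ RC-Hull~\eqref{eq:reducedhull}), so $(\eta,\xi)$ is a $(1-\e)$-approximate closest pair between the two (reduced) convex hulls, and one checks that the direction $w/\|w\|$ achieves gap $\min_{i\in\calP}\langle w/\|w\|,x_i\rangle-\max_{j\in\calQ}\langle w/\|w\|,x_j\rangle$ within $\sqrt{1-\e}\ge 1-\e$ of the optimal margin (the square root coming from the quadratic relation $\OPT=\tfrac12(\text{opt margin})^2$). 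Finally, the geometric duality of~\cite{bennett2000duality} for hard-margin SVM and of~\cite{crisp2000geometry} for $\nu$-SVM identifies~\eqref{eq:lp} with C-Hull and~\eqref{eq:nusvm} with RC-Hull, so a $(1-\e)$-approximate polytope distance yields a $(1-\e)$-approximate maximum-margin hyperplane; its offset is $b=w^{\rm T}(A\eta+B\xi)/2$ (the footnote in Section~\ref{sec:series}), and pulling $w$ back through $WD$ gives the answer in the original coordinates in $O(nd)$ additional time.

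The step that most needs care is the last one: tracking how the multiplicative $(1-\e)$ errors propagate through the entropy regularization (Lemma~\ref{lm:approx}/\ref{lm:nuapprox}), through the nonlinear (squaring) relation between the saddle value $g$ and the geometric margin, and through the primal--dual correspondence, so that the final bound is genuinely a $(1-\e)$-approximation of the SVM margin rather than of a proxy quantity; as sketched above this only costs constant factors that are absorbed by rescaling $\e$. A minor additional point is that $\beta$ is not known in advance (see the footnote after Algorithm~\ref{alg:init}): trying $O(\log(1/\beta))$ geometrically spaced guesses and keeping the best feasible hyperplane handles this with only a $\tilde{O}(1)$ overhead.
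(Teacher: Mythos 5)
Your proposal is correct and follows essentially the same route as the paper, which derives Theorem~\ref{thm:hmnu} in one line by combining Theorem~\ref{thm:convergence} with Lemmas~\ref{lm:svmtosp}, \ref{lm:approx}, \ref{lm:eq} (and \ref{lm:nuapprox}) and then invoking strong duality. Your write-up merely makes explicit the details the paper leaves implicit (preprocessing cost, invariance under $WD$, per-iteration cost, and the propagation of the $(1-\e)$ factor through the squared-distance/margin relation), all of which are handled correctly.
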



\section{Distributed SVM}
\label{sec:dist}

\topic{Server and Clients Model} We extend Saddle-SVC to the distributed setting and call it
Saddle-DSVC. We consider the popular distributed setting: the \emph{server} and
\emph{clients} model. Denote  the server by $\server$. Let $\clientset$  be the set of clients and
$|\clientset| = k$. We use the notation $\client.\alpha$ to represent any variable $\alpha$
saved in client $\client$  and use $\server.\alpha$ to represent a variable $\alpha$ saved in the
server.

First, we initialize some parameters in each client as the
pre-processing step in Section~\ref{sec:svmsp}. Each client maintains the same random diagonal
matrix $D_{d\times d}$ and the total number of points in each type (i.e, $|\calP|=n_1$ and
$|\calQ|=n_2$).\footnote{It can be realized using $O(k)$ communication bits.}
Moreover, each client $\client$ applies a Hadamard
transformation to its own data and initialize the partial probability
vectors $\client.\eta$  and $\client.\xi$ for its own points.
\eat{
Formally speaking, assume there are $ m_1$ points $x^+_1, x^+_2, \ldots, x^+_{m_1}$
and $m_2$ points $x^-_1, x^-_2, \ldots, x^-_{m_2}$ maintained in $C$. We use $\mathbf{1}^{m}$ to
denote a vector with all components being $1$. The initialization is as follows.
\begin{align*}
 \client.X^{+} = WD \cdot [x^{+}_1 , x^{+}_2, \ldots,  x^{+}_{m_1} ], \; &\client.\eta[-1] = \client.\eta[0] = n^{-1}_1\mathbf{1}^{m_1} \\
\client.X^{-} = WD \cdot [x^{-}_1 , x^{-}_2, \ldots,  x^{-}_{m_2} ], \;  & \client.\xi[-1] = \client.\xi[0] =n_2^{-1}\mathbf{1}^{m_2}
\end{align*}
}

We first consider HM-Saddle. The interaction between clients and the server can be divided into three rounds in each iteration.
\begin{enumerate}
\item In the first round, the server randomly chooses a
    number $i^* \in [d]$ and broadcasts $i^*$ to all
    clients. Each client computes $\client.\delta_{i^*}^{+}$ and $\client.\delta_{i^*}^{-}$  and
sends them back to the server.
\item In the second round, the server sums up all $\client.\delta_{i^*}^{+}$ and $\client.\delta_{i^*}^{-}$
  and computes $ \server.\delta_{i^*}^{+} $ and $ \server.\delta_{i^*}^{-} $. We can see that
  $\server.\delta_{i^*}^{+}$ (resp. $\server.\delta_{i^*}^{-}$) is exactly $\delta_{i^*}^{+}$
  (resp. $\delta_{i^*}^{-}$) in  Algorithm~\ref{alg:update}. The server broadcasts $ \server.\delta_{i^*}^{+} $ and $
  \server.\delta_{i^*}^{-}$ to all clients. By $ \server.\delta_{i^*}^{+} $ and $ \server.\delta_{i^*}^{-}
  $, each client updates $w$ individually. Moreover, each client $\client\in \clientset$ updates its
  own $\client.\eta$ and $\client.\xi$ according to the new directional vector
  $w$. In order to normalize the probability vectors $\eta$ and $\xi$, each client sends the
  summation $\client.Z^+$ and $\client.Z^-$ to the server.
\item In the third round, the server computes $(\server.Z^{+}, \server.Z^{-}) \leftarrow \sum_{\client
    \in \clientset}(\client.Z^{+}, \client.Z^{-})$ and
  broadcasts to all clients the normalization   factors $\server.Z^+$  and  $\server.Z^-$. Finally, each client
    updates its partial probability vector $\client.\eta$ and $\client.\xi$ based on
    the normalization factors.
\end{enumerate}
As we discuss in Section~\ref{sec:svmsp}, for $\nu$-Saddle, we need another $O(1/\nu)$
rounds to project $\eta$ and $\xi$ to the domains $\calD_{n_1}$ and $\calD_{n_2}$.

\begin{enumerate}
 \setcounter{enumi}{3}
\item Each client computes $\client.\varsigma^{+}, \client.\varsigma^{-}
  $ and $ \client.\Omega^{+}  $,  $\client.\Omega^{-} $ according to~\eqref{eq:nu} and sends them to the
  server. The server
  sums up all  $\client.\varsigma^{+}, \client.\varsigma^{-}, \client.\Omega^{+}, \client.\Omega^{-}$
  respectively and gets $\server.\varsigma^{+}, \server.\varsigma^{-}, $ $\server.\Omega^{+},
  \server.\Omega^{-}$. If both $\server.\varsigma^{+} $ and $ \server.\varsigma^{-}$ are zeros, the server stops this
  iteration. Otherwise, the server broadcasts to all clients the  factors  $\server.\varsigma^{+},
  \server.\varsigma^{-}, \server.\Omega^{+},  \server.\Omega^{-}$. All clients update their
  $\client.\eta$ and $\client.\xi$ according to~\eqref{eq:nu} and repeat Step 4  again.

\end{enumerate}

We give the pseudocode in Algorithm~\ref{alg:dist} in Appendix~\ref{apd:dist}.
By Theorem~\ref{thm:convergence}, after $ T = \tilde{O}( d +
\sqrt{d/\e})$ iterations, all clients compute the same $(1-\e)$-approximate solution $w = w[T]$
for SVM. W.l.o.g, let the first client send $w$ to the server. By at most $O(n)$ more communication cost, the server can compute the offset $b$, the margin for hard-margin SVM and the objective value for the $\nu$-SVM. The correctness of
Algorithm Saddle-DSVC is oblivious since we obtain the same $w[t]$ as in
Saddle-SVC after each iteration.

\eat{
Then each client $\client \in \clientset$
sends the two values $\min_{X_i^+\in \client} \{w^{\rm T} X_i^+\}$ and $\max_{X_j^-\in \client}
\{w^{\rm T} X_j^-\}$ to the server. Finally, the server computes $\min_{i\in [n_1]} \{w^{\rm
  T} X_i^+\}$ and $\max_{j\in [n_2]} \{w^{\rm T} X_j^-\}$, and computes the margin
$\rho$.
}

\topic{Communication Complexity of Saddle-DSVC}
Note that in each iteration of Algorithm~\ref{alg:dist}, the server and clients interact three
times for hard-margin SVM and $O(1/\nu)$ times for $\nu$-SVM. Thus, the communication cost of each
iteration is $O(k)$. By Theorem~\ref{thm:convergence}, it takes $\tilde{O}(d+\sqrt{d/\e})$ iterations.  Thus, we have the following theorem.
\begin{theorem}
  \label{thm:commcost}
  The communication cost of Saddle-DSVC is $\tilde{O}(k(d +
  \sqrt{d/\e}))$.
\end{theorem}

Liu et el. \cite{liu2016distributed} prove that the lower bound of the communication cost for  distributed SVM is $\Omega(k \min \{d, 1/\e\})$.
\begin{theorem} [Theorem 6 in~\cite{liu2016distributed}]
  \label{thm:bound}
  Consider a set of $d$-dimension points distributed at $k$ clients.
  The communication cost to achieve a $(1-\e)$-approximation of the distributed SVM problem is at least $\Omega(k \min\{ d,  1/\e \} )$
  for any $\e>0$.
\end{theorem}

If $ d = \Theta( 1/\e)$, the communication lower bound is $\Omega ( k (d + \sqrt{d/\e}))$ which matches the
communication cost of  Saddle-DSVC.

\eat{
claim that  the communication cost of
\textsf{Saddle-DSVC} is $\tilde{O}(k(d +  \sqrt{d/\e}))$, and

We defer the details to in Appendix~\ref{apd:dist}.

Liu et al.~\cite{liu2016distributed} proved a theoretical lower bound of the communication cost for distributed SVM as follows.\footnote{ Note that the statement of
  Theorem~\ref{thm:bound} is not exactly the same as the Theorem 6 in Liu et
  al.~\cite{liu2016distributed}. We prove that they are equivalent in Appendix~\ref{apd:dist}.} Note that
if $ d = \Theta( 1/\e)$, the
communication lower bound is $\Omega ( k (d + \sqrt{d/\e}))$ which matches the
communication cost of our algorithm \textsf{DisSVMSPSolver}.

\begin{theorem} [Theorem 6 in~\cite{liu2016distributed}]
  \label{thm:bound}
  Consider a set of $d$-dimension points distributed at $k$ clients.
  The communication cost to achieve a $(1-\e)$-approximation of the distributed SVM problem is at least $\Omega(k \min\{ d,  1/\e \} )$
  for any $\e>0$.
\end{theorem}
}

\begin{figure}[t]
  \centering
  \subfigure[a9a, n=32561, d=123 ]{
    \includegraphics[width = 0.24\textwidth]{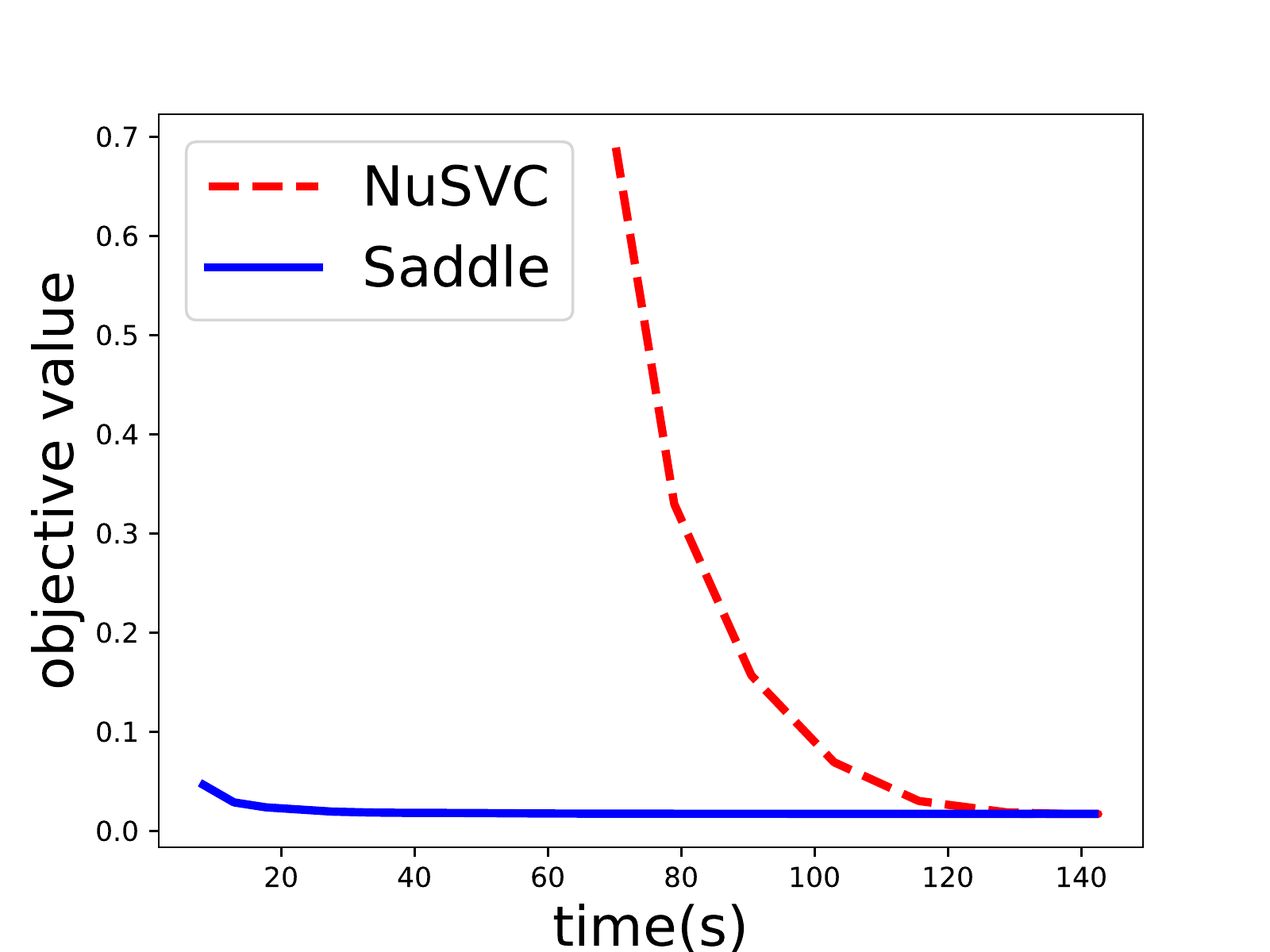}
    \includegraphics[width = 0.24\textwidth]{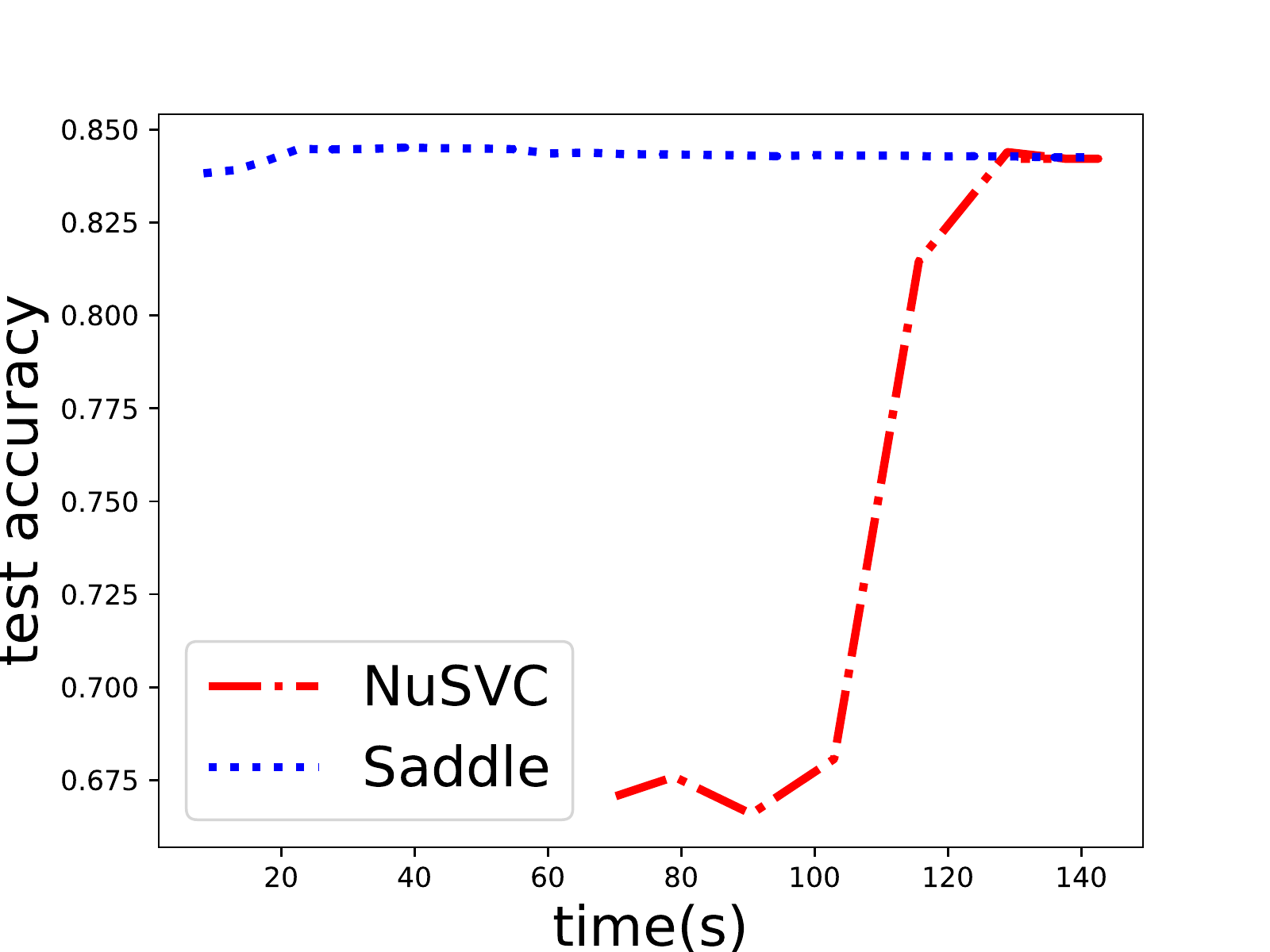}}
  \subfigure[ijcnn1: n=49990,d=22 ]{
    \includegraphics[width = 0.24\textwidth]{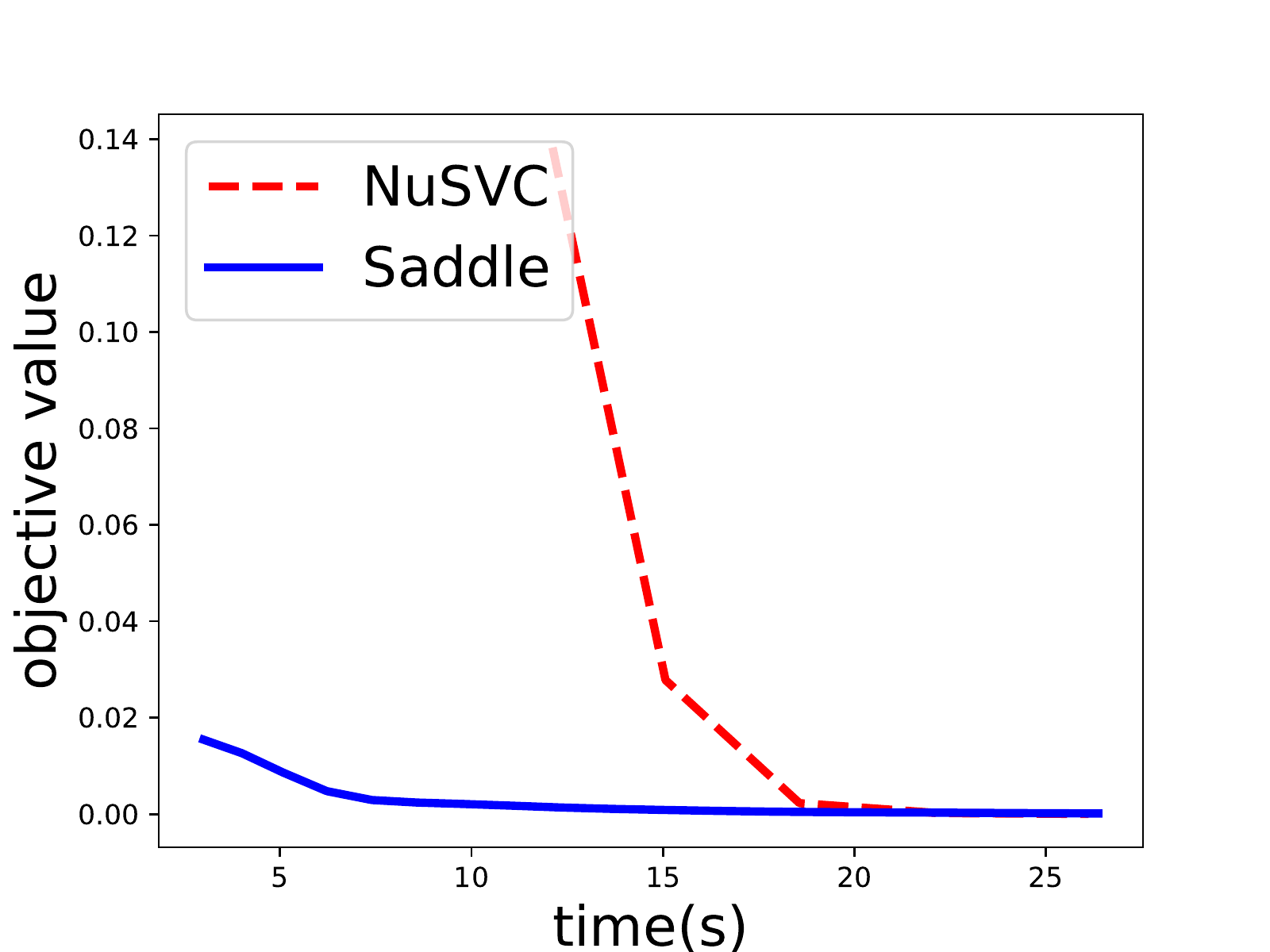}
    \includegraphics[width = 0.24\textwidth]{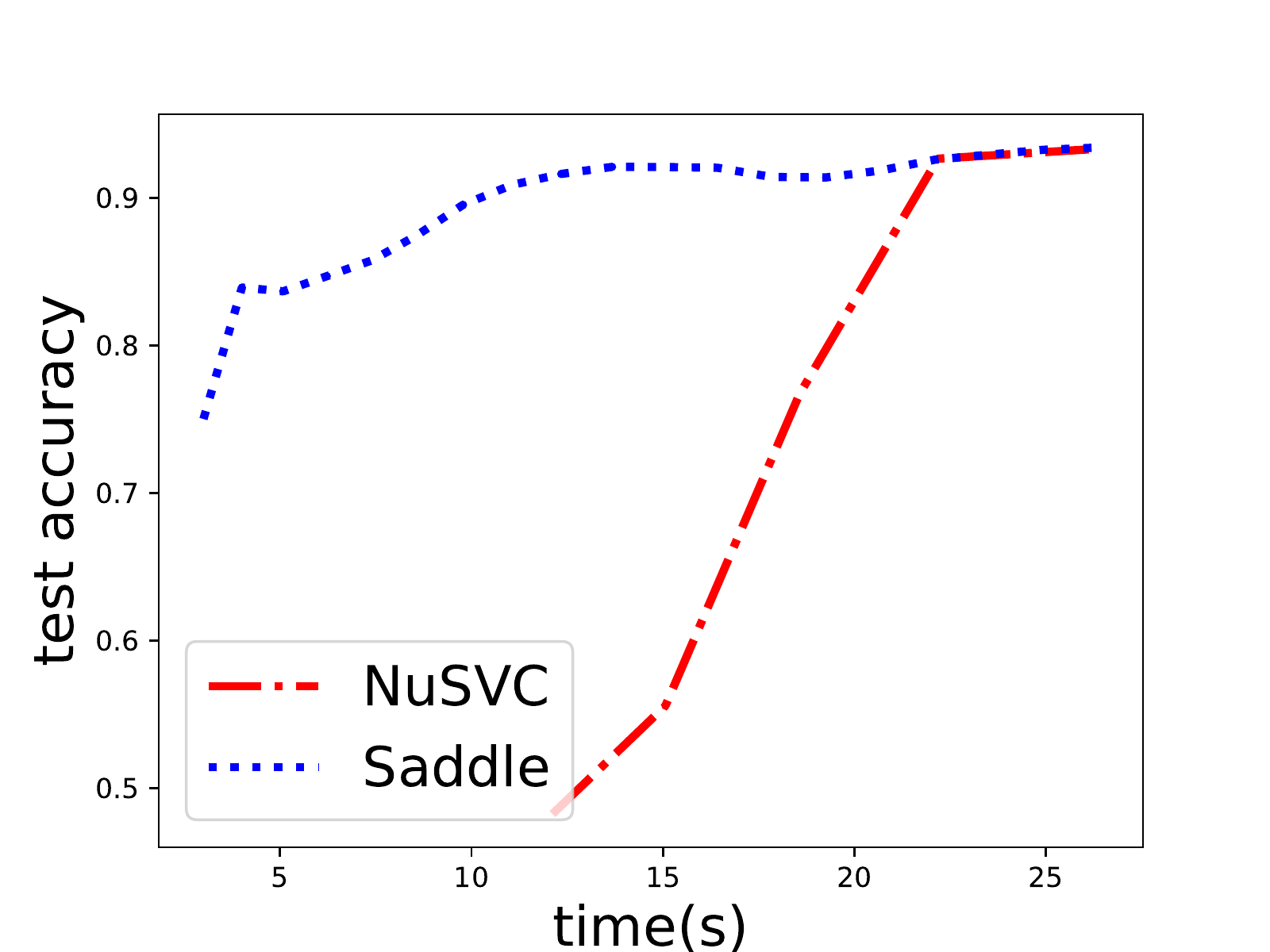}}
  \subfigure[phishing: n=11055, d=68 ]{
    \includegraphics[width = 0.24\textwidth]{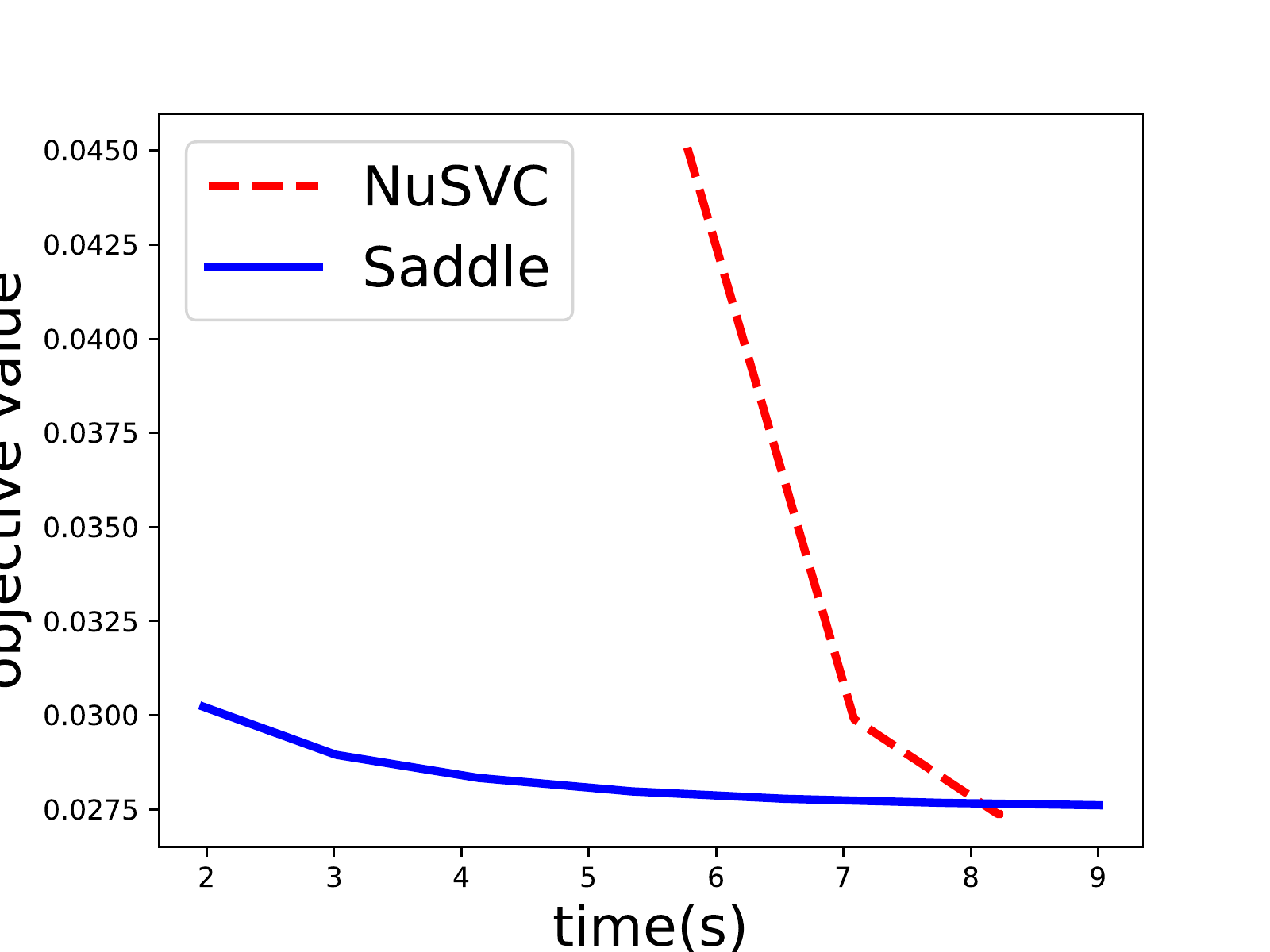}
    \includegraphics[width = 0.24\textwidth]{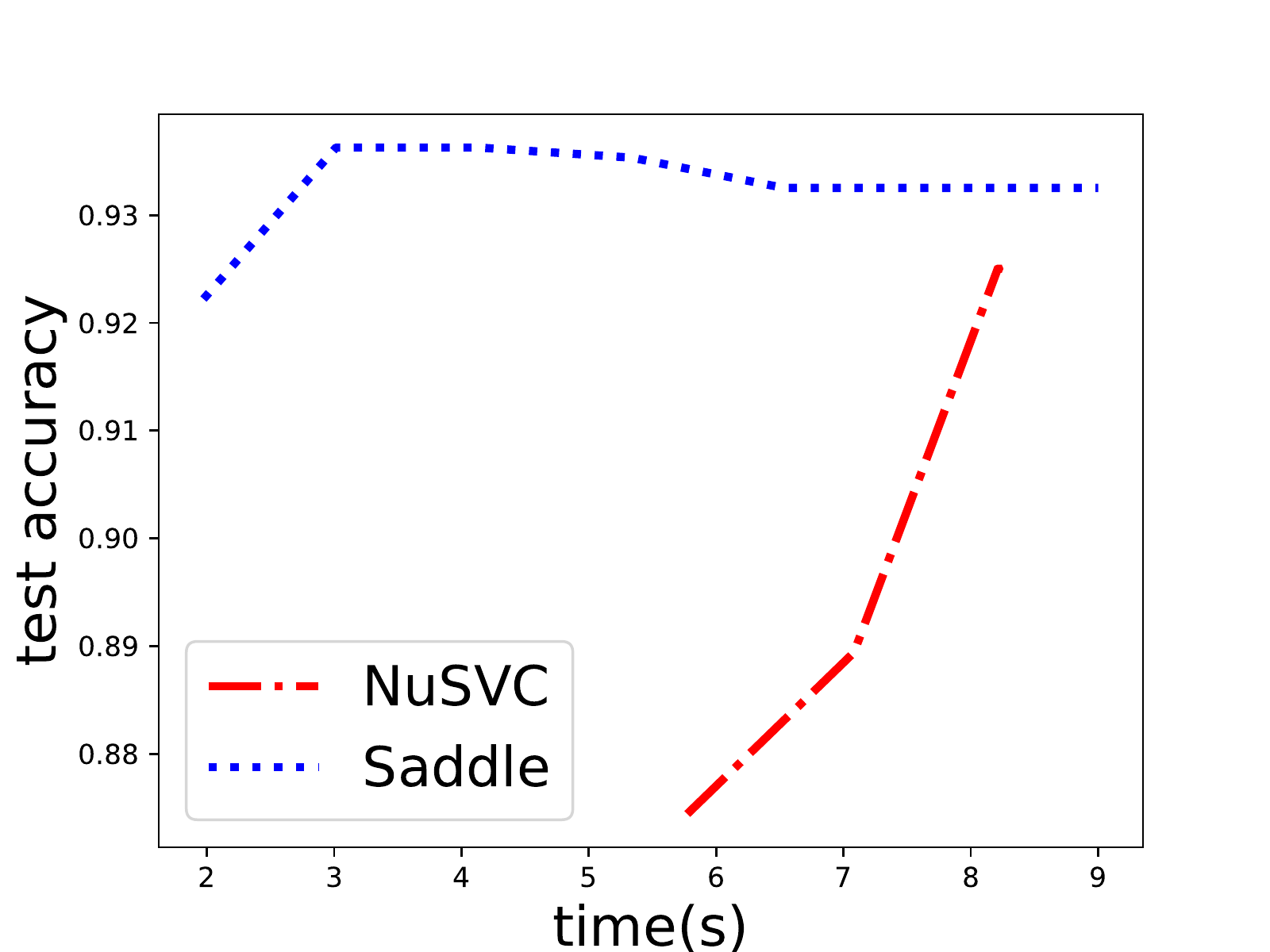}}
      \subfigure[skin\_nonskin: n=245057, d=3 ]{
    \includegraphics[width = 0.24\textwidth]{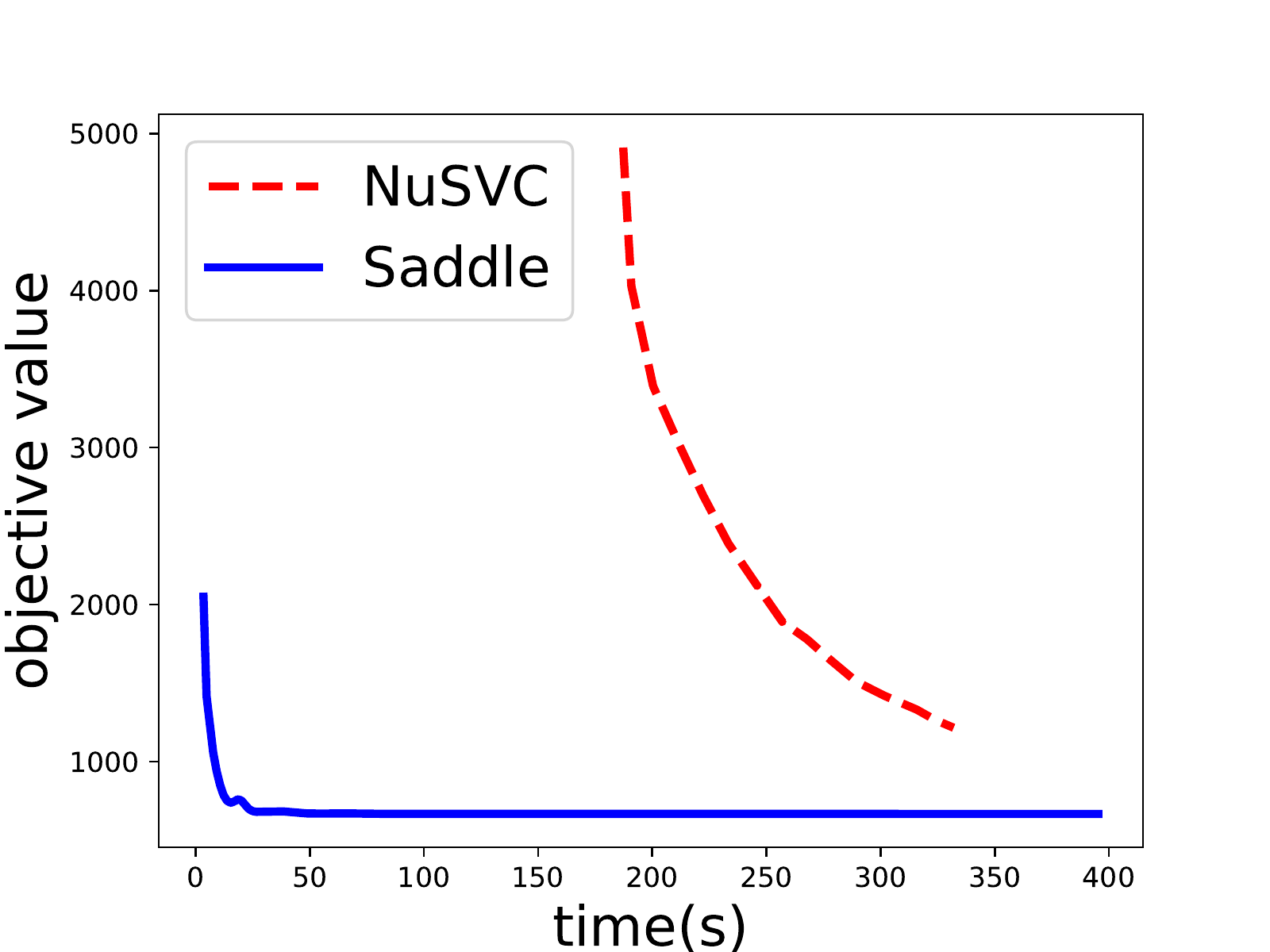}
     \includegraphics[width = 0.24\textwidth]{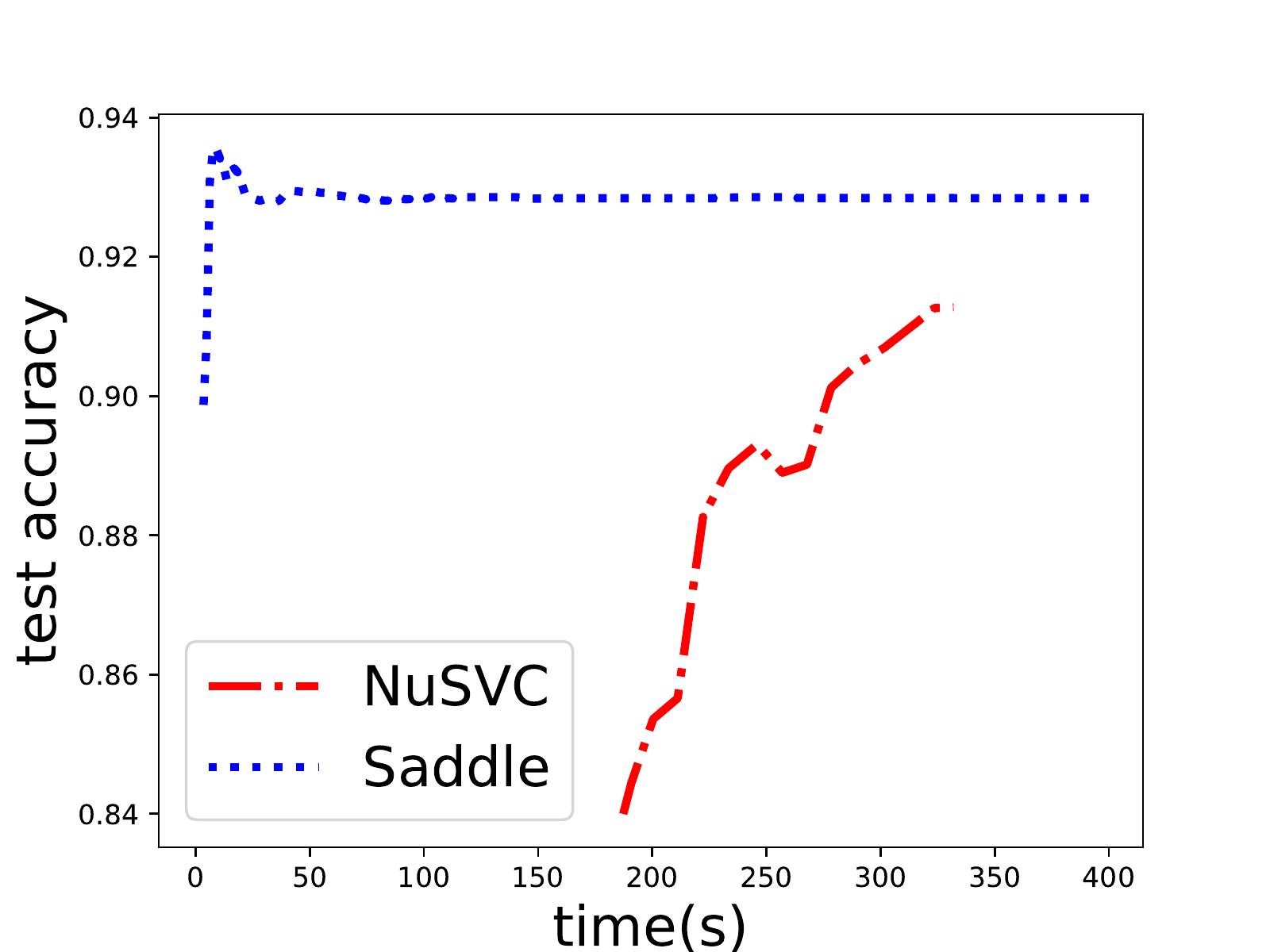}}
  \caption{ Saddle-SVC vs. NuSVC: the left side figures illustrate the convergence of objection value and the right side figures illustrate the test accuracy along with time. }
  \label{fig:nusvc}
\end{figure}

\begin{figure}[t]
  \centering
    \includegraphics[width = 0.32\textwidth]{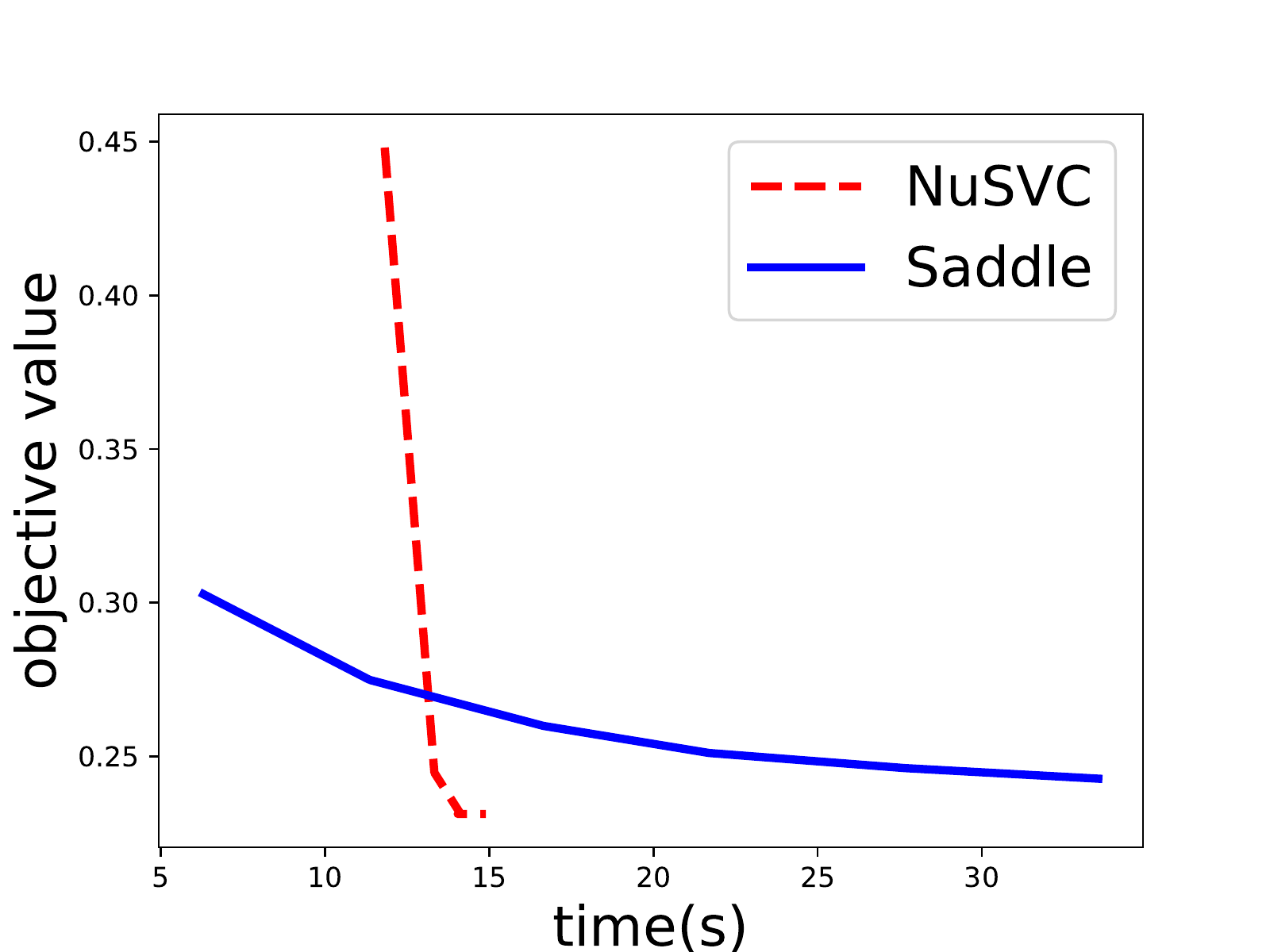}
    \includegraphics[width = 0.32\textwidth]{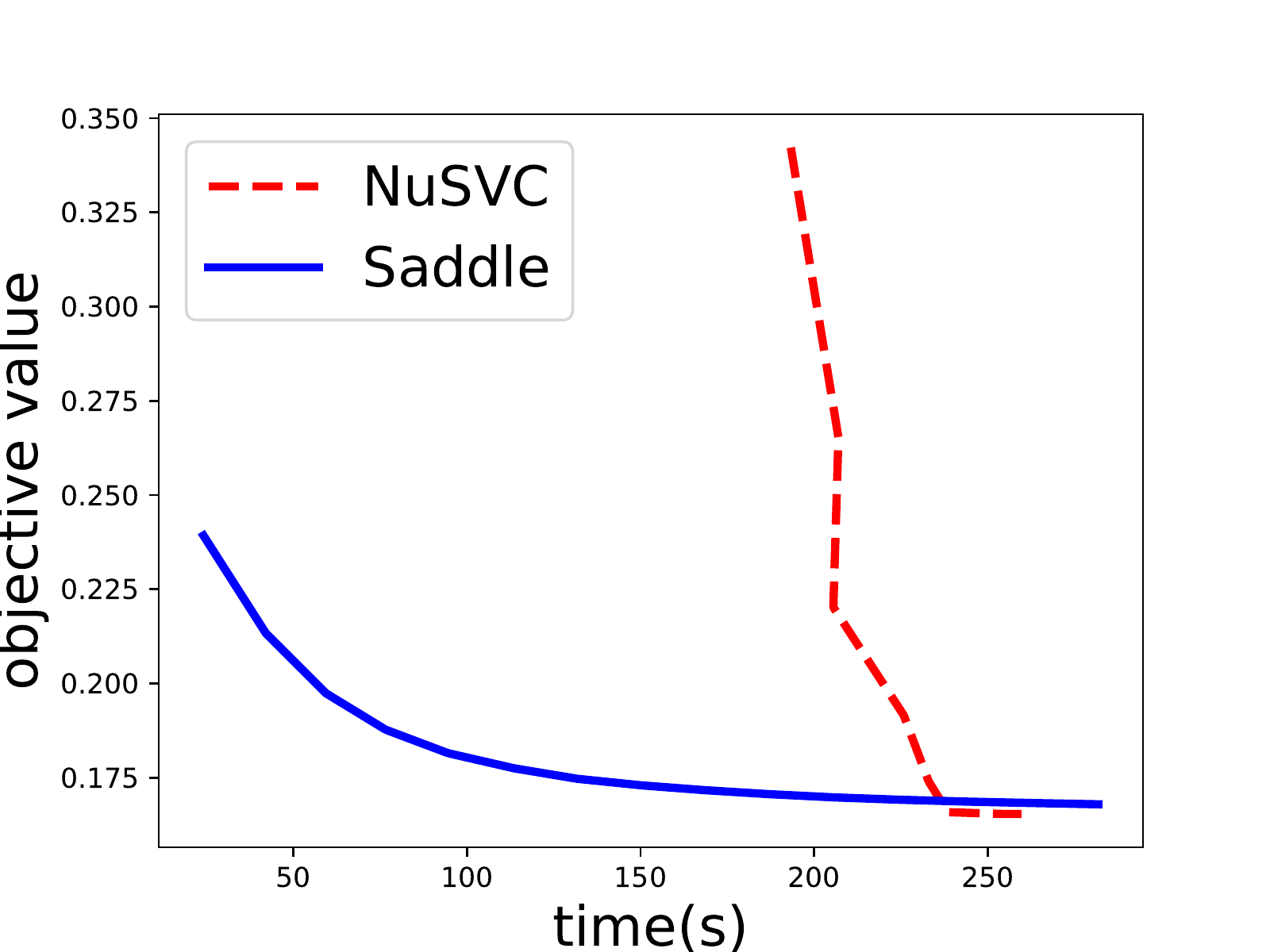}
    \includegraphics[width = 0.32\textwidth]{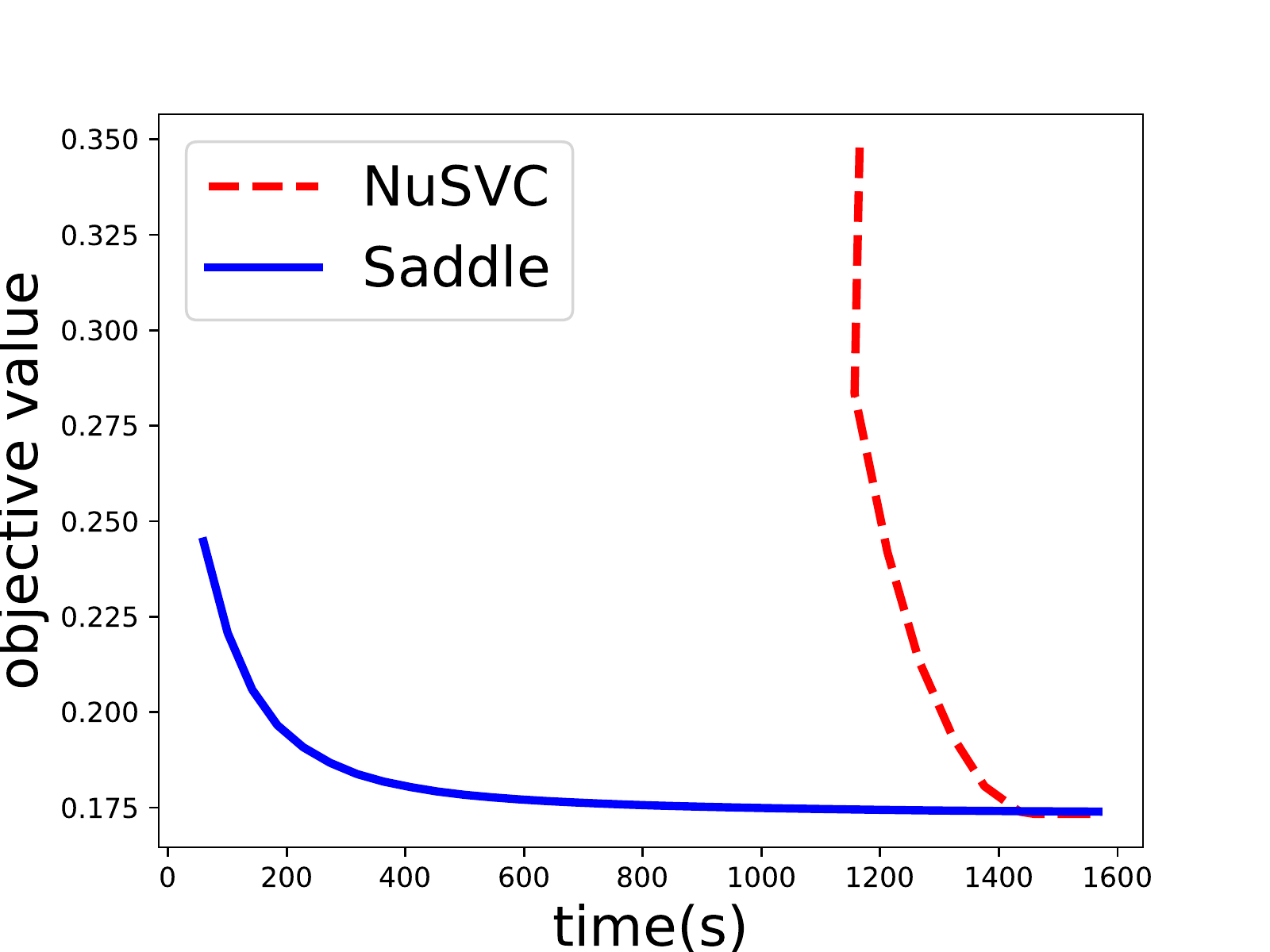}
  \caption{Saddle-SVC converges relatively faster than NuSVC along with data size increasing. The dimension of synthetic dimension $d = 512$. The data sizes are $ n = 5000, 20000, 50000$ respectively from left to right.}
  \label{fig:nusvcth}
\end{figure}

\section{Experiments}
\label{sec:exp}
In this section, we analyze the performance of Saddle-SVC and Saddle-DSVC for both $\nu$-SVM and hard-margin SVM.

First, we compare Saddle-SVC for $\nu$-SVM with NuSVC in scikit-learn \citep{pedregosa2011scikit}.
Current best $\nu$-SVM solver is based on quadratic programming.  NuSVC is one
of the fastest QP-based realization, which based on the famous SVM library LIBSVM~\citep{chang2011libsvm}.
We compare Saddle-SVC with NuSVC and show that when the two reduced polytopes are linearly separable under the parameter $\nu$, Saddle-SVC converges faster than NuSVC, especially when the data size is large and dense.
As a supplement, in Appendix~\ref{apd:exp}, we also compare Saddle-SVC for $\nu$-SVM with LinearSVC in scikit-learn based on LIBLINEAR~\citep{fan2008liblinear} which is the current best algorithm for linear kernel C-SVM and $l_2$-SVM.\footnote{However, we should note that LinearSVC is used to process  $C$-SVM and $l_2$-loss SVM, but not $\nu$-SVM or hard-margin SVM. Thus, their objective function are incomparable. We compare the test accuracy instead of the objective value.} We show that for the large and dense data set, Saddle-SVC is comparable to LinearSVC and even better.

 %

Next, we compare Saddle-SVC for hard-margin SVM with Gilbert algorithm~\citep{gilbert1966iterative}.  Gilbert algorithm is the current best algorithm for hard-margin SVM.  We show that Saddle-SVC converges faster when the data dimension is large.

On the other hand, we also implement our algorithm in the distributed setting and compare it with distributed Gilbert algorithm~\citep{liu2016distributed} and HOGWILD!~\citep{recht2011hogwild}.
We note that the current best distributed algorithm for hard-margin SVM is distributed Gilbert algorithm~\citep{liu2016distributed}.
Our experiments indicate that  Saddle-DSVC has lower communication cost in practice. On the other hand, there is no  practical distributed algorithm for $\nu$-SVM so far. Our algorithm is the first distributed algorithm for $\nu$-SVM.  To evaluate the performance of our distributed algorithm, we first show the convergence curve of Saddle-DSVC on some common datasets. As a supplement, in Appendix~\ref{apd:exp},  we also compare the convergent rate with HOGWILD!~\citep{recht2011hogwild}.~\footnote{Note that HOGWILD!  is used to solve $C$-SVM or $l_2$-SVM.} We show that Saddle-DSVC converges faster than HOGWILD! w.r.t. communication cost.

The CPU of our platform is Intel(R) Xeon(R) CPU E5-2690 v3 @ 2.60GHz, and the system is CentOS Linux. We use both synthetic and
 real-world data sets. The real data sets are from~\cite{chang2011libsvm}. See  Appendix~\ref{apd:exp} for the way to generate synthetic data. In each experiment, we mainly care about the performance of algorithms w.r.t. $n$ and $d$ since they are data dependent parameters. 

\topic{Saddle-SVC vs. NuSVC}\footnote{Note that NuSVC uses another equivalent form of $\nu$-SVM. The paramater $\mu$ in NuSVC equals $2/n\nu$ for $\nu$ in \eqref{eq:nusvm}. See details in Appendix \ref{apd:exp}.}
Here we use the data sets  ``a9a'', ``ijcnn1'', ``phishing'' , and ``skin\_nonskin" from \cite{chang2011libsvm}.
Note that ``a9a'', ``ijcnn1''  has the corresponding test set ``a9a.t'', ``ijcnn1.t''. For ``phishing'' and ``skin\_nonskin", we random choose $10\% $ data as the test set and let the remaining part be the training set.   Let
$$
\nu = 1 / (\alpha \min (n_1, n_2)),
$$
and set $\alpha = 0.85$ for $\nu$-SVM.
We show the experiment results in Figure~\ref{fig:nusvc} and  we can see that Saddle-SVC converges faster with the similar test accuracy. Our algorithm performs much better when the data size is large. We show that the results in Figure~\ref{fig:nusvcth} based on synthetic data sets sampling from the same distribution with different sizes.

We discuss a bit more for the parameter selection.  Chang and Lin \cite{chang2001training} show that $\nu$-SVM, $\nu \leq 1$ is feasible  $ \nu$ should larger than  $ 1 / \min (n_1, n_2) $ where $n_1$ and $n_2$ are the number of the two classes of points respectively. Moreover, if $\nu$ is too close to $1$, the $\nu$-SVM has poor prediction ability because of the two reduced polytopes may not separable. We discuss the detail reasons in Appendix~\ref{apd:exp}. We find that, in the experiment, $\alpha > 0.7$  usually ensures that the two reduced polytopes are linearly separable, i.e., the objective function converges to a positive number.   In Appendix~\ref{apd:exp}, we also do experiments for other $\alpha$s and show that if  $\alpha$ is small, $\nu$-SVM model has poor prediction ability.

\topic{Saddle-SVC vs. Gilbert Algorithm}
For the hard-margin SVM, we compare Saddle-SVC  with Gilbert Algorithm.  We use linearly separable data ``iris" and ``mushrooms". Since it is hard to find a large real data set which is linearly separable, we generate some synthetic data sets  and show that Saddle-SVC converges faster when data dimension is large. We repeat the iterations of Saddle-SVC and compute the objective function every $T$ rounds. If  the difference between two consecutive  objective value  is less than $\epsilon$, then output the results. See the results in Table~\ref{tab:gilbert}, in which we can see that Saddle-SVC gets smaller objective value (the closest  distance between the two polytopes) with less running time when data dimension is large.
\begin{table}[t]
  \centering
  \caption{Saddle-SVC vs. Gilbert Algorithm.  $\epsilon = 0.001$. iris: $n = 150, $ $d = 4$. mushrooms: $n = 8124, $ $d = 112$, Synthetic data: $n = 10000$.    }
  \label{tab:gilbert}
  \begin{tabular}{| c | c | c | c | c |}
  \hline
      \multirow{2}*{data set} &   \multicolumn{2}{|c|}{Saddle-SVC}  &   \multicolumn{2}{|c|}{Gilbert }  \\
    \cline{2-5}
  & obj & time   &  obj & time   \\
  \hline
  iris & 0.835 & 0.152s & 0.835 & 0.0005s \\
  \hline
  mushrooms & 0.516 & 11.2s & 0.517 & 12.5s \\
  \hline
  \end{tabular}

  \begin{tabular}{ | c | c | c | c | c |}
    \hline
    synthetic data &   \multicolumn{2}{|c|}{Saddle-SVC}  &
    \multicolumn{2}{|c|}{Gilbert }  \\
    \cline{2-5}
	 dimension & obj & time   &  obj & time   \\
      \hline
     8 & 0.395 & 9.33s &  0.397 & 1.52s    \\
    \hline
     32 & 0.468 & 28.8s &  0.470 & 10.1s   \\
     \hline
      128 & 0.436 & 64.0s &  0.438 & 152s   \\
      \hline
      512 & 0.496 & 189s & 0.498 & 2327s \\
    \hline
  \end{tabular}
\end{table}

\topic{Saddle-DSVC}  For hard-margin SVM, we compare Saddle-DSVC  with  distributed Gilbert algorithm.
We compare the margins  w.r.t.  the communication cost. We count all information communication between the clients and server as the communication cost.
The data sets are   ``mushrooms'' and synthetic data sets with different dimensions. Figure~\ref{fig:hm-dist} illustrates that Saddle-DSVC converges
faster w.r.t. communication cost.  The data is distributed to $k=20$ nodes.  Note that it takes $kd$ communication cost if each client sends a point to the server. We set one
unit of $x$-coordinate to represent $kd$ communication cost.

\begin{figure}[t]
  \centering
  \subfigure[synthetic data, \newline $d = 128, $ $n =2\times 10^4$]{
    \includegraphics[width = 0.45\textwidth]{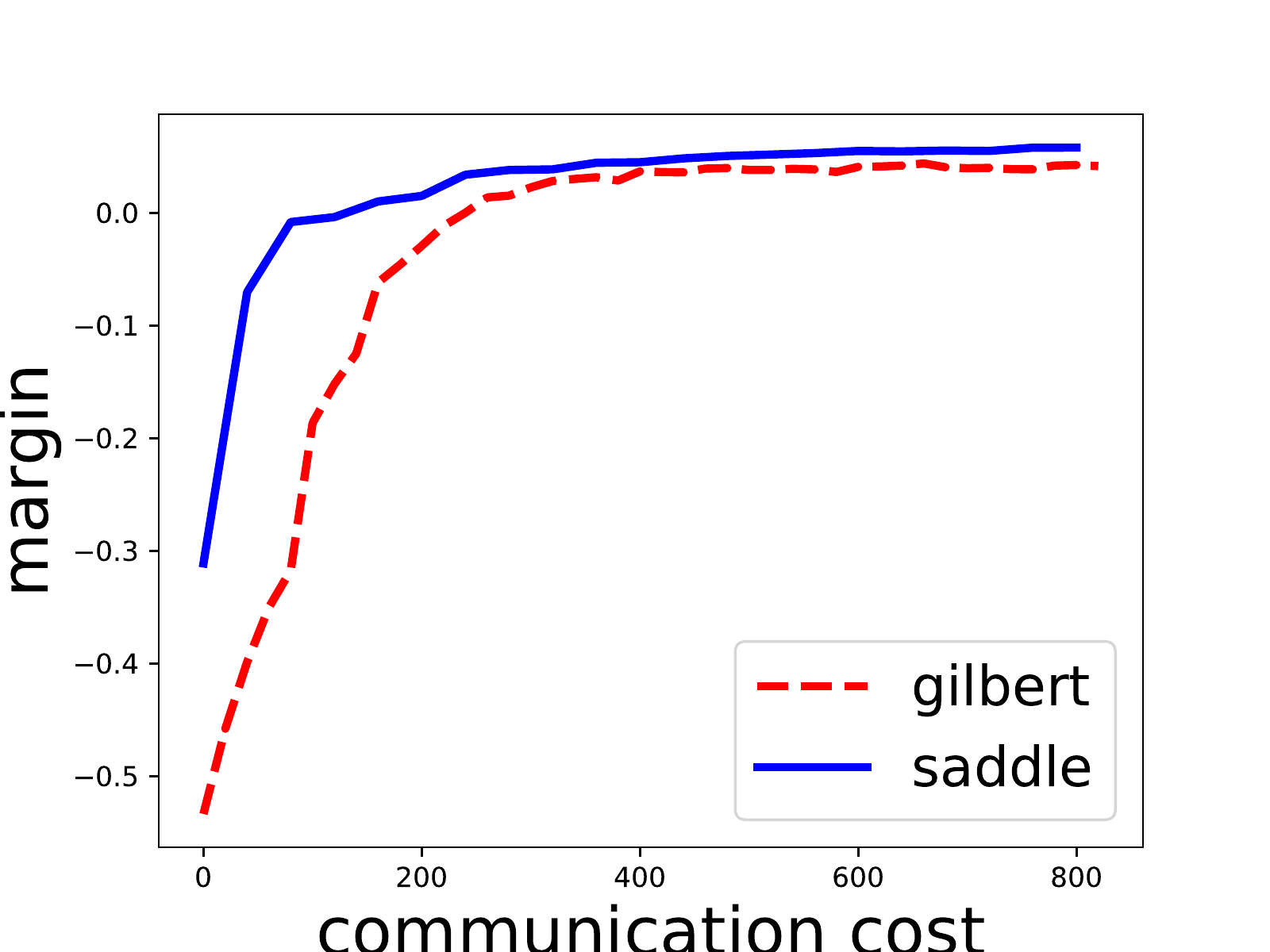}}
  \subfigure[synthetic data, \newline $d = 256, $ $n =2\times 10^4$]{
    \includegraphics[width = 0.45\textwidth]{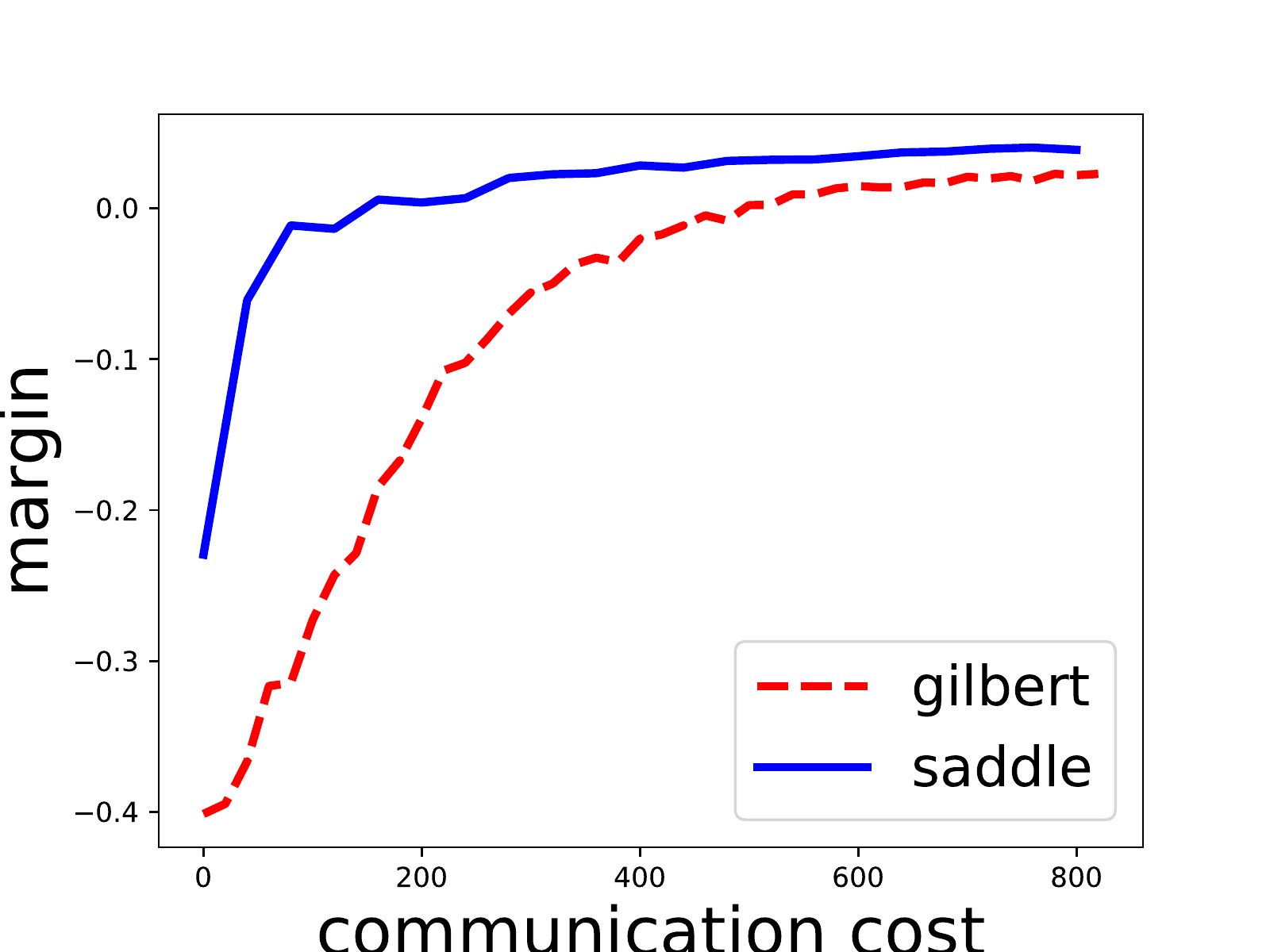}}
  \subfigure[synthetic data, \newline $d = 512, $ $n =2\times 10^4$]{
    \includegraphics[width = 0.45\textwidth]{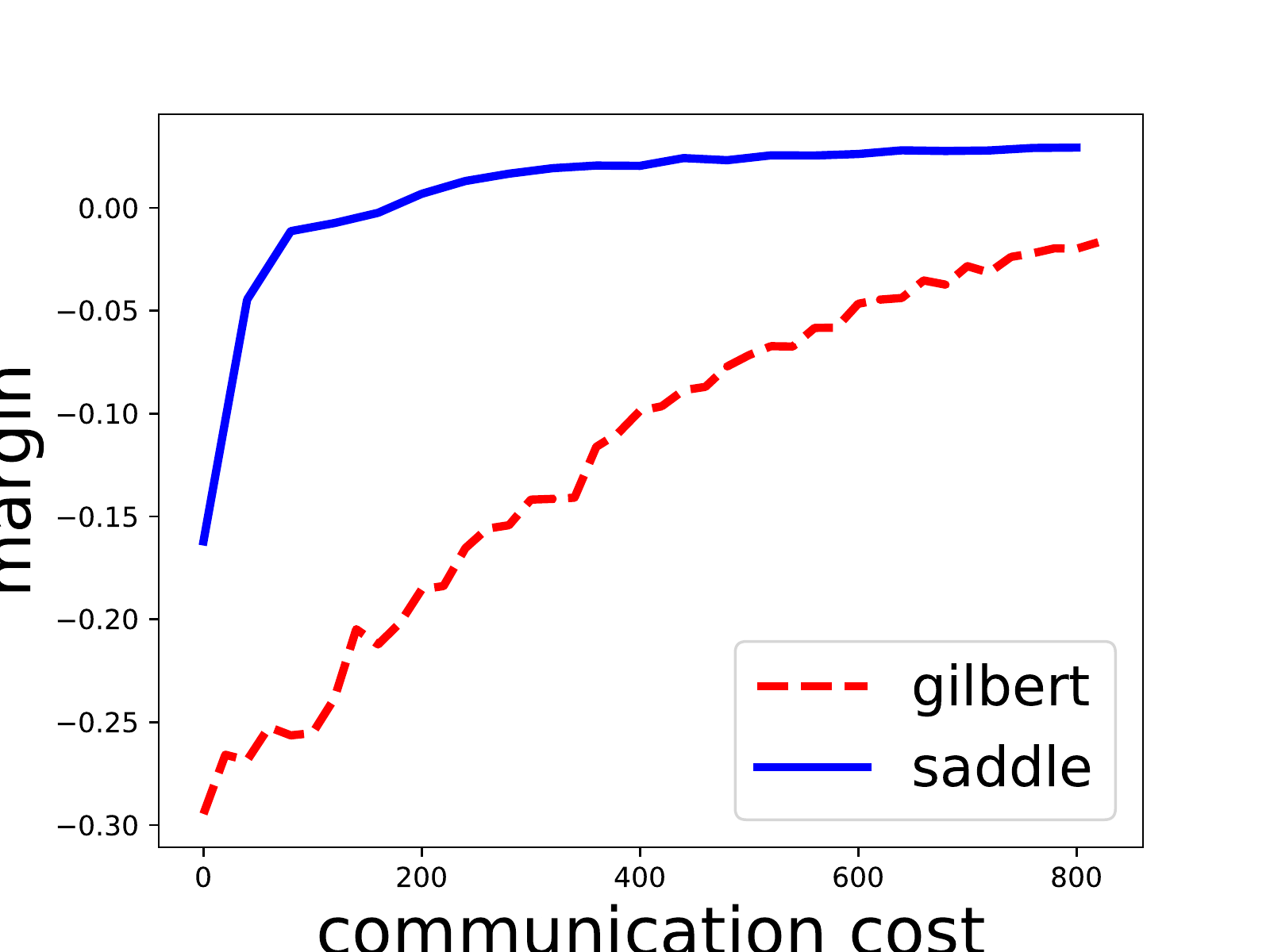}}
  \subfigure[mushrooms, \newline $d = 112, $ $ n = 8124$]{
    \includegraphics[width = 0.45\textwidth]{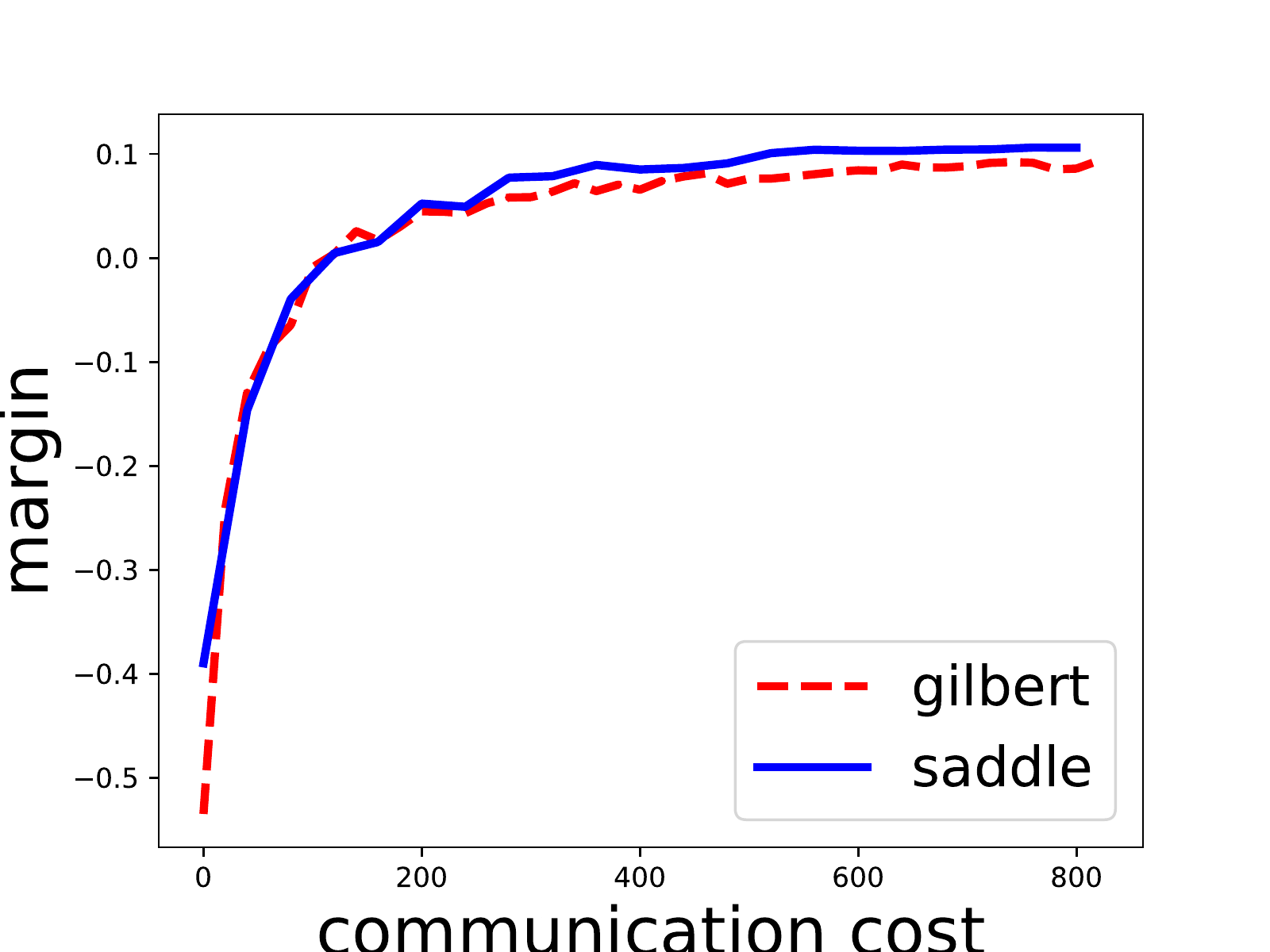}}
  \caption{The margins solved by Gilbert and  Saddle-DSVC w.r.t. communication cost. Here  $k=20$ and one unit is $kd$ communication cost. }
  \label{fig:hm-dist}
\end{figure}

For the $\nu$-SVM,  we analyze the convergence property on some common data sets
including ``phishing'', ``a9a'', ``gisette", ``madelon" from~\cite{chang2011libsvm}. We show the details in Figure~\ref{fig:obj}.
Besides, we also compare Saddle-DSVC with HOGWILD!.  We compare the accuracy instead of objective value since they solve different SVM variants.
We provide the experiment details in Appendix~\ref{apd:exp} and show that our algorithm is convergent faster  w.r.t. communication cost.

\begin{figure}[t]
\centering
  \subfigure[gisette]{
    \includegraphics[width = 0.45\textwidth]{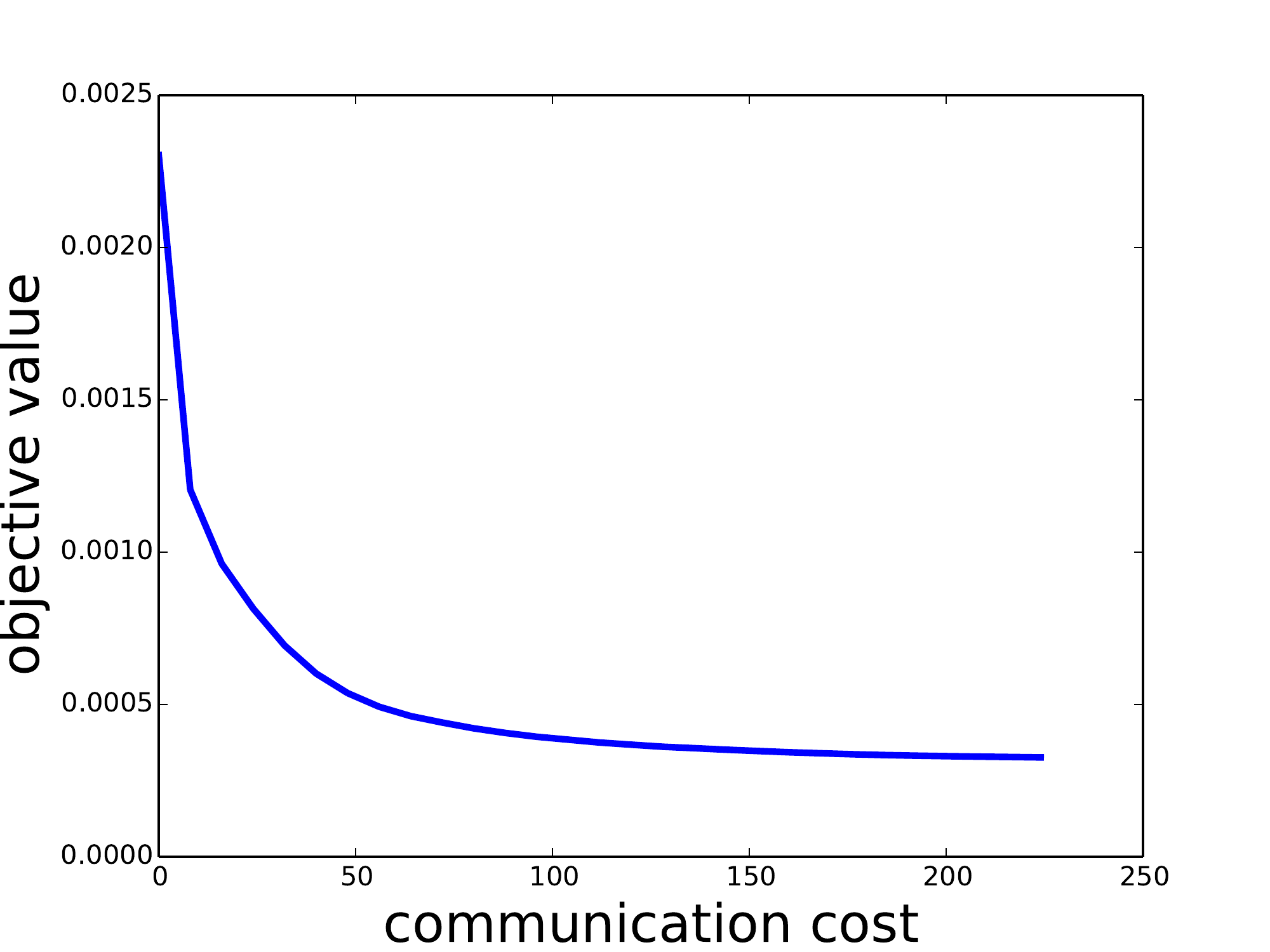}}
  \subfigure[madelon]{
    \includegraphics[width = 0.45\textwidth]{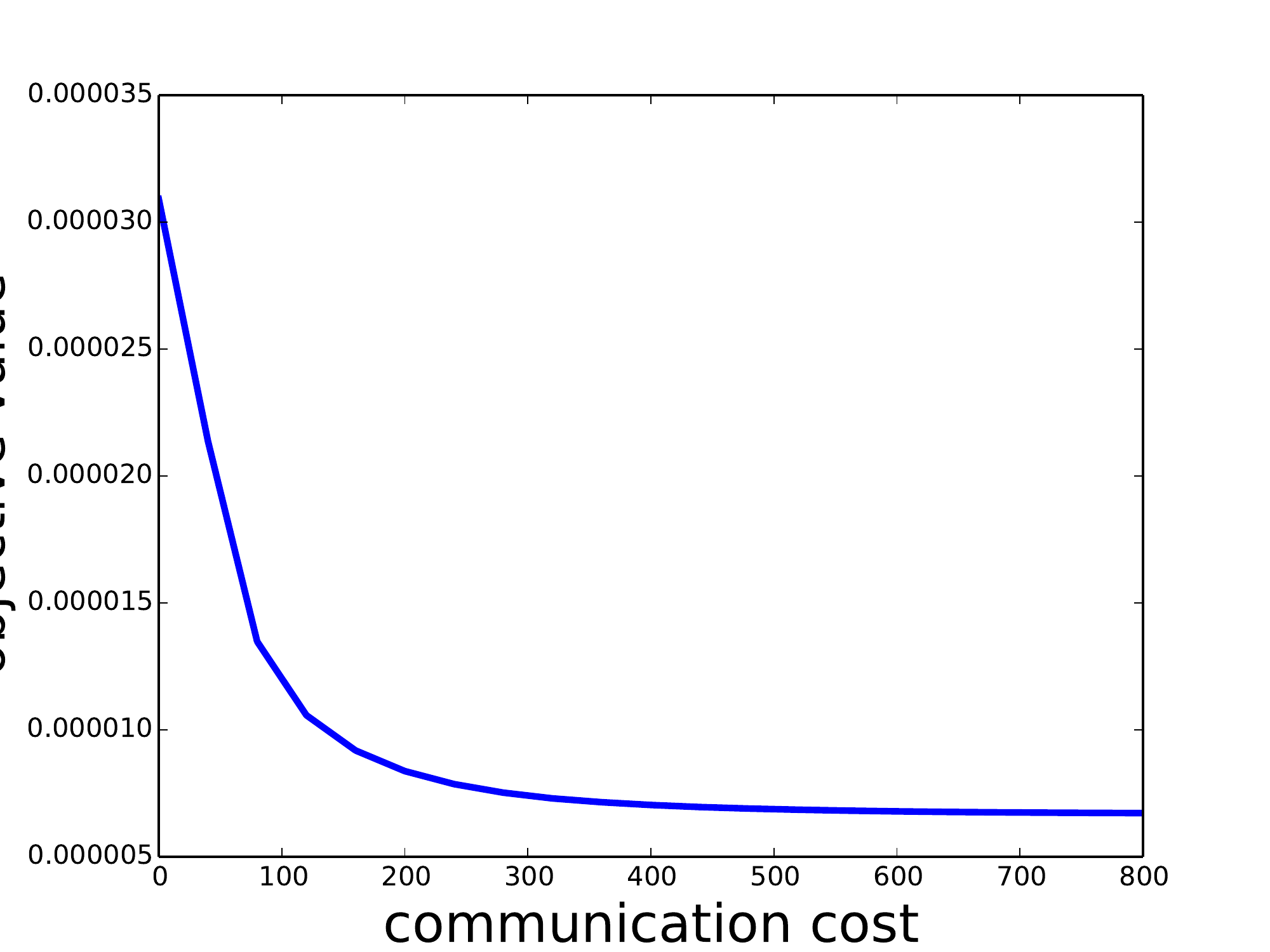}}
  \subfigure[phshing ]{
    \includegraphics[width = 0.45\textwidth]{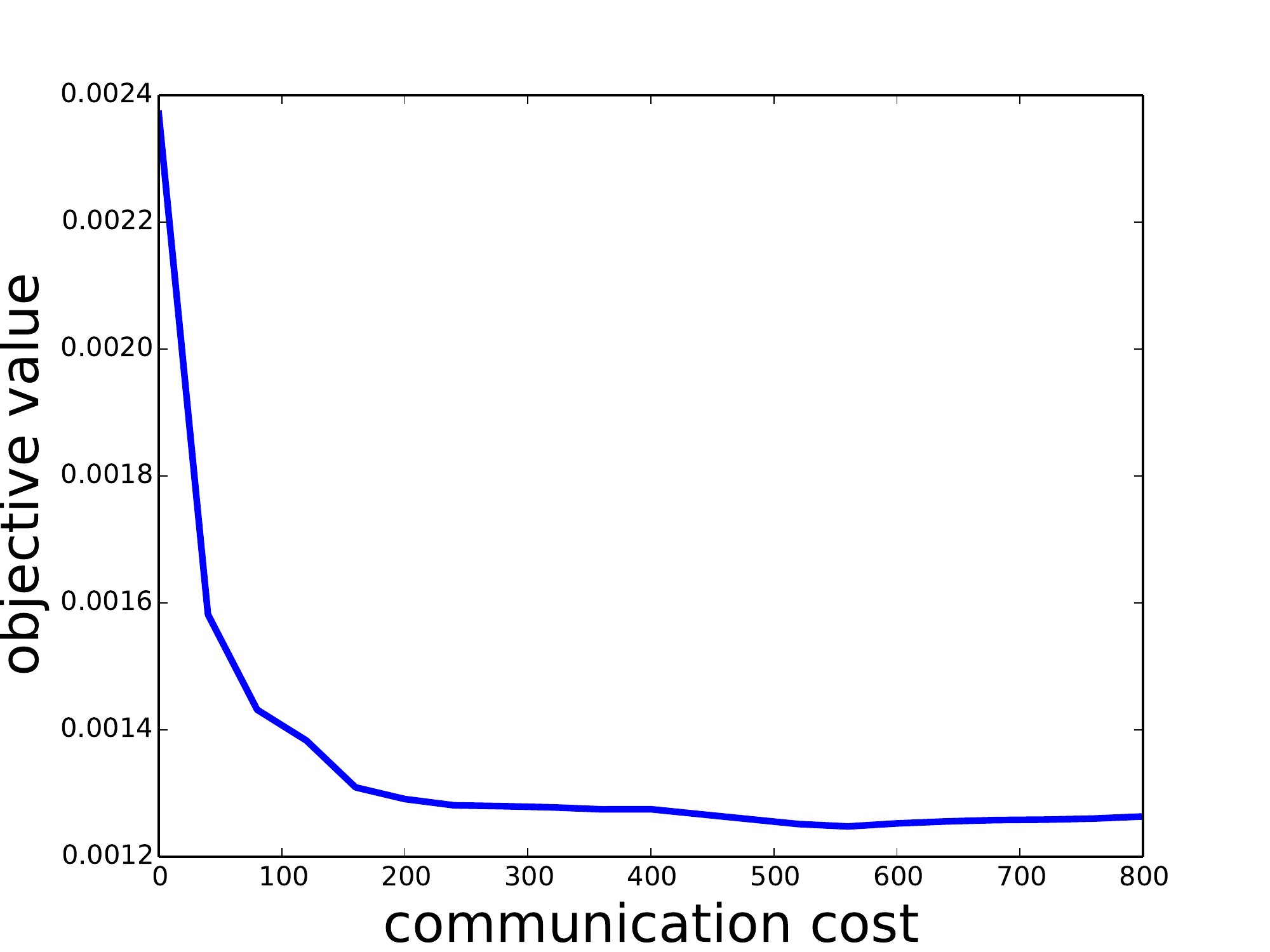}}
  \subfigure[a9a]{
    \includegraphics[width = 0.45\textwidth]{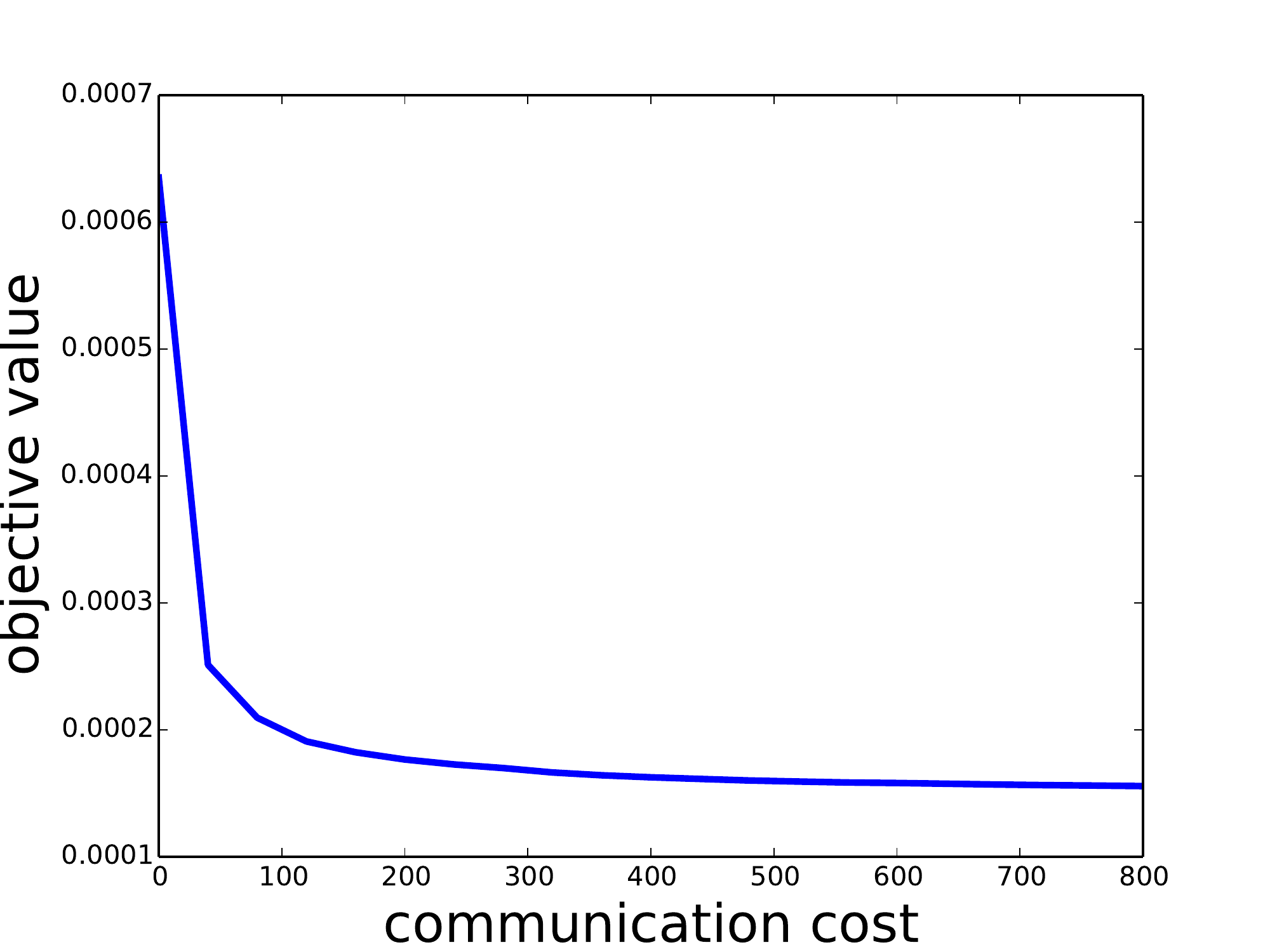}}
  \caption{The  objective value of Saddle-DSVC for $\nu$-SVM w.r.t. communication cost. Here $k=20$ and one unit is $kd$ communication cost.  gisette: $n = 6000, d = 5000$. madelon: $n = 2000, d = 500$. a9a: $n = 32561, d = 123$. phishing:  $n = 11055, d = 68$.}
  \label{fig:obj}
\end{figure}

\newpage
\clearpage
\bibliography{citation}
\bibliographystyle{plain}

\clearpage
\appendix

\section{The Equivalence of the Explicit and Implicit Update Rules of $\eta$ and $\xi$}
\label{apd:svmsp}

\begin{lemma}[Update Rules of HM-Saddle]
  \label{lm:upHM}
  The following two update rules are equivalent.
  \begin{itemize}
  \item $\eta[t+1] :=$
  \begin{align*}
  \arg\min\limits_{\eta \in \Delta_{n_1}} & \Big\{ \frac{1}{d} (w[t] +d(w[t+1] - w[t]))^{\rm T} X \eta \\
   & \qquad \qquad + \frac{\gamma}{d} H(\eta)  + \frac{1}{\tau} V_{\eta[t]}(\eta) \Big\}
    \end{align*}
  \item  $ \eta_i[t+1] :=$
  \begin{align*}
    Z^{-1} \exp \big\{  & ( \gamma + d\tau^{-1})^{-1} (d\tau^{-1} \log \eta_i[t] \\
     & \qquad - \langle    w[t] +d(w[t+1] - w[t]) ,  X_{\cdot i } \rangle ) \big\}
    \end{align*}
     for each $i \in   [n_1]$, where $Z = \sum_{i}  \eta_i$
    \footnote{Recall that $X_{\cdot i}$ is the $i$th column of $X$.}
  \end{itemize}
\end{lemma}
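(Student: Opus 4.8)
The plan is to view the first (implicit) update rule as a strictly convex minimization over the probability simplex and to solve it in closed form by the method of Lagrange multipliers, thereby arriving at the second (explicit) rule. Write $v := w[t] + d(w[t+1] - w[t])$ and let
\[
F(\eta) := \tfrac{1}{d}\, v^{\rm T} X \eta + \tfrac{\gamma}{d} H(\eta) + \tfrac{1}{\tau} V_{\eta[t]}(\eta)
\]
denote the objective, regarded as a function on the open positive orthant $\{\eta : \eta_i > 0\ \forall i\}$. Since $\gamma,\tau>0$ and $H$ is strictly convex, while the linear term is convex and $V_{\eta[t]}(\cdot)$ is the (convex) Bregman divergence generated by $H$, the function $F$ is strictly convex; hence it has at most one minimizer on $\Delta_{n_1}$, and it suffices to exhibit a point satisfying the first-order optimality conditions and to check it has the claimed form.

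First I would argue that the minimizer lies in the relative interior of $\Delta_{n_1}$, so that the non-negativity constraints are inactive. Indeed, for $\eta[t]$ with all positive coordinates, $\partial F/\partial \eta_i$ contains the term $\big(\tfrac{\gamma}{d}+\tfrac{1}{\tau}\big)\log \eta_i$, which tends to $-\infty$ as $\eta_i \to 0^+$ while the remaining terms stay bounded; thus $F$ is strictly decreasing in $\eta_i$ near the face $\eta_i = 0$, and no minimizer on the simplex can have a vanishing coordinate. Consequently the KKT system reduces to the existence of a scalar $\mu$ (the multiplier of the constraint $\mathbf{1}^{\rm T}\eta = 1$) with $\nabla F(\eta) + \mu \mathbf{1} = \mathbf{0}$, together with $\mathbf{1}^{\rm T}\eta = 1$ and $\eta > 0$; the multipliers of the non-negativity constraints vanish by complementary slackness.

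Finally I would compute the stationarity condition coordinatewise. Using $\nabla H(\eta)_i = \log\eta_i + 1$ and $\nabla_\eta V_{\eta[t]}(\eta)_i = \log\eta_i - \log\eta_i[t]$, the equation $\partial F/\partial\eta_i + \mu = 0$ reads
\[
\tfrac{1}{d}\langle v, X_{\cdot i}\rangle + \tfrac{\gamma}{d}(\log\eta_i + 1) + \tfrac{1}{\tau}(\log\eta_i - \log\eta_i[t]) + \mu = 0 .
\]
Multiplying by $d$ and collecting the $\log\eta_i$ terms gives $(\gamma + d\tau^{-1})\log\eta_i = d\tau^{-1}\log\eta_i[t] - \langle v, X_{\cdot i}\rangle + c$, where $c = -\gamma - d\mu$ is independent of $i$. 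Exponentiating yields $\eta_i = e^{c/(\gamma+d\tau^{-1})}\exp\{(\gamma+d\tau^{-1})^{-1}(d\tau^{-1}\log\eta_i[t] - \langle v, X_{\cdot i}\rangle)\}$, and imposing $\sum_i \eta_i = 1$ forces the $i$-independent prefactor to equal $1/Z$ with $Z$ the sum of the exponentials, which is precisely the second bullet. Uniqueness from strict convexity then shows the two rules produce the same $\eta[t+1]$. The analogous statement for $\xi$ follows by the identical argument (with $X^{-}$ in place of $X^{+}$ and the opposite sign, i.e.\ the label $y_j=-1$ absorbed into the linear term). The only genuinely delicate step is the interior-minimizer claim; everything else is routine bookkeeping of the constant terms.
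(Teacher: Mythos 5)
Your proposal is correct and follows essentially the same route as the paper's proof: form the Lagrangian for the constraint $\mathbf{1}^{\rm T}\eta=1$, solve the coordinatewise stationarity condition for $\log\eta_i$, and absorb the multiplier into the normalizer $Z$. You are in fact slightly more careful than the paper, which silently drops the nonnegativity constraints; your observation that the $\log\eta_i$ term forces the minimizer into the relative interior (together with uniqueness from strict convexity) is exactly the justification needed for that omission.
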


\begin{proof}
    The Lagrangian function of the first optimization formulation is
\begin{align*}
L(\eta, \lambda) = &\frac{1}{d}  ( w[t] +d(w[t+1] - w[t]))^{\rm T} X \eta  \\
& +   \frac{\gamma}{d} H(\eta) + \frac{1}{\tau}V_{\eta[t]}(\eta) + \lambda(\sum_{i}\eta_i -1)
 \end{align*}
Thus, we have
\begin{align*}
  \frac{\partial L}{\partial \eta_i} = 0 &= (\gamma d^{-1}+ \tau^{-1})\log \eta_i  \\
	&  + d^{-1} \langle w[t] +d(w[t+1] - w[t]),   X_{\cdot i}  \rangle \\
  &  -  \tau^{-1} \log \eta_i[t] + (\lambda + \tau^{-1}), \forall i \\
   \frac{\partial L}{\partial \lambda} = 0 &= -1 + \sum_{i}\eta_i
\end{align*}
Solve the above equalities, we obtain
\begin{align*}
  \eta_i[t+1] = & Z^{-1} \exp \big\{ ( \gamma + d\tau^{-1})^{-1} (d\tau^{-1} \log \eta_i[t]  \\
  -  & \langle w[t] +d(w[t+1] - w[t]),   X_{\cdot i} \rangle \big\}
\end{align*}\end{proof}
\begin{lemma}[Update Rules of $\nu$-Saddle]
  \label{lm:upnu}
  The following three update rules are equivalent.

\topic{Rule 1} $\eta[t+1] :=$
\begin{align*}
  \arg\min\limits_{\eta \in \calS_1}  & \Big\{ \frac{1}{d} ( w[t] +d(w[t+1] - w[t]))^{\rm T} X \eta  \\
  + & \frac{\gamma}{d} H(\eta) + \frac{1}{\tau}V_{\eta[t]}(\eta)  \Big\}
  \end{align*}

\topic{Rule 2}
\begin{itemize}
\item Step 1: $ \eta_i = $
\begin{align*}
Z^{-1} \exp \big\{  & ( \gamma + d\tau^{-1})^{-1} (d\tau^{-1} \log \eta_i[t]  \\
-  & \langle  w[t] +d(w[t+1] - w[t]), X_{\cdot i}  \rangle ) \big\}
\end{align*}

    for each $i \in [n_1]$,  where $Z = \sum_{i}  \eta_i$.
    \item Step 2: Sort $ \eta_i$ by the increasing order. W.l.o.g., assume that $\eta_1,\ldots,\eta_{n_1}$ is in increasing order. Define $\varsigma_i = \sum_{j\geq i} (\eta_j-\nu)$ and $\Omega_i=\sum_{j<i} \eta_j$. Find the largest index $i^*\in [n]$ such that $\varsigma_{i^*}\geq 0$ and $\eta_{i^*-1}(1+\varsigma_{i^*}/\Omega_{i^*})<\nu$ by binary search.
\item  Step 3:
    $$\forall i,   \eta_i[t+1] =        \left\{
      \begin{array}{ll}
        \eta_i(1+\varsigma_{i^*}/\Omega_{i^*}), & \text{if } i < i^* \\
        \nu , & \text{if } i \geq i^*
      \end{array} \right.
    $$
    \end{itemize}
  \topic{Rule 3}
  \begin{itemize}
  \item Step 1: $ \eta_i :=$
  \begin{align*}
   Z^{-1} \exp &\big\{ ( \gamma + d\tau^{-1})^{-1} (d\tau^{-1} \log \eta_i[t]  \\
   -& \langle  w[t] +d(w[t+1] - w[t]), X_{\cdot i}  \rangle ) \big\}
   \end{align*}
  for each $i \in [n_1]$, where $Z = \sum_{i}  \eta_i$.
    \item Step 2:
\begin{equation}
  \begin{array}{l}
    \mathbf{while} \quad \varsigma := \sum_{\eta_i > \nu} (\eta_i - \nu) \neq 0:\\
    \qquad \Omega=  \sum_{\eta_i < \nu} \eta_i  \\
    \qquad \forall i, \quad \mathbf{if} \; \eta_i \geq \nu, \quad \mathbf{then} \; \eta_i = \nu    \\
    \qquad \forall i, \quad \mathbf{if} \; \eta_i < \nu, \quad \mathbf{then} \; \eta_i = \eta_i (1 +\varsigma/ \Omega)
  \end{array}
\end{equation}
\end{itemize}
\end{lemma}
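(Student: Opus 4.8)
The plan is to exploit strict convexity: the program in Rule~1 (over $\calS_1=\calD_{n_1}$) has a \emph{unique} minimizer, so it suffices to show that the points produced by Rule~2 and by Rule~3 each satisfy its KKT system; the three then coincide. First I would set up the KKT conditions for Rule~1, writing $c_i:=\langle w[t]+d(w[t+1]-w[t]),X_{\cdot i}\rangle$ so the objective is $\tfrac1d c^{\rm T}\eta+\tfrac\gamma d H(\eta)+\tfrac1\tau V_{\eta[t]}(\eta)$, which is strictly convex because of the entropy and Bregman terms. Forming the Lagrangian with a multiplier $\lambda$ for $\sum_i\eta_i=1$ and multipliers $\mu_i\ge 0$ for $\eta_i\le\nu$ (the constraints $\eta_i\ge0$ are inactive since entropy forces $\eta_i>0$, exactly as in the proof of Lemma~\ref{lm:upHM}), stationarity gives $(\gamma d^{-1}+\tau^{-1})\log\eta_i+d^{-1}c_i-\tau^{-1}\log\eta_i[t]+(\lambda+\tau^{-1})+\mu_i=0$ for every $i$. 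Writing $a_i:=\exp\{(\gamma+d\tau^{-1})^{-1}(d\tau^{-1}\log\eta_i[t]-c_i)\}$ for the unnormalized quantity appearing in Step~1 of Rules~2 and 3, this reads $\eta_i^*=\rho\,a_i\,e^{-d\mu_i/(\gamma+d\tau^{-1})}$ for some $\rho>0$, and complementary slackness ($\mu_i=0$ where $\eta_i^*<\nu$, $\mu_i\ge0$ where $\eta_i^*=\nu$) collapses this to $\eta_i^*=\min(\rho a_i,\nu)$, with $\rho$ the unique value making the entries sum to $1$ (the map $\rho\mapsto\sum_i\min(\rho a_i,\nu)$ is continuous, nondecreasing, equals $0$ at $\rho=0$ and $n_1\nu\ge1$ as $\rho\to\infty$, and is strictly increasing while some coordinate is uncapped; feasibility of $\calD_{n_1}$ forces $n_1\nu\ge1$). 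In particular the capped coordinates are precisely those with the largest $a_i$.

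Next I would treat Rule~3 (the \texttt{while} loop). The invariant $\sum_i\eta_i=1$ holds throughout: the rescaling factor $1+\varsigma/\Omega$ is exactly what restores the sum after the entries above $\nu$ are clipped to $\nu$, by the cancellation already noted around \eqref{eq:nu}. A coordinate that attains value $\nu$ is thereafter neither $>\nu$ nor $<\nu$, hence never modified again, and each round with $\varsigma\neq0$ clips at least one new such coordinate, so the loop terminates in at most $\lceil1/\nu\rceil$ rounds. Since every rescaling multiplies all currently uncapped coordinates by one common positive factor, the relative order of the $a_i$ among uncapped coordinates is preserved and the capped set is always an upper set in the $a_i$-order. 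At termination no entry exceeds $\nu$; the capped entries equal $\nu$ and (by the upper-set property and the fact that each clipped coordinate strictly exceeded $\nu$ at the time it was clipped) satisfy $\rho a_i\ge\nu$, while the uncapped entries equal $\rho a_i<\nu$ with $\rho=Z^{-1}\prod(\text{rescaling factors})$. Hence the output is $\min(\rho a_i,\nu)$, the unique KKT point, so Rule~1 $\equiv$ Rule~3.

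Finally, Rule~2 is the one-shot form of Rule~3. Sorting $\eta_1\le\cdots\le\eta_{n_1}$ (these being the normalized $a_i/Z$), if we cap the coordinates with index $\ge k$ and rescale the rest, feasibility forces the common factor to be $1+\varsigma_k/\Omega_k=(1-(n_1-k+1)\nu)/\Omega_k$ (same cancellation), so at the correct threshold $i^*$ the output again has the form $\min(\rho a_i,\nu)$. I would then verify that the two conditions defining $i^*$ — $\varsigma_{i^*}\ge0$ (there is nonnegative surplus mass among the would-be-capped coordinates to redistribute) and $\eta_{i^*-1}(1+\varsigma_{i^*}/\Omega_{i^*})<\nu$ (the largest rescaled uncapped entry stays under the cap), with $i^*$ maximal — single out exactly the split at which the resulting point is feasible ($0\le\eta_i^*\le\nu$) and stationary: $\mu_i=0$ for $i<i^*$, and for $i\ge i^*$ the implied multiplier $\mu_i\ge0$ is equivalent to $\eta_i(1+\varsigma_{i^*}/\Omega_{i^*})\ge\nu$, which maximality of $i^*$ delivers. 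Checking this, together with the observations that $\varsigma_k$ is unimodal in $k$ and that the overshoot test $\eta_{k-1}(1+\varsigma_k/\Omega_k)<\nu$ flips exactly once, validates both the characterization of $i^*$ and its computation by binary search, completing Rule~1 $\equiv$ Rule~2 $\equiv$ Rule~3.

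The routine parts are the Lagrangian computation, the sum-preservation invariant, and the $\lceil1/\nu\rceil$ termination count. I expect the main obstacle to be the last step: matching Rule~2's binary-search predicate to Rule~3's termination threshold without off-by-one slips, and nailing down the monotonicity/unimodality of $\varsigma_k$ and of the overshoot test that make the binary search provably correct.
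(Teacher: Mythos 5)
Your proposal is correct and follows essentially the same route as the paper: write the KKT system of the strictly convex program in Rule 1 and use uniqueness to identify the outputs of Rules 2 and 3 with the optimizer, the key step in both arguments being that maximality of $i^*$ forces $\eta_{i^*}(1+\varsigma_{i^*}/\Omega_{i^*})\geq\nu$ so that the implied multipliers on the capped coordinates are nonnegative. Your intermediate water-filling characterization $\eta_i^*=\min(\rho a_i,\nu)$ is just a repackaging of complementary slackness, but it lets you treat Rule 3 with more care than the paper does (the paper dismisses it with ``by a similar argument''), which is a small improvement rather than a different method.
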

\begin{proof}
  Similar to the proof of Lemma~\ref{lm:upHM}, we first give the Lagrangian function of the
  first optimization formulation as follows.
  \begin{align*}
L(\eta, \lambda, \sigma) = &\frac{1}{d} ( w[t] +d(w[t+1] - w[t]))^{\rm T} X \eta \\
+   & \frac{\gamma}{d} H(\eta) + \frac{1}{\tau}V_{\eta[t]}(\eta) \\
 + & \lambda(\sum_{i}\eta_i -1)  +  \sum_{i} \alpha_i (\eta_i - \nu)
  \end{align*}
By KKT conditions, we have the following.
 \begin{align*}
   0 &= (\gamma d^{-1}+ \tau^{-1})\log \eta_i \\
    & \qquad + d^{-1} \langle    w[t] +d(w[t+1] - w[t]) , X_{\cdot i} \rangle \\
       &   \qquad    -  \tau^{-1} \log \eta_i[t] + (\lambda +\alpha_i + \tau^{-1}), \forall i \\
   1 &=  \sum\nolimits_{i}\eta_i  \\
   0 & = \alpha_i(\eta_i - \nu), \forall i \\
   0 &\geq \eta_i-\nu, \forall i \\
   0 &\leq \alpha_i, \forall i
\end{align*}
We first show the equivalence between Rule 1 and Rule 2. Note that $\eta[t+1]$ in Rule 2 satisfies the second and the fourth KKT conditions.
We only need to give all $\alpha_i$ and $\lambda$ satisfying other KKT conditions for Rule 2. Let
\begin{align*}
\tilde{\eta}_i = \exp \big\{ &( \gamma + d\tau^{-1})^{-1} (d\tau^{-1} \log\eta_i[t]  \\
 &- \langle w[t] +d(w[t+1] - w[t]), X_{\cdot i}  \rangle ) \big\}. \\
  \end{align*}
Let
\begin{align*}
 & \eta_i = Z^{-1} \tilde{\eta_i} \\
 & = Z^{-1} \exp  \big\{ ( \gamma + d\tau^{-1})^{-1} (d\tau^{-1} \log \eta_i[t]  \\
 & \qquad - \langle  w[t] +d(w[t+1] - w[t]), X_{\cdot i}  \rangle ) \big\}
\end{align*}
       as defined in Step 1 of Rule 2.
For $1\leq i\leq i^*-1$, let $\alpha_i=0$. For $i\geq i^*$, let
$$\alpha_i=(\gamma d^{-1}+ \tau^{-1})^{-1}\ln \frac{\eta_i(1+\varsigma_{i^*}/\Omega_{i^*})}{\nu}\geq 0.$$
The inequality follows from the definition of $i^*$. Note that we only need to prove that $\eta_{i^*}(1+\varsigma_{i^*}/\Omega_{i^*})\geq \nu$. If $\eat_{i^*}\geq \nu$, then the above inequality holds directly. Otherwise if $\eta_{i^*}< \nu$ and $\eta_{i^*}(1+\varsigma_{i^*}/\Omega_{i^*})< \nu$, we have that $\varsigma_{i^*+1} = \varsigma_{i^*}+ \nu-\eta_{i^*}>0$ and $\Omega_{i^*+1}=\Omega_{i^*}+ \eta_{i^*}$. We also have the following inequality
\begin{align*}
\eta_{i^*}(1+\frac{\varsigma_{i^*+1}}{\Omega_{i^*+1}})-\nu  &=  \frac{\eta_{i^*} (\varsigma_{i^*}+\Omega_{i^*})-\Omega_{i^*}\nu}{\Omega_{i^*}+ \eta_{i^*}} \\
= & \frac{\eta_{i^*}(1+\varsigma_{i^*}/\Omega_{i^*})-\nu}{\Omega_{i^*}(\Omega_{i^*}+ \eta_{i^*})}<0, \\
\end{align*}
which contradicts with the definition of $i^*$.
Finally, randomly choose an index $i$, let
$$\lambda=(\gamma d^{-1}+ \tau^{-1})^{-1}\ln\frac{Z\eta_i}{\eta_i[t+1]}-\alpha_i-\tau^{-1}.$$
By the chosen of $\alpha_i$, it is not hard to check that the value of $\lambda$ is the same for any index $i$.
Thus, $\eta_i[t+1],\alpha_i$ and $\lambda$ are the unique solution of KKT conditions. So Rule 1 and Rule 2 are equivalent. By a similar argument (define suitable $\alpha_i$ and $\lambda$), we can prove that Rule 1 and Rule 3 are equivalent, which finishes the proof.
\end{proof}
\eat{
Then according to the third equation, for any two $\eta^*_i,\eta^*_j <  \nu$,  we have
$\alpha_i=\alpha_j=0$. Thus, by the first condition, we have Property 1:
\begin{equation*}
\frac{\eta^*_i}{\eta^*_j} = \frac{\tilde{\eta_i}}{\tilde{\eta_j}}
\end{equation*}
Assume  $\eta^*_i<\nu$. It  implies $\alpha_i=0$. Then for any
$\tilde{\eta}_j\leq \tilde{\eta}_i$, we have Property 2:
\begin{equation*}
 \eta^*_j=\tilde{\eta}_j\cdot \exp\{ -(\gamma d^{-1}+ \tau^{-1}) (\lambda +\alpha_j + \tau^{-1}) \}\leq \tilde{\eta}_i\cdot \exp\{ -(\gamma d^{-1}+ \tau^{-1}) (\lambda + \tau^{-1}) \}=\eta^*_i<\nu.
\end{equation*}
Next, assume $\tilde{\eta}_i\leq \tilde{\eta}_j$. We have Property 3:
$$
\frac{\eta^*_i}{\eta^*_j} \geq \frac{\tilde{\eta_i}}{\tilde{\eta_j}}.
$$
We prove this property by discussing four cases. 1) $\eta^*_i=\eta^*_j=\nu$, then Property 3 obviously holds. 2) $\eta^*_i<\eta^*_j=\nu$, we have $\alpha_j\geq 0 =\alpha_i$ which implies that
$$
 \frac{\eta^*_i}{\eta^*_j}=\frac{\tilde{\eta_i}\cdot \exp\{ -(\gamma d^{-1}+ \tau^{-1}) (\lambda +\alpha_i + \tau^{-1}) \}}{\tilde{\eta_j}\cdot \exp\{ -(\gamma d^{-1}+ \tau^{-1}) (\lambda +\alpha_j + \tau^{-1}) \}}\geq \frac{\tilde{\eta_i}}{\tilde{\eta_j}}.
$$
3) $\eta^*_j<\eta^*_i=\nu$, then by the same argument as 2), we have the following
$$
\frac{\eta^*_j}{\eta^*_i} \geq \frac{\tilde{\eta_j}}{\tilde{\eta_i}}\geq 1,
$$
which implies $\eta^*_j\geq \eta^*_i= \nu$. It is a contradiction. 4) $\eta^*_i,\eta^*_j<\nu$, the correctness of Property 3 follows from Property 2.

Finally, for any $\tilde{\eta}_i>\nu$, the following Property 3 holds: $\eta^*_i=\nu$. This is because if $\eta^*_i<\nu$, then for any $\tilde{\eta}_j\leq \tilde{\eta}_i$, we have $\eta^*_j<\mu$ by Property 2. Then by Property 1, we conclude
$$
\eta^*_j=\frac{\eta^*_i\cdot\tilde{\eta}_j}{ \tilde{\eta}_i}<\tilde{\eta}_j.
$$
On the other hand, if $\tilde{\eta}_j\geq \tilde{\eta}_i>\mu$, we have $\eta^*_j\leq \mu<\tilde{\eta}_j$. Overall, we have $\sum_j \eta^*_j< \sum_j \tilde{\eta}_j=1$, which is a contradiction.

It is not hard to see that $\eta^*$ is the unique vector satisfying $1=\sum_i \eta^*_i$, $0\leq
\eta^*_i\leq \nu$ ($\forall i$) and Property 1,2,3,4.
Moreover, we can verify that the output $\eta$ of the second update rules also satisfies $1=\sum_i \eta_i$, $0\leq \eta_i\leq \nu$ ($\forall i$) and Property 1,2,3,4. Thus, $\eta$ must equal to $\eta^*$ which is the optimal solution of the first optimization problem.
}

\begin{remark}
We analyze Rule 2 in Lemma \ref{lm:upnu}. Roughly speaking, we find a suitable value $\eta_{i^*}$, set all value $\eta_{j}>\eta_{i^*}$ to be
$\nu$, and scales up other values by some factor $1+\varsigma_{i^*}/\Omega_{i^*}$. We can verify that the running
time of Rule 2 is $O(n\log n)$ since both the
sorting time and the binary search time are $O(n\log n)$. On the other hand, recall that the running
time of Rule 3 is $O(n/\nu)$ (explained in Section \ref{sec:svmsp}). Thus, if the parameter $\nu$ is extremely small, we can use Rule 2 in practice.
\end{remark}

\begin{algorithm}[t]
  \caption{Pre-processing in Clients}
  \label{alg:pre}
\begin{algorithmic}[1]

\REQUIRE { $\calP$: $n_1$ points $x_i^+$ with label $+1$ and $\calQ$: $n_2$ points $x_i^-$ with label $-1$, distributed at $k$ clients}

\FORALL {clients in $\clientset$}{
    \STATE  $ W \leftarrow $ $d$-dimensional Hadamard Matrix
    \STATE  $D \leftarrow$ $d\times d$ diagonal matrix whose  entries are i.i.d. chosen from $\pm 1$
    \STATE $ \gamma \leftarrow \frac{\e\beta}{  2 \log n}, $ $q \leftarrow
    O(\sqrt{\log n})$
    \STATE   $ \tau  \leftarrow \frac{1}{2q}\sqrt{\frac{d }{\gamma}}, \sigma \leftarrow
    \frac{1}{2q}\sqrt{d \gamma}, \theta \leftarrow 1- \frac{1}{d + q\sqrt{d}/\sqrt{\gamma }}. $
\STATE $d$-dimension vector $ w^{(0)} \leftarrow [0,\ldots, 0] $
  }
\ENDFOR

  \FOR { client $\client \in \clientset$}{
   \STATE Assume that there are $ m_1$ points $x^+_1, \ldots, x^+_{m_1}$ and $m_2$ points $x^-_1,  \ldots, x^-_{m_2}$ maintained in $C$
   \STATE $ \client.X^{+} \leftarrow WD \cdot [x^{+}_1 , x^{+}_2, \ldots,  x^{+}_{m_1} ] $,
   \STATE $ \client.X^{-} \leftarrow WD \cdot [x^{-}_1 , x^{-}_2, \ldots,  x^{-}_{m_2} ] $,
   \STATE  $\client.\eta[-1] = \client.\eta[0] \leftarrow [ \frac{1}{n_1},  \ldots,
   \frac{1}{n_1}]^{\rm T}\in \R^{m_1}$,
   \STATE $\client.\xi[-1] = \client.\xi[0]\leftarrow  [\frac{1}{n_2},   \ldots,
   \frac{1}{n_2}]^{\rm T} \in \R^{m_2}$.
  }
\ENDFOR
\end{algorithmic}
\end{algorithm}

\section{Details for Distributed Algorithms: Saddle-DSVC}
\label{apd:dist}
This section is supplementary for Section~\ref{sec:dist}.
First, we give the pseudocode of DisSaddle-SVC. See Algorithm~\ref{alg:pre} for
the pre-processing step for each clients. Recall that we assume there are $ m_1$ points $x^+_1, x^+_2, \ldots, x^+_{m_1}$
and $m_2$ points $x^-_1, x^-_2, \ldots, x^-_{m_2}$ maintained in $C$. We use $\mathbf{1}^{m}$ to
denote a vector with all components being $1$. The initialization is as follows.
\begin{align*}
 & \client.X^{+} = WD \cdot [x^{+}_1 , x^{+}_2, \ldots,  x^{+}_{m_1} ], \\
 &\client.\eta[-1] = \client.\eta[0] = n^{-1}_1\mathbf{1}^{m_1} \\
& \client.X^{-} = WD \cdot [x^{-}_1 , x^{-}_2, \ldots,  x^{-}_{m_2} ], \\
& \client.\xi[-1] = \client.\xi[0] =n_2^{-1}\mathbf{1}^{m_2}
\end{align*}
Next, see Algorithm~\ref{alg:dist} for the interactions between
the server and clients in every iteration. Note that only $\nu$-Saddle needs the fourth round in Algorithm~\ref{alg:dist}. We use $\text{flag}_{\nu} \in \{ \mathbf{True, False}\}$  to distinguish the two cases. If we consider $\nu$-Saddle, let
$\text{flag}_{\nu}$ be \textbf{True}. Otherwise, let
$\text{flag}_{\nu}$ be \textbf{False}.

Then, we analyze the communication cost.
\eat{
\begin{reptheorem}{thm:commcost}[restated]
  The communication cost of Saddle-DSVC is $\tilde{O}(k(d +
  \sqrt{d/\e}))$.
\end{reptheorem}
}

\begin{theorem}
  \label{thm:commcost}
  The communication cost of Saddle-DSVC is $\tilde{O}(k(d +
  \sqrt{d/\e}))$.
\end{theorem}
\begin{proof}
Note that in each iteration of Algorithm~\ref{alg:dist}, the server and clients interact three
times for hard-margin SVM and $O(1/\nu)$ times for $\nu$-SVM. The communication cost of each
iteration is $O(k)$. By Theorem~\ref{thm:convergence}, it takes
$\tilde{O}(d+\sqrt{d/\e})$ iterations. Thus, the total communication cost is $\tilde{O}(k(d +
  \sqrt{d/\e}))$.
\end{proof}

\begin{center}
\hrule
\vspace{0.5ex}
\captionof{algorithm}{Saddle-DSVC}\label{alg:dist}
\vspace{-1ex}
\hrule
\vspace{0.5ex}
  \begin{algorithmic}[1]
  \FOR{$t \leftarrow 0 $ \textbf{to} $T - 1$ }{
    \STATE \emph{\# first round}
    \STATE  \textbf{Server:}  Pick an index $i^{*} \in \{1,2, \ldots, d\}$  uniformly at random and
    send $i^{*}$ to every client.

  \FOR{client $\client \in \clientset$}{
      \STATE $ \client.\delta_{i^*}^{+} \leftarrow $ $\langle  \client.X^{+}_{i^*}, \client.\eta[t] + \theta(\client.\eta[t]  - \client.\eta[t-1]) \rangle $
      \STATE $ \client.\delta^{-}_{i^*} \leftarrow $  $\langle \client.X^{-}_{i^*}, \client.\xi[t] + \theta (\client.\xi[t]  - \client.\xi[t-1]) \rangle  $

      \STATE Send $\client.\delta_{i^*}^{+} $ and $\client.\delta_{i^*}^{-} $ to server.
  }\ENDFOR

    \STATE \emph{\# second round}
  \STATE \textbf{Server:}
   Let $\server.\delta_{i^*}^{+} = \sum_{\client \in \clientset } \client.\delta_{i^*}^+ $ and
$\server.\delta_{i^*}^{-} = \sum_{\client \in \clientset }\client.\delta_{i^*}^- $. Broadcast $ \server.\delta_{i^*}^{+} $ and $ \server.\delta_{i^*}^{-} $.

 \FOR{client $\client \in \clientset$}
  {
    \STATE  $ \forall i \in [d],  w_i[t+1] \leftarrow  $
    $\left\{
          \begin{array}{ll}
            (w_i[t] +  \sigma  (\server.\delta_{i}^{+} -  \server.\delta_{i}^{-} ) )/ (\sigma
            + 1 ), & \text{if } i = i^* \\
                x
        \end{array} \right.
        $

        \STATE   $\forall j, \client.\eta_j[t+1] \leftarrow $
        $\exp \big\{ (\gamma + d\tau^{-1})^{-1}(d\tau^{-1}
            \log \client.\eta_j[t] $ \\
           $\hspace{4cm}  -  \langle  w[t] +d(w[t+1] - w[t])
            ,  \client.X^{+}_{\cdot j} \rangle ) \big \} $
        \STATE  $\forall j, \client.\xi_j[t+1] \leftarrow $
        $ \exp \big\{
          (\gamma + d\tau^{-1})^{-1}(d\tau^{-1} \log \client.\xi_j[t] $ \\
         $ \hspace{4cm}+  \langle  w[t] +d(w[t+1] -
          w[t]), \client.X^{-}_{\cdot j} \rangle ) \big\}  $
        \STATE
        $\client.Z^+  \leftarrow \sum_{j}  \client.\eta_j[t+1] ,$
         $\client.Z^-  \leftarrow \sum_{j}  \client.\xi_j[t+1] $
        \STATE Send $\client.Z^+ $ and $\client.Z^-$ to server
}\ENDFOR

    \STATE \emph{\# third round}
\STATE \textbf{Server:}  Let $(\server.Z^+, \server.Z^{-}) \leftarrow \sum_{\client \in \clientset}(\client.Z^+,\client.Z^{-} ) $,
and broadcast $\server.Z^+ $ and $\server.Z^-$.

   \FOR{client $\client \in \clientset$}
  {
    \STATE $ \client.\eta_j[t+1] \leftarrow \client.\eta_j[t+1] / \server.Z^+  $, $\forall
     \client.\xi_j[t+1] \leftarrow  \client.\xi_j[t+1] / \server.Z^- $
  }  \ENDFOR

    \STATE \emph{\# fourth round, only for $\nu$-Saddle. $\text{flag}_{\nu}$ is true if use the code
    for $\nu$-Saddle}
  \IF {$\text{flag}_{\nu}$\textbf{ is True}  }
{
  \REPEAT
  {
    \FOR{client $\client \in \clientset$}
    {

      \STATE $\client.\varsigma^{+} = \sum_{\eta_i > \nu} (\eta_i - \nu) $,
      $\client.\Omega^{+} =  \sum_{\eta_i < \nu} \eta_i  $.
      \STATE       $\client.\varsigma^{-} = \sum_{\xi_j > \nu} (\xi_j - \nu) $,
      $\client.\Omega^{-} =  \sum_{\xi_j < \nu} \xi_j  $.
      \STATE Send $\client.\varsigma^{+}, \client.\varsigma^{-}, \client.\Omega^{+},
      \client.\Omega^{-}$ to server.
    } \ENDFOR

    \STATE  \textbf{Server:} $(\server.\varsigma^{+}, \server.\varsigma^{-}, \server.\Omega^{+}, \server.\Omega^{-}) \leftarrow $ \\
    $\hspace{4cm}\sum_{\client \in \clientset} (\client.\varsigma^{+},  \client.\varsigma^{-}, \client.\Omega^{+}, \client.\Omega^{-})$.
      \FOR{client $\client \in \clientset$}{
              \STATE $\forall i$, $\mathbf{ if }  \; \eta_i > \nu,  \mathbf{ then } \; \eta_i = \nu$; \\
              $\forall i$, $\mathbf{ if } \; \eta_i < \nu,  \mathbf{ then } \;  \eta_i = \eta_i (1 + \server.\varsigma^{+} / \server.\Omega^{+} )$
              \STATE $\forall j$, $\mathbf{ if }  \; \xi_j > \nu,  \mathbf{ then } \; \xi_j = \nu $; \\
               $\forall j$,    $\mathbf{ if } \; \xi_j < \nu,  \mathbf{ then } \;  \xi_j =\xi_j (1 +\server.\varsigma^{-} / \server.\Omega^{-}) $
      }\ENDFOR
  }  \UNTIL{ $\server.\varsigma^{+}$ \AND $\server.\varsigma^{-}$ \textbf{are zeroes}}
}
\ENDIF
}
\ENDFOR
\end{algorithmic}
\vspace{1ex}
\hrule
\end{center}

Liu et al. \cite{liu2016distributed} proved a theoretical lower bound of the communication cost for
distributed SVM as follows.  Note that the statement of
  Theorem~\ref{thm:bound} is not exactly the same as the Theorem 6 in ~\citep{liu2016distributed}. This is because they omit the case that $d < 1/\e$. We prove that
  they are equivalent briefly.  Note that
if $ d = \Theta( 1/\e)$, the communication lower bound is $\Omega ( k (d + \sqrt{d/\e}))$ which matches the
communication cost of our algorithm Saddle-DSVC.

\begin{theorem} [Theorem 6 in~\cite{liu2016distributed}]
  \label{thm:bound}
  Consider a set of $d$-dimension points distributed at $k$ clients.
  The communication cost to achieve a $(1-\e)$-approximation of the distributed SVM problem is at least $\Omega(k \min\{ d,  1/\e \} )$
  for any $\e>0$.
\end{theorem}

\eat{
For the lower bound, readers may notice that the Theorem~\ref{thm:bound} is not exactly the same as
Theorem 6 in Liu et al.~\cite{liu2016distributed}. This is because they omit the case that $d < 1/\e$.

\begin{reptheorem}{thm:bound}[restated]
  Consider a set of $d$-dimension points distributed at $k$ clients.
  The communication cost to achieve a $(1-\e)$-approximation of the distributed SVM problem is at least $\Omega(k \min\{ d,  1/\e \} )$
  for any $\e>0$.
\end{reptheorem}
}

\begin{proof}[Proof Sketch]
In Theorem 6 of~\cite{liu2016distributed}, the authors obtain a lower bound $\Omega(kd)$ if $\e \leq
(\sqrt{17} - 4)/16\d ) $. Their proof can be extended to the case $\e \geq  (\sqrt{17} -
4)/16\d ) $. In this case, we can make a reduction from the $k$-OR problem in which each client
maintains a $( (\sqrt{17} -
4)/16\e ) $-bit vector instead of a $d$-bit vector. As the proof of Theorem 6
in~\cite{liu2016distributed}, we can obtain a lower bound $\Omega(k/\e)$, which proves the theorem.
\end{proof}

\section{Missing Proofs}
\label{apd:series}

\begin{figure}[h]
  \centering
  \includegraphics[width = 0.6\textwidth]{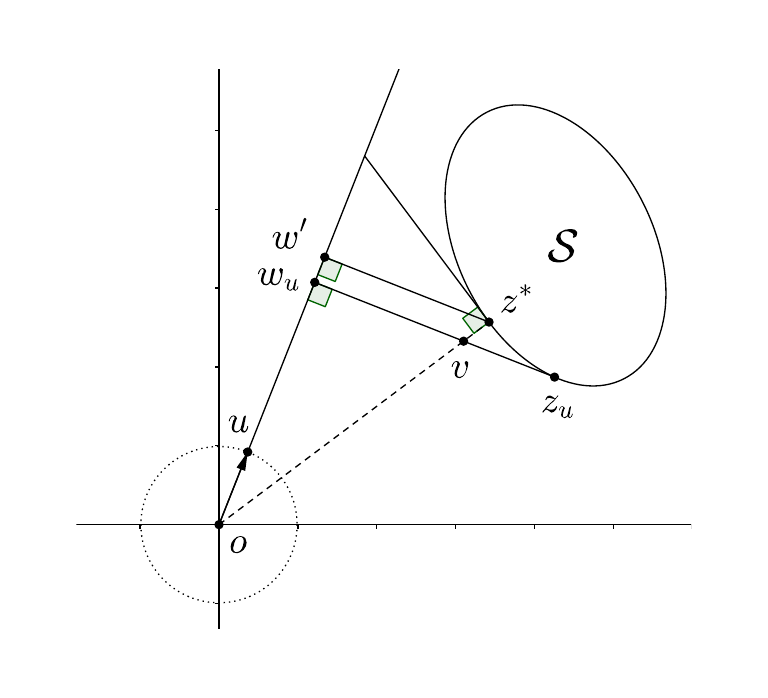}
  \caption{The equivalence between C-Hull and saddle point optimization~\eqref{eq:saddle}.}
  \label{fig:eq}
\end{figure}
\begin{replemma}{lm:svmtosp}[restated]
  Problem C-Hull~\eqref{eq:chull}  is equivalent to the saddle point
optimization~\eqref{eq:saddle}.
\end{replemma}
\begin{proof}
Consider the saddle point optimization~\eqref{eq:saddle}. First, note that
  \begin{equation*}
    w^{\rm T} A \eta -   w^{\rm T} B \xi  -  \frac{1}{2}\| w \|^2  =
   w^{ \rm T} (A \eta - B \xi ) - \frac{1}{2}\| w \|^2
  \end{equation*}
The range of the term $ (A \eta - B \xi ) $   for $ \eta  \in \Delta_{n_1} ,  \xi \in \Delta_{n_2}$  is a convex set, denoted by $\calS$. Since the
convex hulls of $\calP$  and  $\calQ$ are linearly separable, we have $ 0 \notin \calS$. Denote $ \phi
(w, z) = w^{  \rm T} z - \frac{1}{2}\| w \|^2  $ for any $w \in \R^{d}, z \in \calS$. Then
\eqref{eq:saddle} is equivalent to $\max_w \min_{z\in \calS}\phi(w,z)$. Note that
$$
\max_w \min_{z\in  \calS}\phi(w,z)\geq \min_{z\in \calS}\phi(\mathbf{0}^d,z)=0.
$$
 Thus, we only need to consider those
directions $w\in \R^d$ such that there exists a point $z\in \calS$ with $w^T z\geq 0$. We use
$\mathcal{W}$ to denote the collection of such directions.

Let $u$ be a unit vector in $\mathcal{W}$. Denote

$$z_{u}:=\arg\min_{z\in \calS} \phi(u,z)=\arg\min_{z\in \calS} u^T z.
$$

 By this definition, $z_u$ is the point with smallest projection distance to $u$ among $\calS$ (see Figure~\ref{fig:eq}). Observe that if a direction $w=c\cdot u$ ($c>0$), then we have $\arg\min_z \phi(w,z)=\arg\min_z \phi(u,z)$. Also note that

 \begin{equation*}
  \max_{w=c\cdot u:c>0} w^{ \rm T} z_{u} - \frac{1}{2}\| w  \|^2  = \max_{w=c\cdot u:c>0} \frac{1}{2} (-\|w - z_{u} \|^2  +
  \|  z_{u} \|^2).
 \end{equation*}
 Let

$$w_{u}:=\arg\max_{w=c\cdot u:c>0}\phi(w,z_u)= \arg \min_{w=c\cdot u:c>0} \|w - z_{u}
\|^2.
$$

$w_{u}$ is the projection point of $z_u$ to the line $ou$, where $o$ is the origin. See Figure~\ref{fig:eq} for an example. Overall, we have
\begin{align*}
  &\max_{w} \min_{ \eta  \in \Delta_{n_1} ,  \xi \in \Delta_{n_2} } w^{\rm T} (A \eta - B \xi)-
  \frac{1}{2}\| w\|^2  \\
  =  & \max_{u\in \mathcal{W}:\|u\|=1}  \frac{1}{2} ( - \| w_{u} - z_{u} \|^2 +  \|
  z_{u} \|^2)  \\
  =  & \max_{u\in \mathcal{W}:\|u\|=1} \frac{1}{2} \| w_{u} \|^2.
\end{align*}
The last equality is by the Pythagorean theorem. Let $z^*$ be the closest point in $\calS$ to the
origin point. Next, we show that $\max_{u\in \mathcal{W}:\|u\|=1} \| w_{u} \|^2=\|z^*\|^2 $. Given a
unit vector $u\in \mathcal{W}$, define $w'$ to be the projection point of $z^*$ to the line $ou$. By the definition of $z_u$ and $w_u$, we have that $\max_{u}  \| w_{u} \|^2 \leq \|w'\|^2\leq \| z^* \|^2$. Moreover, let $u=z^*/\|z^*\|$. In this case, we have $\| w_{u} \|^2  = \| z^* \|^2$. Thus, we conclude that $\max_{u}  \| w_{u} \|^2 = \| z^* \|^2$.

Overall, we prove that
\begin{align*}
  & \max_{u\in \mathcal{W}:\|u\|=1} \frac{1}{2}\| w_u\|^2 = \frac{1}{2} \| z^*\|^2   \min_{z\in \calS}\frac{1}{2}\|z\|^2 \\
   =  & \min_{ \eta  \in \Delta_{n_1} ,   \xi \in \Delta_{n_2} }  \frac{1}{2}\| A \eta - B \xi \|^2
\end{align*}
Thus, C-Hull~\eqref{eq:chull}  is equivalent to the saddle point optimization~\eqref{eq:saddle}.
\end{proof}

\begin{replemma}{lm:approx}[restated]
Let $(w^*, \eta^*, \xi^*) $ and $(w^{\circ},   \eta^{\circ}, \xi^{\circ})$ be the optimal solution
of saddle point  optimizations~\eqref{eq:saddle} and ~\eqref{eq:saddle2} respectively. Define $\OPT$ as in \eqref{eq:saddle}.  Define
  \begin{equation*}
g(w) :=  \min  \limits_{ \eta  \in \Delta_{n_1} ,  \xi \in \Delta_{n_2} }   w^{\rm T} A  \eta -   w^{\rm T} B \xi
  -  \frac{1}{2}\| w \|^2.
  \end{equation*}
 Then  $g(w^*) - g(w^{\circ}) \leq \e \OPT$ (note that $g(w^*)=\OPT$).
\end{replemma}

\begin{proof}
  Let
  \begin{align*}
    \phi(w, \eta, \xi) &= w^{\rm T} A \eta - w^{\rm T} B \xi - \frac{1}{2}\| w \|^2 , \\
  \phi_{\gamma}(w, \eta, \xi) &= \phi(w, \eta, \xi) + \gamma H(\eta) + \gamma H(\xi), \\
 \tilde{\eta}, \tilde{\xi}  &= \argmin_{\eta \in \Delta_{n_1} , \xi \in \Delta_{n_2}} \phi(w^{\circ}, \eta, \xi ).
  \end{align*}

 By the definition of saddle points, we have
  \begin{eqnarray*}
	g(w^{\circ})& = & \phi(w^{\circ}, \tilde{\eta}, \tilde{\xi}) = \phi_{\gamma}(w^{\circ},
    \tilde{\eta}, \tilde{\xi}) -  \gamma H(\tilde{\eta}) - \gamma H(\tilde{\xi})   \\
	&\geq& \phi_{\gamma}(w^{\circ}, \eta^{\circ}, \xi^{\circ}) -  \gamma H(\tilde{\eta}) - \gamma H(\tilde{\xi}) \\
	&\geq &\phi_{\gamma}(w^{*}, \eta^{\circ}, \xi^{\circ}) -  \gamma	H(\tilde{\eta}) - \gamma H(\tilde{\xi})   \\
	&=& \phi (w^{*}, \eta^{\circ}, \xi^{\circ}) -  \gamma H(\tilde{\eta}) -
	\gamma H(\tilde{\xi})   +\gamma H(\eta^{\circ}) + \gamma H(\xi^{\circ})  \\
	&\geq&  \phi (w^{*}, \eta^{*}, \xi^{*}) -  \gamma H(\tilde{\eta}) -
	\gamma H(\tilde{\xi})   +  \gamma H(\eta^{\circ}) + \gamma H(\xi^{\circ})  \\
	&=& g(w^{*})  -  \gamma H(\tilde{\eta}) -
	\gamma H(\tilde{\xi})   +  \gamma H(\eta^{\circ}) + \gamma H(\xi^{\circ}) 	\\
	&\geq& g(w^{*})  -  \gamma H(\tilde{\eta}) -
	\gamma H(\tilde{\xi}).
  \end{eqnarray*}
  Note that  entropy function satisfies $0 \leq H(u) \leq \log n$ for any $u\in \Delta_n$. Thus,   $ \gamma
  H(\tilde{\eta}) + \gamma H(\tilde{\xi})  \leq   \frac{\e \beta}{2	\log n}  \cdot (\log n_1 + \log
  n_2) \leq \e \OPT$.
  Overall, we prove that $g(w^*) - g(w^{\circ}) \leq \e \OPT$.
\end{proof}

\begin{replemma}{lm:eq}[restated]
  RC-Hull~\eqref{eq:reducedhull} is equivalent to the following saddle point optimization.
\begin{equation*}
\OPT =  \max \limits_{w}  \min \limits_{ \eta  \in \calD_{n_1} , \; \xi \in \calD_{n_2} }  w^{\rm T} A \eta -   w^{\rm T} B \xi  -  \frac{1}{2}\| w \|^2.
\end{equation*}
\end{replemma}

\begin{proof}
The proof is almost the same to the proof of Lemma \ref{lm:svmtosp}. The only difference is that the range of the term $ (A \eta - B \xi ) $ is another convex set defined by $ \eta  \in \calD_{n_1} ,  \xi \in \calD_{n_2}$.
\end{proof}

\begin{lemma}
\label{lm:nuapprox}
Let $(w^*, \eta^*, \xi^*) $ and $(w^{\circ},   \eta^{\circ}, \xi^{\circ})$ be the optimal solution
of saddle point  optimizations~\eqref{eq:nusaddle} and ~\eqref{eq:nusaddle2} respectively. Define $\OPT$ as in \eqref{eq:nusaddle}. Define
  \begin{equation*}
g(w) :=  \min  \limits_{ \eta  \in \calD_{n_1} ,  \xi \in \calD_{n_2} }   w^{\rm T} A  \eta -   w^{\rm T} B \xi
  -  \frac{1}{2}\| w \|^2.
  \end{equation*}
 Then  $g(w^*) - g(w^{\circ}) \leq \e \OPT$.
\end{lemma}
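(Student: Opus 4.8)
The plan is to reuse the argument of Lemma~\ref{lm:approx} almost line for line. The only difference between the regularized problem~\eqref{eq:saddle2} of HM-Saddle and the regularized problem~\eqref{eq:nusaddle2} of $\nu$-Saddle is that the feasible sets of the dual variables shrink from the simplices $\Delta_{n_1},\Delta_{n_2}$ to the reduced-hull domains $\calD_{n_1},\calD_{n_2}$. Since each $\calD_{n_i}\subseteq\Delta_{n_i}$ is still convex and compact, and the objective $\phi_{\gamma}(w,\eta,\xi):=w^{\rm T}A\eta-w^{\rm T}B\xi-\tfrac12\|w\|^2+\gamma H(\eta)+\gamma H(\xi)$ is concave in $w$ and strictly convex and continuous in $(\eta,\xi)$, a saddle point over $\R^{d}\times\calD_{n_1}\times\calD_{n_2}$ still exists (Sion's minimax theorem), so the inequalities characterizing $(w^{\circ},\eta^{\circ},\xi^{\circ})$ remain valid. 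No step of the earlier proof uses anything about $\Delta_{n_i}$ beyond convexity and the bound on the entropy, so the whole chain transfers.

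Concretely, I would set $\phi(w,\eta,\xi)=w^{\rm T}A\eta-w^{\rm T}B\xi-\tfrac12\|w\|^2$ and $\tilde\eta,\tilde\xi:=\argmin_{\eta\in\calD_{n_1},\,\xi\in\calD_{n_2}}\phi(w^{\circ},\eta,\xi)$, so that $g(w^{\circ})=\phi(w^{\circ},\tilde\eta,\tilde\xi)=\phi_{\gamma}(w^{\circ},\tilde\eta,\tilde\xi)-\gamma H(\tilde\eta)-\gamma H(\tilde\xi)$. Then I would run exactly the chain of (in)equalities of Lemma~\ref{lm:approx}: $\phi_{\gamma}(w^{\circ},\tilde\eta,\tilde\xi)\ge\phi_{\gamma}(w^{\circ},\eta^{\circ},\xi^{\circ})$ by optimality of $(\eta^{\circ},\xi^{\circ})$ for $\min_{\calD_{n_1}\times\calD_{n_2}}\phi_{\gamma}(w^{\circ},\cdot,\cdot)$; then $\phi_{\gamma}(w^{\circ},\eta^{\circ},\xi^{\circ})\ge\phi_{\gamma}(w^{*},\eta^{\circ},\xi^{\circ})$ by the saddle-point property with $w=w^{*}$; then expand $\phi_{\gamma}(w^{*},\eta^{\circ},\xi^{\circ})=\phi(w^{*},\eta^{\circ},\xi^{\circ})+\gamma H(\eta^{\circ})+\gamma H(\xi^{\circ})$; and finally $\phi(w^{*},\eta^{\circ},\xi^{\circ})\ge\phi(w^{*},\eta^{*},\xi^{*})=g(w^{*})$ by optimality of $(\eta^{*},\xi^{*})$. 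Combining gives $g(w^{\circ})\ge g(w^{*})-\gamma H(\tilde\eta)-\gamma H(\tilde\xi)+\gamma H(\eta^{\circ})+\gamma H(\xi^{\circ})$, hence $g(w^{*})-g(w^{\circ})\le\gamma\bigl(H(\tilde\eta)+H(\tilde\xi)\bigr)$ after discarding the (appropriately signed) $H(\eta^{\circ}),H(\xi^{\circ})$ terms as in Lemma~\ref{lm:approx}. Bounding the entropy of any point of $\calD_n\subseteq\Delta_n$ by $\log n$ in magnitude and recalling $\gamma=\e\beta/(2\log n)$ then yields $g(w^{*})-g(w^{\circ})\le\tfrac{\e\beta}{2\log n}(\log n_1+\log n_2)\le\e\OPT$.

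I do not anticipate a genuine obstacle: this is a copy of Lemma~\ref{lm:approx} with $\Delta$ replaced by $\calD$. The two places that deserve a moment's care are the minimax existence of the saddle point over the reduced domains (needed to justify the step $\phi_{\gamma}(w^{\circ},\eta^{\circ},\xi^{\circ})\ge\phi_{\gamma}(w^{*},\eta^{\circ},\xi^{\circ})$), which holds because the $\calD_{n_i}$ are convex and compact and $\phi_{\gamma}$ is concave–convex; and the closing estimate $\tfrac{\e\beta}{2\log n}(\log n_1+\log n_2)\le\e\OPT$, where $\OPT$ now denotes the RC-Hull value in~\eqref{eq:nusaddle}. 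The latter is smaller than the C-Hull value, but the bound still goes through under the normalization $\|x_i\|^2\le 1$, since $\beta$ (the ratio of the smallest to the largest pairwise distance) is at most a constant multiple of $\OPT$, exactly as used in the hard-margin case. Everything else in the argument is unchanged.
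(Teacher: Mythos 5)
Your proposal is correct and matches the paper's proof, which simply observes that $\calD_{n_1}\subseteq\Delta_{n_1}$ and $\calD_{n_2}\subseteq\Delta_{n_2}$ are convex polytopes and that the argument of Lemma~\ref{lm:approx} carries over verbatim. You have merely written out explicitly the verification that the paper leaves to the reader, including the two points (saddle-point existence on the reduced domains and the final entropy-versus-$\OPT$ estimate) that are the only places where anything could conceivably change.
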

\begin{proof}
  Note that $ \calD_{n_1} $ is a convex polytope contained in $\Delta_{n_1}$ and $ \calD_{n_2} $ is a convex polytope contained in
  $\Delta_{n_2}$. It is not hard to verify that the proof of Lemma~\ref{lm:approx} still
  holds for  $ \calD_{n_1} $ and $\calD_{n_2}$.
\end{proof}

\subsection{Proof of Theorem~\ref{thm:convergence}}
\label{apd:cvg}
For preparation, we give two useful Lemmas~\ref{lm:A1}
and~\ref{lm:A2}.  Recall that  $V_x(y)$ is the Bregman divergence
function which is defined as $  H(y) - \langle \nabla H(x), y-x \rangle - H(x)$.

The two lemmas generalize  Lemma A.1 and Lemma A.2 in~\cite{allen2016optimization} by changing the domain $\Delta_{m}$ to a convex polytope $\calS_{m}$ contained in $\Delta_{m}$. However, refer to the proofs of Lemma A.1 and Lemma A.2, it still work for the general version.

\begin{lemma}
  \label{lm:A1}
  Let $x_2 = \argmin_{z \in \calS_{m} } \left\{ \frac{V_{x_1}(z)}{\tau} + \gamma H(z) \right\}$.
  Let $\calS_{m}$ be a convex polytope contained in $\Delta_{m}$. Then for every   $u \in \calS_m $, we have
  \begin{align*}
    & \frac{1}{\tau} V_{x_1}(u) - \left(  \frac{1}{\tau} + \eta \right) V_{x_2}(u) - \frac{1}{2\tau}
    \| x_2 - x_1 \|_1^2   \\
    & \geq  \gamma H(x_2) - \gamma H(u).
  \end{align*}
\end{lemma}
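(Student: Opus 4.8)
The plan is to reproduce the standard one-step analysis of a mirror-descent / proximal update — the same argument used to prove Lemma~A.1 in~\cite{allen2016optimization} — and to observe that this argument never uses anything about the feasible region beyond convexity, so it survives verbatim when $\Delta_m$ is replaced by the subpolytope $\calS_m\subseteq\Delta_m$. Throughout I would use that the entropy $H$ is $1$-strongly convex with respect to $\|\cdot\|_1$ on the simplex, and that the associated Bregman divergence $V_x(y)=H(y)-H(x)-\langle\nabla H(x),y-x\rangle$ satisfies $\nabla_z V_x(z)=\nabla H(z)-\nabla H(x)$.

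First I would invoke the first-order optimality condition for $x_2$. Since $x_2$ minimizes the convex function $f(z):=\tfrac1\tau V_{x_1}(z)+\gamma H(z)$ over the convex set $\calS_m$, and $f$ is differentiable at $x_2$, we get $\langle\nabla f(x_2),u-x_2\rangle\ge 0$ for all $u\in\calS_m$, i.e.
\[
\tfrac1\tau\langle\nabla H(x_2)-\nabla H(x_1),\,u-x_2\rangle+\gamma\langle\nabla H(x_2),\,u-x_2\rangle\ \ge\ 0 .
\]
Then I would substitute two elementary Bregman identities: the three-point identity $\langle\nabla H(x_2)-\nabla H(x_1),u-x_2\rangle=V_{x_1}(u)-V_{x_2}(u)-V_{x_1}(x_2)$, and $\langle\nabla H(x_2),u-x_2\rangle=H(u)-H(x_2)-V_{x_2}(u)$. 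Collecting terms gives
\[
\tfrac1\tau V_{x_1}(u)-\Big(\tfrac1\tau+\gamma\Big)V_{x_2}(u)-\tfrac1\tau V_{x_1}(x_2)\ \ge\ \gamma H(x_2)-\gamma H(u) .
\]

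To finish, I would replace $-\tfrac1\tau V_{x_1}(x_2)$ by the smaller-magnitude quantity $-\tfrac1{2\tau}\|x_2-x_1\|_1^2$: by Pinsker's inequality (equivalently, $1$-strong convexity of $H$ in $\|\cdot\|_1$), $V_{x_1}(x_2)\ge\tfrac12\|x_2-x_1\|_1^2$, so the left-hand side only increases, yielding exactly the claimed bound. All of this is routine; the one place that needs a word of care is the first-order optimality condition, since $\nabla H$ is singular on the boundary of $\Delta_m$ — but the $V_{x_1}$ term forces the minimizer $x_2$ into the relative interior, where $f$ is differentiable. The only other thing to check is that passing from $\Delta_m$ to $\calS_m$ changes nothing, which is immediate because the computation uses solely convexity of $\calS_m$ together with the fact that $H$, $V_{x_1}$, and $V_{x_2}$ are all defined on $\calS_m\subseteq\Delta_m$. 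I expect this last bookkeeping point, rather than any real mathematics, to be the only "obstacle."
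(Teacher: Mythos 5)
Your proposal is correct and follows essentially the same route as the paper, which does not write out the argument at all but simply asserts that the proof of Lemma~A.1 in the cited work of Allen-Zhu et al.\ carries over verbatim once $\Delta_m$ is replaced by a convex polytope $\calS_m\subseteq\Delta_m$ --- precisely the observation (first-order optimality over a convex set, three-point identity, strong convexity of $H$ in $\|\cdot\|_1$) that you make explicit. Your derivation also confirms that the $\eta$ appearing in the paper's statement of the lemma should be read as $\gamma$, consistent with what the optimality condition actually yields.
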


\begin{lemma}
  \label{lm:A2}
  Let $ x = \argmin_{z \in \calS_m } \left\{ H(z) \right\} $. Let $\calS_{m}$ be a convex polytope contained in
  $\Delta_{m}$. Then for all $u \in \calS_m $,
  \begin{equation*}
    H(u) - H(x) \geq  V_{x}(u).
  \end{equation*}
\end{lemma}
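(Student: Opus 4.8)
The plan is to recognize the asserted inequality as precisely the first‑order optimality condition for $x$ as a constrained minimizer of the convex function $H$ over $\calS_m$, which is exactly how the analogous Lemma~A.2 of \cite{allen2016optimization} proceeds. First I would unfold the definition of the Bregman divergence: $V_x(u) = H(u) - \langle \nabla H(x), u - x\rangle - H(x)$, so the claim $H(u) - H(x) \ge V_x(u)$ is literally equivalent to $\langle \nabla H(x), u - x\rangle \ge 0$ for every $u \in \calS_m$. Thus the whole lemma reduces to establishing this variational inequality.

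The one place that requires care — and the step I expect to be the main obstacle — is checking that $\nabla H(x)$ is even well defined, i.e.\ that the minimizer $x$ has all coordinates strictly positive, since $t \mapsto t\log t$ has derivative $\log t + 1 \to -\infty$ at $t = 0^+$. I would argue this by contradiction: if $x_i = 0$ for some $i$, then since $\|x\|_1 = 1$ there is some $j$ with $x_j > 0$, and the perturbation $x + t(e_i - e_j)$ stays in $\calS_m$ for all sufficiently small $t > 0$ (it preserves $\|\cdot\|_1$, keeps $z_i, z_j \ge 0$, and respects any box bound $z_i \le \nu$ present in $\calD_{n_1}$ or $\calD_{n_2}$). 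Along this direction the only changing terms of $H$ are $t\log t + (x_j - t)\log(x_j - t)$, whose derivative at $t = 0^+$ is $\log t - \log(x_j - t) \to -\infty$; hence $H$ strictly decreases, contradicting minimality of $x$. Therefore $x$ lies in the relative interior of $\Delta_m$ and $\nabla H(x) = (\log x_i + 1)_i$ exists.

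With differentiability secured the conclusion is routine: for any $u \in \calS_m$, convexity of $\calS_m$ gives $x + t(u - x) \in \calS_m$ for $t \in [0,1]$, and minimality gives $H\big(x + t(u-x)\big) \ge H(x)$; dividing by $t$ and letting $t \downarrow 0$ yields $\langle \nabla H(x), u - x\rangle \ge 0$, which is exactly what we needed. Equivalently, one can phrase the whole argument as the generalized Pythagorean inequality for Bregman divergences: since $\langle \nabla H(z_0), z - z_0\rangle = 0$ for $z_0$ the uniform distribution (both $z$ and $z_0$ have unit $\ell_1$‑norm), minimizing $H$ over $\calS_m$ coincides with the Bregman projection of $z_0$ onto $\calS_m$, and the three‑point inequality $D_H(u, z_0) \ge D_H(u, x) + D_H(x, z_0)$ rearranges directly to $H(u) - H(x) \ge V_x(u)$. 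Replacing the full simplex $\Delta_m$ by the sub‑polytope $\calS_m$ changes nothing in this reasoning, since only convexity of the domain (not its explicit description) is ever used.
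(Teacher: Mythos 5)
Your proof is correct, and it is in fact more complete than what the paper itself supplies: the paper gives no proof of Lemma~\ref{lm:A2}, merely asserting that the proof of Lemma~A.2 in \cite{allen2016optimization} still works when $\Delta_m$ is replaced by a sub-polytope $\calS_m$. Your reduction of the claim to the first-order optimality condition $\langle \nabla H(x), u-x\rangle \ge 0$ is exactly the intended argument, and your explicit verification that the minimizer has strictly positive coordinates (so that $\nabla H(x)$ is finite) addresses a point the paper glosses over entirely. One caveat worth flagging: your positivity argument relies on the direction $e_i - e_j$ being feasible at $x$ for small $t>0$, which is true for the polytopes the paper actually uses ($\Delta_{n_1}$, $\Delta_{n_2}$, $\calD_{n_1}$, $\calD_{n_2}$, where the minimizer is the feasible uniform distribution) but not for an arbitrary convex polytope contained in $\Delta_m$ --- e.g.\ a face of the simplex on which some coordinate is forced to equal $0$, in which case $\nabla H(x)$ is not finite and $V_x(u)$ itself requires a convention. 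This is really an imprecision in the lemma's statement rather than a gap in your argument, and your proof covers every instance in which the lemma is invoked.
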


Combing the above lemmas and  almost the same analysis as in Theorem 2.2
in~\cite{allen2016optimization}, we obtain the following Theorem~\ref{thm:allen}.

\begin{theorem}
  \label{thm:allen}
 After $T $ iterations of Algorithm~\ref{alg:update} (both
HM-Saddle and $\nu$-Saddle versions),  we obtain a directional vector $w[T]\in \R^d$ satisfying that
  \begin{align*}
&(\tau^{-1} + 2\gamma d^{-1}) \mathbb{E} \left[ V_{\eta[T]} ( \eta^{\circ} )  +
  V_{\xi[T]} ( \xi^{\circ} ) \right]   \\
  & \qquad + ((4\sigma)^{-1} + 1) \mathbb{E} [\|w^{\circ} - w[T] \|^2 ]  \\
\leq & \theta^{T} \cdot \left(  2\left( \tau^{-1} + 2\gamma d^{-1}\right) \log n
+ ((2\sigma)^{-1} + 1) \|  w^{\circ}\|^2 \right),
  \end{align*}
where  $ \tau  \leftarrow \frac{1}{2q}\sqrt{\frac{d}{\gamma}}, \sigma \leftarrow
  \frac{1}{2q}\sqrt{d \gamma}, \theta \leftarrow 1-\frac{1}{d + q\sqrt{d}/\sqrt{\gamma }}$, for some $q =
  O(\sqrt{\log n}).
$
\end{theorem}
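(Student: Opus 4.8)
The plan is to follow the accelerated primal--dual coordinate analysis behind Theorem~2.2 of~\cite{allen2016optimization}, specialized to the regularized objective $\phi_\gamma(w,\eta,\xi)=w^{\rm T}X^{+}\eta-w^{\rm T}X^{-}\xi+\gamma H(\eta)+\gamma H(\xi)-\tfrac12\|w\|^{2}$ over $\calS_1\times\calS_2$. Because Lemmas~\ref{lm:A1} and~\ref{lm:A2} hold for an arbitrary convex polytope $\calS_m\subseteq\Delta_m$, the argument covers the HM-Saddle case ($\calS_m=\Delta_m$) and the $\nu$-Saddle case ($\calS_m=\calD_m$) with no change, so I would treat both at once. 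The only structural fact about the data I would use is that after the Walsh--Hadamard preprocessing every entry of $X^{+},X^{-}$ has magnitude at most $q/\sqrt d$ with $q=O(\sqrt{\log n})$, which is exactly what makes $\tau=\tfrac1{2q}\sqrt{d/\gamma}$, $\sigma=\tfrac1{2q}\sqrt{d\gamma}$ and $\theta=1-\tfrac1{d+q\sqrt d/\sqrt\gamma}$ compatible (note $\tau\sigma=d/(4q^{2})$). I would introduce a Lyapunov potential
\[
\Psi_t:=\Bigl(\tfrac1\tau+\tfrac{2\gamma}{d}\Bigr)\bigl(V_{\eta[t]}(\eta^{\circ})+V_{\xi[t]}(\xi^{\circ})\bigr)+\Bigl(\tfrac1{4\sigma}+1\Bigr)\|w^{\circ}-w[t]\|^{2},
\]
augmented with extrapolation terms carrying the momentum corrections $\theta(\eta[t]-\eta[t-1])$, $\theta(\xi[t]-\xi[t-1])$ and $d(w[t+1]-w[t])$, chosen so that the mixed terms arising below telescope across iterations. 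The goal is the one-step contraction $\Exp[\widehat\Psi_{t+1}]\le\theta\,\Exp[\widehat\Psi_t]$ for the augmented potential $\widehat\Psi_t$; iterating it $T$ times gives $\Exp[\widehat\Psi_T]\le\theta^{T}\widehat\Psi_0$, and since $w[0]=\mathbf0$ and $V_{\eta[0]}(\eta^{\circ}),V_{\xi[0]}(\xi^{\circ})\le\log n$ (the uniform start maximizes entropy), $\widehat\Psi_0\le2(\tfrac1\tau+\tfrac{2\gamma}{d})\log n+(\tfrac1{2\sigma}+1)\|w^{\circ}\|^{2}$, which is precisely the right-hand side of the theorem.

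For the one-step bound I would combine two estimates. The $w$-update (Line~4) changes only coordinate $i^{*}$ and is the proximal maximizer of $(\delta^{+}_{i^{*}}-\delta^{-}_{i^{*}})w_{i^{*}}-\tfrac12 w_{i^{*}}^{2}$ with regularizer $\tfrac1{2\sigma}(w_{i^{*}}-w_{i^{*}}[t])^{2}$, so the three-point inequality for a strongly concave proximal step gives, on that coordinate, $(\delta^{+}_{i^{*}}-\delta^{-}_{i^{*}})(w^{\circ}_{i^{*}}-w_{i^{*}}[t+1])-\tfrac12(w^{\circ}_{i^{*}})^{2}+\tfrac12 w_{i^{*}}[t+1]^{2}\le\tfrac1{2\sigma}(w^{\circ}_{i^{*}}-w_{i^{*}}[t])^{2}-(\tfrac1{2\sigma}+\tfrac12)(w^{\circ}_{i^{*}}-w_{i^{*}}[t+1])^{2}-\tfrac1{2\sigma}(w_{i^{*}}[t+1]-w_{i^{*}}[t])^{2}$; taking expectation over $i^{*}\sim\mathrm{Unif}[d]$ replaces the coordinate distances $(w^{\circ}_{i^{*}}-w_{i^{*}}[\,\cdot\,])^{2}$ by $\tfrac1d\|w^{\circ}-w[\,\cdot\,]\|^{2}$ (this is the origin of the additive ``$d$'' in $\theta$). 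The $\eta$-update (Line~5) is handled by Lemma~\ref{lm:A1} applied with $x_1=\eta[t]$, $x_2=\eta[t+1]$, $u=\eta^{\circ}$, entropy weight $\gamma/d$ and linear part $c=\tfrac1d(X^{+})^{\rm T}(w[t]+d(w[t+1]-w[t]))$, giving $\langle c,\eta[t+1]-\eta^{\circ}\rangle+\tfrac\gamma d H(\eta[t+1])-\tfrac\gamma d H(\eta^{\circ})\le\tfrac1\tau V_{\eta[t]}(\eta^{\circ})-(\tfrac1\tau+\tfrac\gamma d)V_{\eta[t+1]}(\eta^{\circ})-\tfrac1{2\tau}\|\eta[t+1]-\eta[t]\|_1^{2}$, and symmetrically for $\xi$, with Lemma~\ref{lm:A2} taking care of the $\eta[-1]=\eta[0]$, $\xi[-1]=\xi[0]$ terms. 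Summing the expected $w$-inequality with the $\eta,\xi$-inequalities, the bilinearity of $\phi_\gamma$ cancels the bulk of the inner products; the surviving mixed terms are of the type $\langle X^{\pm}(\eta[t+1]-\eta[t]),\,w[t+1]-w[t]\rangle$ and are absorbed, entry by entry, using $|X^{\pm}_{ij}|\le q/\sqrt d$ together with the negative quadratics $-\tfrac1{2\tau}\|\eta[t+1]-\eta[t]\|_1^{2}$ and $-\tfrac1{2\sigma}\|w[t+1]-w[t]\|^{2}$ via Young's inequality (this is the step that pins down $\tau\sigma=d/(4q^{2})$). What is left on the left-hand side is the strong-convexity surplus---$\tfrac\gamma d H$ on the dual side, $\tfrac12\|w\|^{2}$ on the primal side---which boosts the coefficients of $V_{\eta[t+1]},V_{\xi[t+1]},\|w^{\circ}-w[t+1]\|^{2}$ to $\theta^{-1}$ times their values in $\Psi_t$, i.e.\ the claimed contraction.

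The main obstacle will be the bookkeeping of the extrapolation terms. Because $\delta^{\pm}_{i^{*}}$ and the linear part $c$ are evaluated at the \emph{extrapolated} iterates $\eta[t]+\theta(\eta[t]-\eta[t-1])$, $\xi[t]+\theta(\xi[t]-\xi[t-1])$ and $w[t]+d(w[t+1]-w[t])$ rather than at the current ones, the mixed terms do not cancel within a single iteration; one must split each into a component that telescopes once it is folded into $\widehat\Psi_t$ and a residual controlled by the negative quadratics above, and then verify that the coefficients of the augmented potential remain nonnegative and match $\tau,\sigma,\theta$. A secondary subtlety is that the $w$-step is stochastic (one coordinate per round) while the $\eta,\xi$-steps are deterministic, so the deterministic dual inequalities may be added to the $w$-inequality only after conditioning on $i^{*}$, and the $1/d$ factors must be tracked consistently so that the contraction rate comes out as $\theta=1-\tfrac1{d+q\sqrt d/\sqrt\gamma}$ and not, say, $1-\tfrac1{q\sqrt d/\sqrt\gamma}$.
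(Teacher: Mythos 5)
Your proposal follows essentially the same route as the paper: the paper's own proof is only a short sketch that defers to Theorem~2.2 of~\cite{allen2016optimization}, noting that the two dual vectors $\eta,\xi$ and the general polytopes $\calS_1,\calS_2$ are handled by substituting Lemmas~\ref{lm:A1} and~\ref{lm:A2} for their counterparts there. Your write-up simply fills in the mechanics of that argument (the Lyapunov potential, the three-point/proximal inequalities, the role of $\tau\sigma=d/(4q^2)$ and the Hadamard entry bound, and the bookkeeping of the extrapolation terms), all of which is consistent with the intended analysis.
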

\begin{proof}[Proof Sketch]
 The  difference between our statement and Theorem 2.2 in~\cite{allen2016optimization} is that
 we update two probability vectors $\eta$ and $\xi$ instead of one in an iteration. Thus, we have
 two terms $V_{\eta[T]} ( \eta^{\circ} )$ and $V_{\xi[T]} ( \xi^{\circ} )$ on the left hand side.   Moreover, we  care about convex polytopes $\calS_{1} \subset \Delta_{n_1}$ and
 $\calS_{2} \subset \Delta_{n_2}$ instead of $\Delta_{n_1}$ and $\Delta_{n_2}$.

However, these differences do not influence the correctness of the proof of Theorem 2.2
in~\cite{allen2016optimization}. Note that we replace Lemma A.1 and  Lemma A.2 in~\cite{allen2016optimization} by  Lemma~\ref{lm:A1} and
Lemma~\ref{lm:A2}. It is not hard to verify the proof of Theorem 2.2
in~\cite{allen2016optimization} works for our theorem.
\end{proof}

We also need the following lemma.

\begin{lemma}
  \label{lm:g2x}
 Define   \begin{equation*}
g(w) :=  \min  \limits_{ \eta  \in \calS_{1} ,  \xi \in \calS_{2} }   w^{\rm T} A  \eta -   w^{\rm T} B \xi
  -  \frac{1}{2}\| w \|^2.
  \end{equation*}
  where $\calS_{1}$ and $\calS_{2}$ are two convex polytopes such that  $\calS_{1} \subset
  \Delta_{n_1}$ and $\calS_{2} \subset \Delta_{n_2}$.  For any $u,v\in
  \R^d$, we have
$$g(u) - g(v) \leq  2(1 + \| v \|) \| u-v \|.$$
\end{lemma}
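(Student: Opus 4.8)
The plan is to exploit that $g$ is a pointwise minimum of functions that are easy to compare at $u$ and $v$. Write $h(w,\eta,\xi) := w^{\rm T}(A\eta - B\xi) - \tfrac12\|w\|^2$, so that $g(w) = \min_{\eta\in\calS_1,\,\xi\in\calS_2} h(w,\eta,\xi)$. Let $(\eta_v,\xi_v)$ attain the minimum defining $g(v)$, and set $s := A\eta_v - B\xi_v$. Since $\calS_1\subset\Delta_{n_1}$ and $\calS_2\subset\Delta_{n_2}$, the vectors $A\eta_v$ and $B\xi_v$ are convex combinations of data points, each of Euclidean norm at most $1$ (the Hadamard preprocessing is a rotation and preserves norms, and we normalized $\|x_i\|\le 1$); hence $\|s\|\le 2$ by the triangle inequality. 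This bound holds uniformly for every feasible $(\eta,\xi)$, which is the one geometric fact the argument needs.

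First I would use suboptimality of $(\eta_v,\xi_v)$ at the point $u$: since $(\eta_v,\xi_v)\in\calS_1\times\calS_2$ is feasible for the minimization defining $g(u)$, we get $g(u)\le h(u,\eta_v,\xi_v) = u^{\rm T}s - \tfrac12\|u\|^2$, while $g(v) = v^{\rm T}s - \tfrac12\|v\|^2$ exactly. Subtracting,
\[
g(u)-g(v) \;\le\; (u-v)^{\rm T}s \;-\;\tfrac12\|u\|^2 \;+\; \tfrac12\|v\|^2 .
\]
The first term is at most $\|u-v\|\,\|s\|\le 2\|u-v\|$ by Cauchy--Schwarz. For the remaining two terms the key step is to linearize $\|u\|^2$ at $v$ rather than comparing norms directly: from $\|u\|^2 = \|v\|^2 + 2v^{\rm T}(u-v) + \|u-v\|^2$ we obtain $-\tfrac12\|u\|^2+\tfrac12\|v\|^2 = -v^{\rm T}(u-v) - \tfrac12\|u-v\|^2 \le \|v\|\,\|u-v\|$, where we simply drop the nonpositive quadratic term. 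Combining the two estimates gives $g(u)-g(v)\le (2+\|v\|)\|u-v\| \le 2(1+\|v\|)\|u-v\|$, as claimed.

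There is essentially no hard part here; it is a short computation. The two things to be careful about are (i) recording precisely why $\|A\eta - B\xi\|\le 2$ for all feasible $\eta,\xi$ — this is exactly where the normalization $\|x_i\|\le 1$ and the inclusions $\calS_1\subset\Delta_{n_1}$, $\calS_2\subset\Delta_{n_2}$ enter, so the same argument simultaneously covers HM-Saddle ($\calS_i=\Delta_{n_i}$) and $\nu$-Saddle ($\calS_i=\calD_{n_i}$) — and (ii) expanding $\|u\|^2$ around $v$ so that no stray $+\tfrac12\|u-v\|^2$ term survives in the final bound. A symmetric argument (swapping $u$ and $v$ and using the minimizer at $u$) would yield the analogous inequality with $\|u\|$ in place of $\|v\|$, but only the stated one-sided form is needed in the convergence analysis of Theorem~\ref{thm:convergence}.
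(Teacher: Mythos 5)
Your proof is correct, and it reaches the stated bound by a genuinely more elementary route than the paper. The paper proves the lemma by writing $g(u)-g(v)=\int_0^1 \langle \nabla g(v+\tau(u-v)),u-v\rangle\,d\tau$, where $\nabla g(w)=A\tilde\eta_w-B\tilde\xi_w-w$ is a (super)gradient built from the minimizer at each point $w$ along the segment; it then bounds $\|A\tilde\eta_w\|,\|B\tilde\xi_w\|\le 1$ and integrates the $-w$ term to get $-v^{\rm T}(u-v)-\tfrac12\|u-v\|^2$. You instead freeze the single minimizer $(\eta_v,\xi_v)$ at $v$, use the one-sided envelope inequality $g(u)\le h(u,\eta_v,\xi_v)$ together with $g(v)=h(v,\eta_v,\xi_v)$, and expand the quadratic exactly. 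Both arguments hinge on the same two facts --- $\|A\eta-B\xi\|\le 2$ for feasible $(\eta,\xi)$ (from $\|x_i\|\le 1$ and $\calS_i\subset\Delta_{n_i}$) and the identity $-\tfrac12\|u\|^2+\tfrac12\|v\|^2=-v^{\rm T}(u-v)-\tfrac12\|u-v\|^2$ --- and both discard the same nonpositive term $-\tfrac12\|u-v\|^2$. What your version buys is that it avoids the envelope-theorem/subgradient-integral machinery entirely (one need not worry about differentiability of $g$ along the segment or about the minimizer varying with $\tau$); what the paper's version buys is nothing extra here, since only the one-sided Lipschitz-type bound is needed. Your intermediate constant $(2+\|v\|)$ is in fact slightly sharper than the stated $2(1+\|v\|)$, which you correctly relax to match the lemma.
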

\begin{proof}
 Denote by $\nabla g(w)$ \emph{any} subgradient of $g(w)$ at point $w$. We write $\nabla
 g(w)= A \tilde{\eta}_w - B\tilde{\xi}_w - w$ for any arbitrary $\tilde{\eta}_w \in \calS_{1} ,
\tilde{\xi}_w\in \calS_{2}$ satisfying that $g(w)=w^{\rm T} A \tilde{\eta}_w -
w^{\rm T} B \tilde{\xi}_w - \|w\|^2$.
Note that $ A \tilde{\eta}_w$ (resp. $B \tilde{\xi}_w$) can be
considered as a weighted
combination of all points $x_i$ (resp. $x_i$), we claim that $\| A \tilde{\eta}_w\|\leq 1$
($\| B \tilde{\xi}_w \|\leq 1$) owing to the assumption that every $x_i$ satisfies $\|x_i\|\leq
1$. Next, we compute as follows
  \begin{align*}
   & g(u)-g(v)=\int_{\tau=0}^{1} \innerprod{\nabla g(v+\tau(u-v))}{u-v} d \tau \\
   =& \int_{\tau=0}^{1} \innerprod{ A\tilde{\eta}_{v+\tau(u-v)}  -
      B \tilde{\xi}_{v+\tau(u-v)}  -(v+\tau(u-v))}{u-v} d \tau \\
   \leq & \|  A \tilde{\eta}_{v+\tau(u-v)} \| \|u-v \| + \|
    B \tilde{\xi}_{v+\tau(u-v)} \| \|u-v \|  \\
    + & \int_{\tau=0}^{1}\innerprod{-v}{u-v} d \tau - \frac{1}{2}\|u-v\|^2 \\
   \leq & \|u-v\|+ \|u-v\| + \|v\| \|u-v\|  \leq  2(1+\|v\|) \|u-v\| .
  \end{align*}
\end{proof}

Now we are ready to prove Theorem~\ref{thm:convergence} as follows.

\begin{reptheorem}{thm:convergence}[restated]
  Algorithm~\ref{alg:update} computes  $(1-\e)$-approximate solutions for HM-Saddle and
  $\nu$-Saddle by $\tilde{O}(d+\sqrt{d/\e \beta})$ iterations. Moreover, it takes $O(n)$ time for each iteration.
\end{reptheorem}
\begin{proof}
  Let
  \begin{align*}
   \psi(n, d) &=   \big(  2\left( \tau^{-1} + 2\gamma d^{-1} \right) \log n  \\
   & + ((2\sigma)^{-1} + 1) \|  w^{\circ}\|^2 \big) \cdot  ((4\sigma)^{-1} + 1)^{-1}
  \end{align*}

  According to Theorem~\ref{thm:allen}, we have
  \begin{align*}
  & \Exp [\|w^{\circ} - w[T] \|^2 ]  \leq  \theta^{T}  \psi(n, d)   \\
  \Rightarrow & \Exp [\|w^{\circ} - w[T] \| ]    \leq  \theta^{T/2}  \psi^{1/2}(n, d)
  \end{align*}
  In order to get a $(1-\e)$-approximate solution,  according to Lemma~\ref{lm:g2x}, it suffices to
  choose $T$ such that
  \begin{eqnarray*}
 & & \Exp \big[g(w^{\circ})-g(w[T]) \big] \\
 & \leq  &\Exp \big[2 (1+\|w[T]\|) \cdot \|w^{\circ}-w[T]\| \big] \\
&  \leq \; & 2\Exp \big[ \big(1+\|w^{\circ}-w[T]\|+\|w^{\circ}\| \big) \cdot \|w^{\circ}-w[T]\| \big] \\
&  = \; & 2\Exp \big[\|w^{\circ}-w[T]\|^2\big]+2\Exp \big[(1+ \|w^{\circ}\| )\|w^{\circ}-w[T]\| \big]\\
 & \leq \; & 2\theta^{ T}  \psi(n, d)+ 2(1+ \|w^{\circ}\| )\theta^{T/2}  \psi^{1/2}(n, d)  \\
&\leq \; & \e \OPT.
  \end{eqnarray*}
  Note that $\theta = 1 - \frac{1}{d + q\sqrt{d}/\sqrt{\gamma }}  = 1 - \frac{1}{d + q\sqrt{d}/\sqrt{\e\beta/2
      \log n}} $. Thus, we only need to have
  \begin{align*}
    T  &\geq  \log_{\theta} \left( \frac{\e \OPT}{2 \psi(n,d)}
    \right)+ 2\log_{\theta} \left( \frac{\e \OPT}{1 + \| w^{\circ} \| }  \cdot \psi^{-1/2}(n,d)
    \right) \\
    & \geq 2 (d+\sqrt{2d/\e\beta}\cdot O(\log n))\log \left( \frac{ (1 + \| w^{\circ} \|) \psi(n,d)
      }{\e \OPT} \right) \\
 & =   \tilde{\Omega}( d + \sqrt{d/\e \beta})
  \end{align*}
\end{proof}

\section{Supplementary Materials of Experiments}
\label{apd:exp}

\topic{Data set}  We use both synthetic and real-world data sets. The real data is
from~\cite{chang2011libsvm} including the separable data set  ``iris" and ``mushrooms'' and
non-separable data set  ``w8a'', ``gisette'', ``madelon'', ``phishing'', ``a1a'', ``a5a'',``a9a'', ``ijcnn1'',
``skin\_nonskin". We summary the information of the data in Table~\ref{tab:realdata}.

\begin{table}[t]
  \centering
  \caption{The sketch of the data sets. Here $n$ is the number of points. $n_1$ is the number of points with $+1$ label and $n_2$ is the number of points with $-1$ label. $d$ is the dimension of the features. $nnz$ is the non-zeros data ratio. }

  \label{tab:realdata}
  \begin{tabular}{ | c | c | c | c | c | c | }
    \hline
    \multirow{2}*{data set} & \multicolumn{5}{|c|}{parameters}  \\
    \cline{2-6}
         & $n $ & $n_1$ & $n_2$  & $d$  & $nnz$  \\
      \hline
      a1a & 1605 & 395 & 1210 & 119 & 0.12 \\
      \hline
      a5a & 6414 & 1569 & 4845 & 122 & 0.114 \\
      \hline
      a9a & 32561 & 7841 & 24,720 & 123 & 0.113  \\
    \hline
    phishing & 11055 & 6157 & 4898 & 68 & 0.441 \\
    \hline
    mushrooms & 8124 & 3916 & 4208 & 112 & 0.188 \\
    \hline
    iris & 150 & 100 & 50 & 4 & 0.978 \\
    \hline
    gisette & 6000 & 3000 & 3000 & 5000 & 0.99  \\
    \hline
      w8a &  49749 & 1479 & 48270 & 300 &0.038  \\
      \hline
      ijcnn1 & 49990 & 4,853 & 45,137 & 22 & 0.590   \\
      \hline
      skin\_nonskin & 245057 & 50859 & 194198 & 3 & 0.982 \\
    \hline
  \end{tabular}
\end{table}

Besides the real world data, we generate some synthetic data sets. There are  three types synthetic data: 1) separable synthetic data, 2) non-separable synthetic data, 3) sparse non-separable synthetic data. We describe the ways to generate them as follows.

\begin{itemize}
\item Separable synthetic data: we randomly choose a hyperplane $H$ which overlaps with the unit norm ball in $\R^{d}$
space. Then we randomly sample $n$
points in a subset of the unit ball such that the ratio of the maximum distance among the
points to $H$ over the minimum distance to $H$ is $\beta_1 = 0.1$. Let the labels of points above $H$  be
$+1$ and let others be $-1$.

\item Non-separable synthetic data: The difference  from the separable synthetic data is that for those points with distance to $H$
smaller than $\beta_2 = 0.1$, we randomly choose their labels to be $+1$ or $-1$ with equal
probability. Moreover, we also use real-world

\item Sparse non-separable synthetic data: First, we set a parameter ``nnz" which represent the number of non-zeros elements in each point. The only difference between the dense non-separable synthetic data is that we randomly sample $n$ points such that each point only has ``nnz" non-zeros non-zeros points.
\end{itemize}

\topic{$\mu$-SVM form used in NuSVC}
The form of the $\mu$-SVM  used in scikit-learn is a variant of the form in the paper. We give the formulation as follows.
\begin{equation}
  \label{eq:scikit}
  \begin{array}{lcl}
	\min\limits_{w, b,\rho,\delta}  & \frac{1}{2}\| w \|^2 - \mu\rho' + \frac{1}{n}\sum_{i}\delta_i  & \\
	\text{s.t.}&  y_i(w^{\mathrm{T}}x_i - b)   \geq \rho' - \delta_i , \delta_i \geq 0 ,& \forall i
  \end{array}
\end{equation}
\cite{crisp2000geometry} prove that through reparameterizing, the above formulation is equivalent to
$\nu$-SVM~\eqref{eq:nusvm}. Concretely speaking, let
$$
\nu = \frac{2}{\mu n}, \; \rho = \frac{\rho'}{\mu}.
$$
Then, \eqref{eq:scikit} can be transformed to $\nu$-SVM~\eqref{eq:nusvm}.
\eat{
The parameter $\nu$ in table~\ref{tab:nu} is equivalent in this sense. See Section 2.1 in~\cite{crisp2000geometry}
for more details.
}

\topic{Parameter $\nu$ in $\nu$-SVM}
As we have discussed in Section~\ref{sec:exp}, although when $\nu$ belongs to $ [1 / \min (n_1, n_2) ,1) $, $\nu$-SVM has feasible solution, where $n_1$ is the number of points with positive label and $n_2$ is the number of points with negative label. Not all feasible  $\nu$ can induce a reasonable prediction model.
If $\nu$ is too close to 1, the two reduced polytopes are not separable. The closest distance between the two reduced polytopes is zero. Note that in general the overlapping points are not unique. Hence the solution is not unique. Moreover, because the solution corresponds two overlapped points, the vector $w$ (which represents the vector determined by the two points) is not unique, hence, is unstable. Overall, here we select a relatively small $\nu$.

Recall that we let
$$
\nu = 1 / (\alpha \min (n_1, n_2)).
$$
We set $\alpha = \{0.1, 0.3, 0.5 \}$ and train the $\nu$-SVM model on the data set ``a9a", ``ijcnn1", ``phishing". We list the results in Table~\ref{tab:alpha}.
\begin{table}[t]
  \centering
  \caption{Experiments on different parameter $\nu$ in $\nu$-SVM. Here $n$ is the number of points. $n_1$ is the number of points with $+1$ label and $n_2$ is the number of points with $-1$ label. }
  \label{tab:alpha}
  \begin{tabular}{ | c | c | c | c | c | c | }
    \hline
    \multirow{2}*{data set} & \multirow{2}*{$\alpha$} & \multicolumn{2}{|c|}{LIBSVM}  & \multicolumn{2}{|c|}{Saddle-SVC}  \\
    \cline{3-6}
         &   &   Obj & Test Acy &  Obj & Test Acy   \\
      \hline
  \multirow{3}*{a9a}& 0.1 & 6e-12 & 0.35 &   6e-4   & 0.69   \\
  \cline{2-6}
     & 0.3 & 6e-13 & 0.36 &    7e-4   & 0.69   \\
     \cline{2-6}
          & 0.5 & 6e-13 & 0.71  &    3e-4   & 0.70   \\
    \hline
     \multirow{3}*{phishing}   & 0.1 & 6e-11 & 0.89 & 3e-4 & 0.82  \\
    \cline{2-6}
    & 0.3 & 0.002 & 0.93 & 0.002 & 0.93  \\
    \cline{2-6}
    & 0.5 & 0.01 & 0.92 & 0.01 & 0.93  \\
    \hline
      \multirow{3}*{ijcnn1} & 0.1 &  2e-12 & 0.17 & 0.0039 & 0.73   \\
      \cline{2-6}
       & 0.3 &  6e-13 & 0.17 & 0.002 & 0.47   \\
       \cline{2-6}
          & 0.5 &  3e-13 & 0.80 & 0.0004 & 0.31   \\
      \hline
  \end{tabular}
\end{table}

\topic{Saddle-SVC vs. LinearSVC} As discussed before, they solve different SVM variants. Thus, we use the test accuracy instead of the objective values to evaluate the convergent rate.
First, we explain the stop criteria of Saddle-SVC. In Theorem~\ref{thm:hmnu}, we prove that Saddle-SVC converge in $\tilde{O}(d+\sqrt{d/\e \beta})$ rounds. Let  $T = d+\sqrt{d/\e \beta}$. We repeat the iterations of Saddle-SVC and compute the objective function every $T$ rounds. If  the difference between two consecutive  objective value is less than $\epsilon$, then output the results.  We note that LinearSVC is very efficient for sparse data set. But for the dense data set, Saddle-SVC performs better.  In the experiment, we use ``nnz" to represent the ratio of  non-zero elements to all elements.  We show that the parameter nnz significant affects the efficient of LinearSVC, but Saddle-SVC is barely affected.
We use ``skin\_nonskin" and ``w8a" and synthetic data sets with different parameter nnz  to evaluate the performance.  We list the details in Table~\ref{tab:linear}.

\begin{table}[t]
  \centering
  \caption{Saddle-SVC vs. LinearSVC: The parameter $\alpha$ for Saddle-SVC is 0.85. The parameter $C$ for LinearSVC is 8. skin\_nonskin: $n = 245057, d = 3$. w8a: $n = 49745, d = 300$.  Synthetic data: $n = 100000, d =128$.}

  \label{tab:linear}
  \begin{tabular}{ | c | c | c | c | c | c |}
    \hline
    \multirow{2}*{data set} &   \multirow{2}*{nnz}  &  \multicolumn{2}{|c|}{Saddle-SVC}  &
    \multicolumn{2}{|c|}{LinearSVC}  \\
    \cline{3-6}
	 & & test acy  & time   & test acy & time   \\
      \hline
     skin & 0.98 &   0.931  & 40.0s &  0.913 & 654s   \\
    \hline
      w8a &  0.03  & 0.984 & 3075s &  0.986 & 12.5s   \\
     \hline
      synthetic & 0.1 &0.804 & 393s &  0.830 & 28.2s   \\
      \hline
      synthetic& 0.5 & 0.844 & 369s & 0.843 & 214s \\
       \hline
      synthetic &  0.9 & 0.825 & 363s &   0.828 & 537s \\
    \hline
  \end{tabular}
\end{table}

\topic{Saddle-DSVC vs. HOGWILD!}
As Saddle-DSVC is the first practical distributed algorithm for $\nu$-SVM. We use another popular distributed algorithm called HOGWILD! for comparison.
Note that HOGWILD! is use to solve $C$-SVM and $l_2$-SVM but not, $\nu$-SVM. Thus, instead of the objective function, we use the accuracy to evaluate the performance of the algorithms.  Here we choose use HOGWILD! for $C$-SVM and Saddle-DSVC for $\nu$-SVM. See the details in Figure~\ref{fig:morerate}. For comparison, we also provide the results of Gilbert Algorithm. Here we choose $\alpha = 0.85$ for Saddle-DSVC and $C = 32$ for HOGWILD!.  We can see that Saddle-DSVC converges faster than HOGWILD! w.r.t. communication cost.
Moreover, Saddle-DSVC is more stable than HOGWILD! algorithm.

\eat{
Besides the results in Figure~\ref{fig:obj}, we test more data sets for the distributed algorithms. We give the results in Figure~\ref{fig:morerate}.
By the experimental results, we obtain the same conclusion as in Section \ref{sec:exp}.

\begin{figure}[t]
  \centering
  \subfigure[synthetic data, $d=128, n = 10000$]{
    \includegraphics[width = 0.21\textwidth]{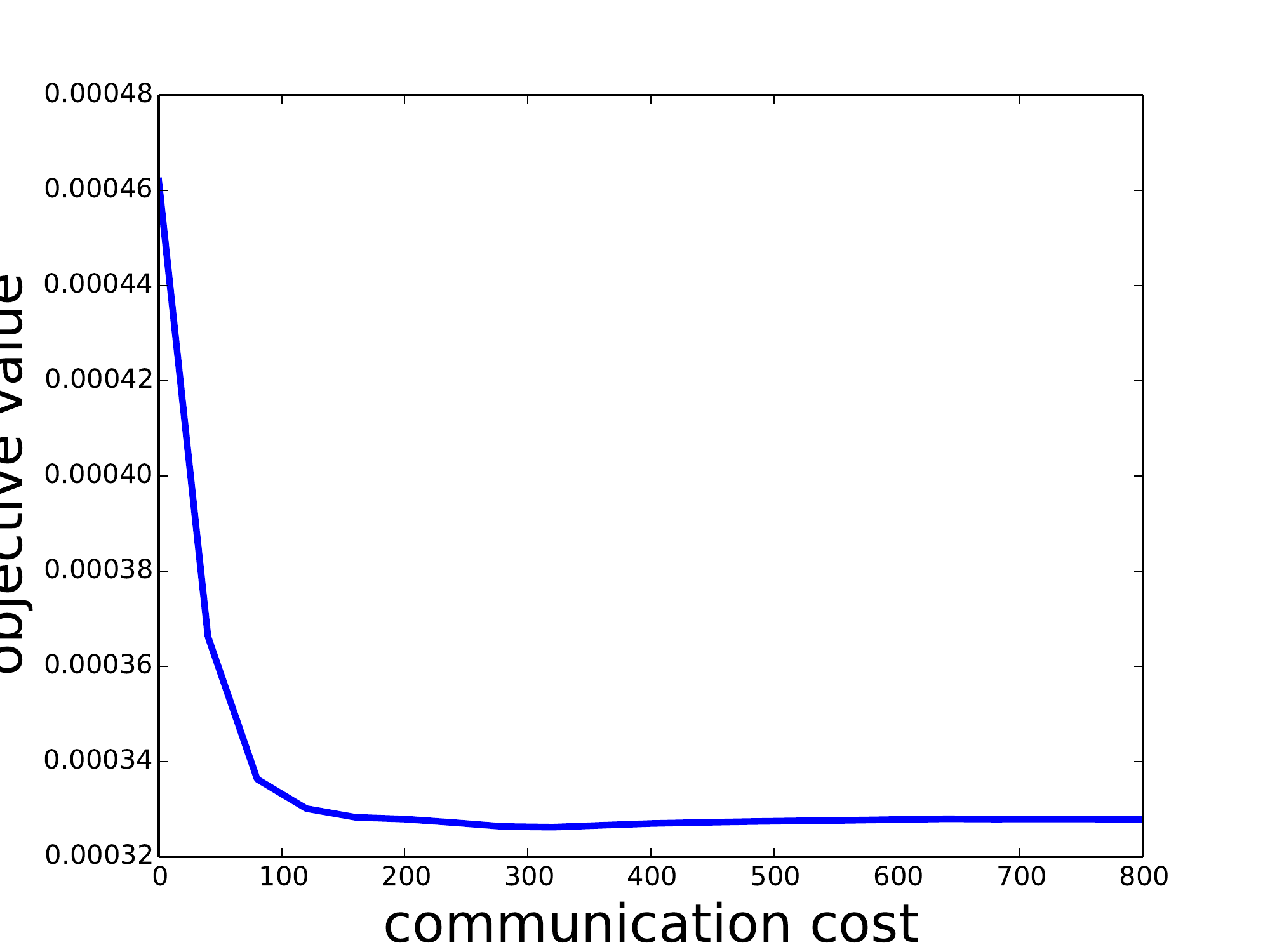}}
  \subfigure[synthetic data, $d=256, n= 10000$]{
    \includegraphics[width = 0.21\textwidth]{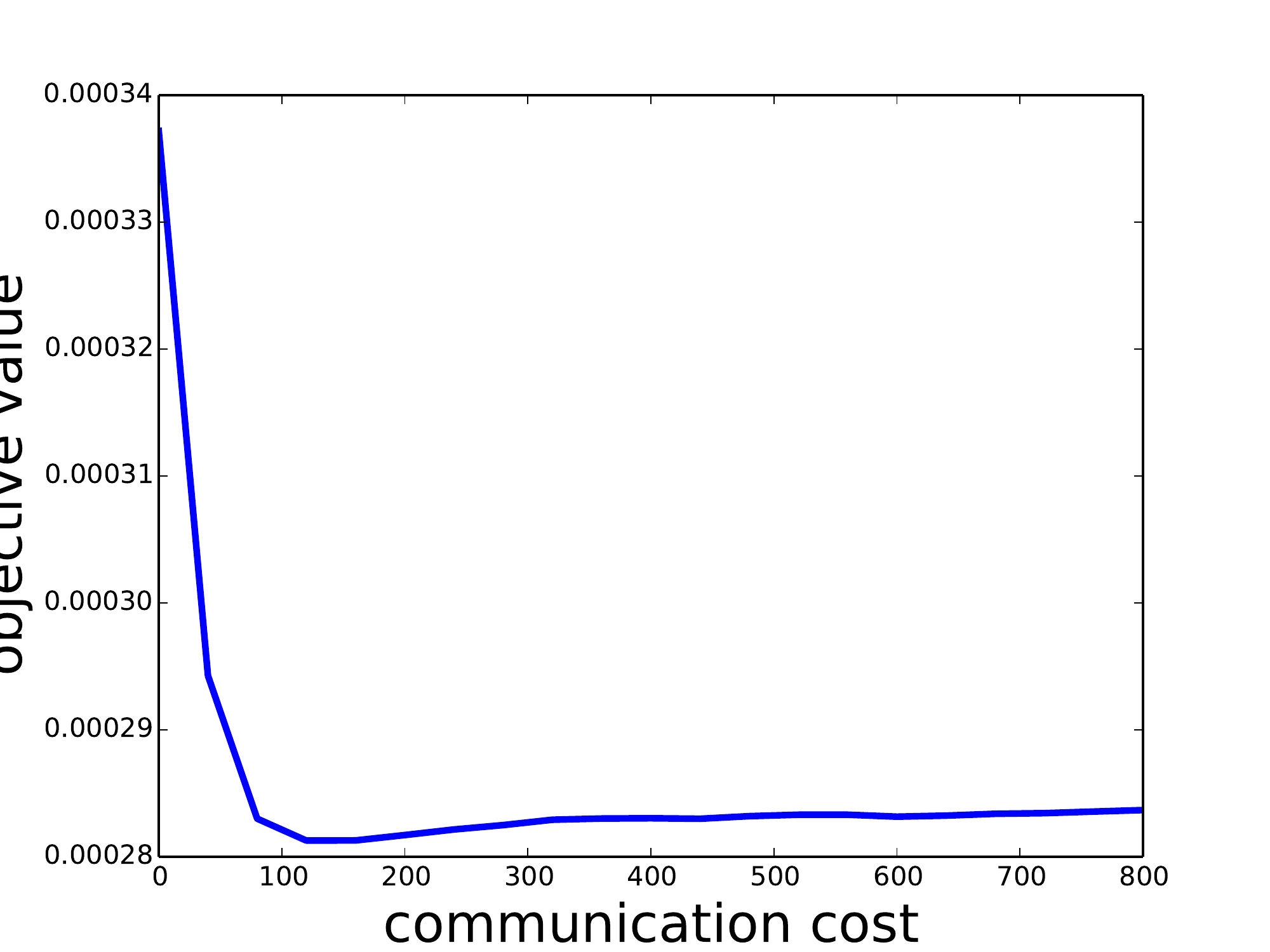}}
  \subfigure[a1a]{
    \includegraphics[width = 0.21\textwidth]{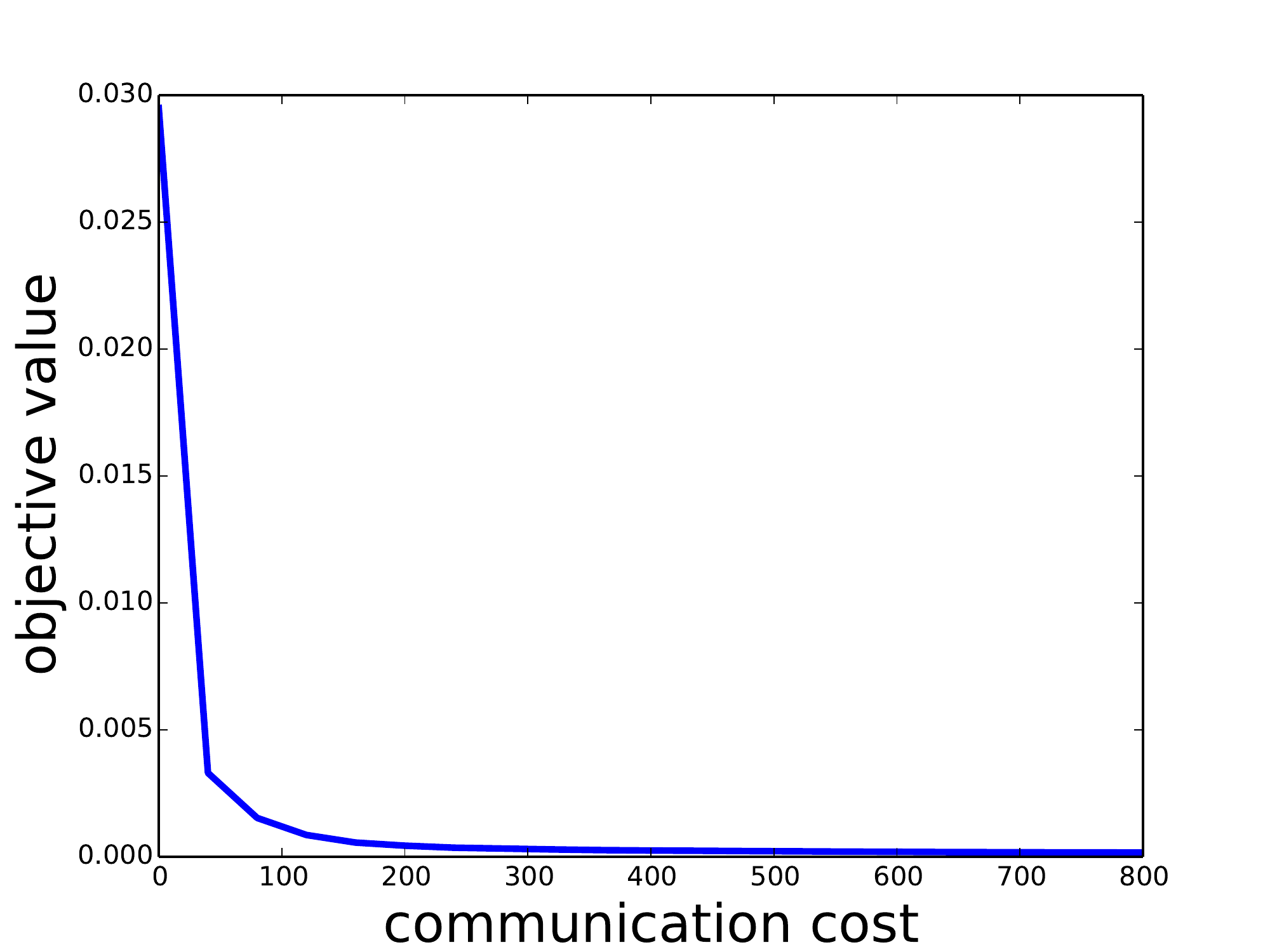}}
  \subfigure[a5a]{
    \includegraphics[width = 0.21\textwidth]{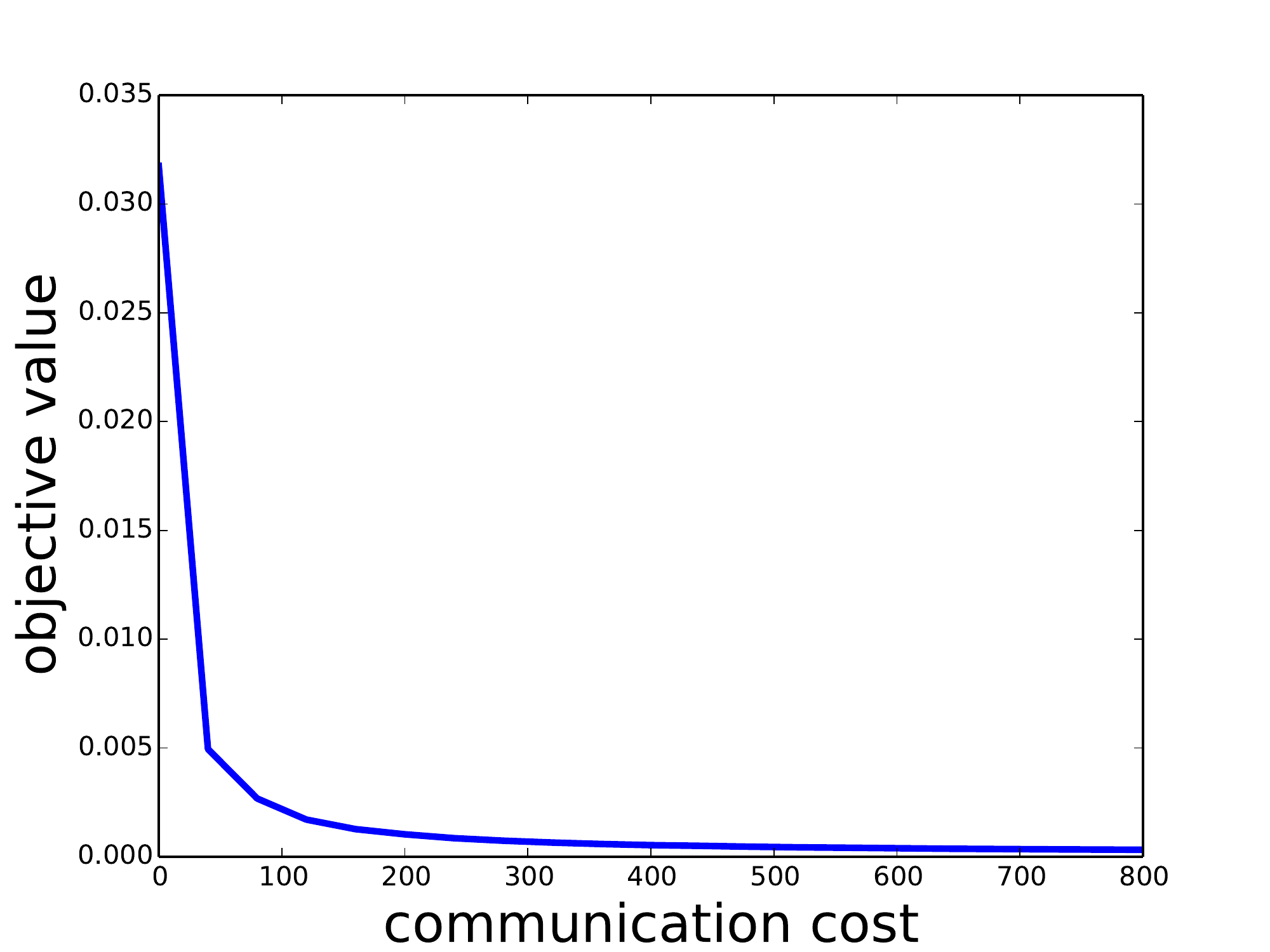}}
  \caption{The objective function  Saddle-DSVC with respective to the communication cost.  Here $k
    =20$.}
  \label{fig:moreobj}
\end{figure}
}

\begin{figure}[t]
  \centering
  \subfigure[synthetic]{
    \includegraphics[width = 0.35\textwidth]{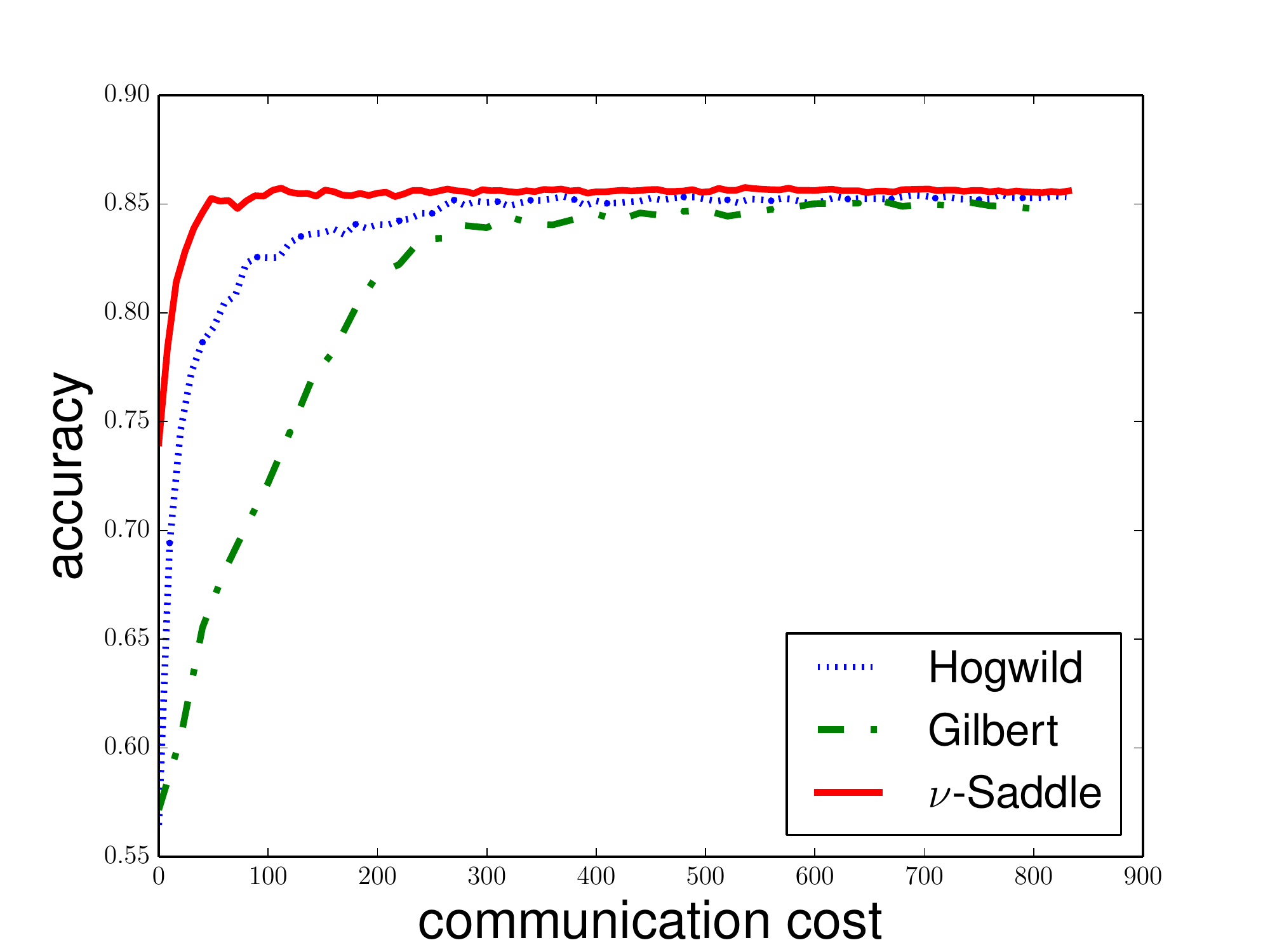}}
  \subfigure[synthetic]{
    \includegraphics[width = 0.35\textwidth]{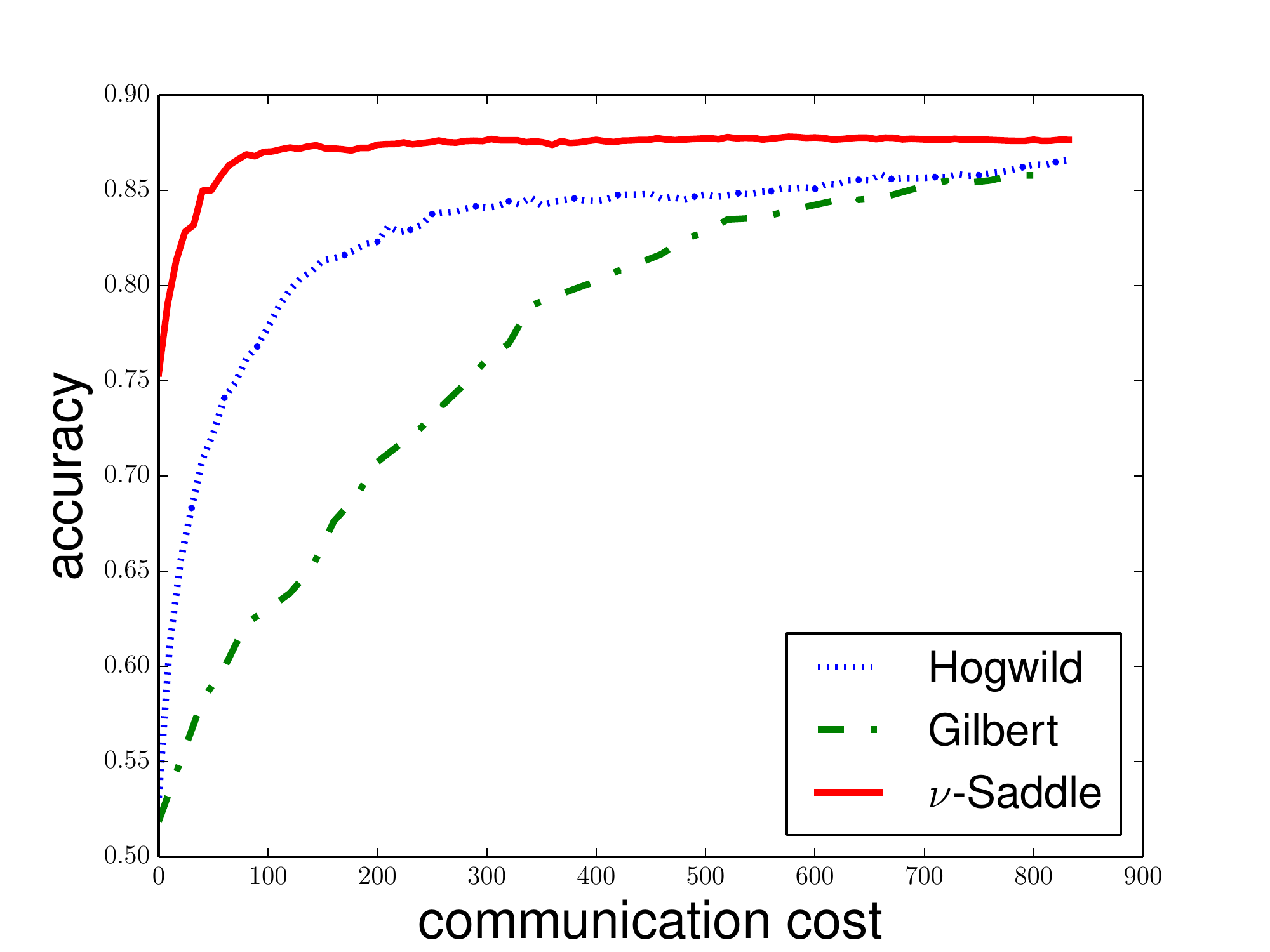}}
  \subfigure[phishing]{
    \includegraphics[width = 0.35\textwidth]{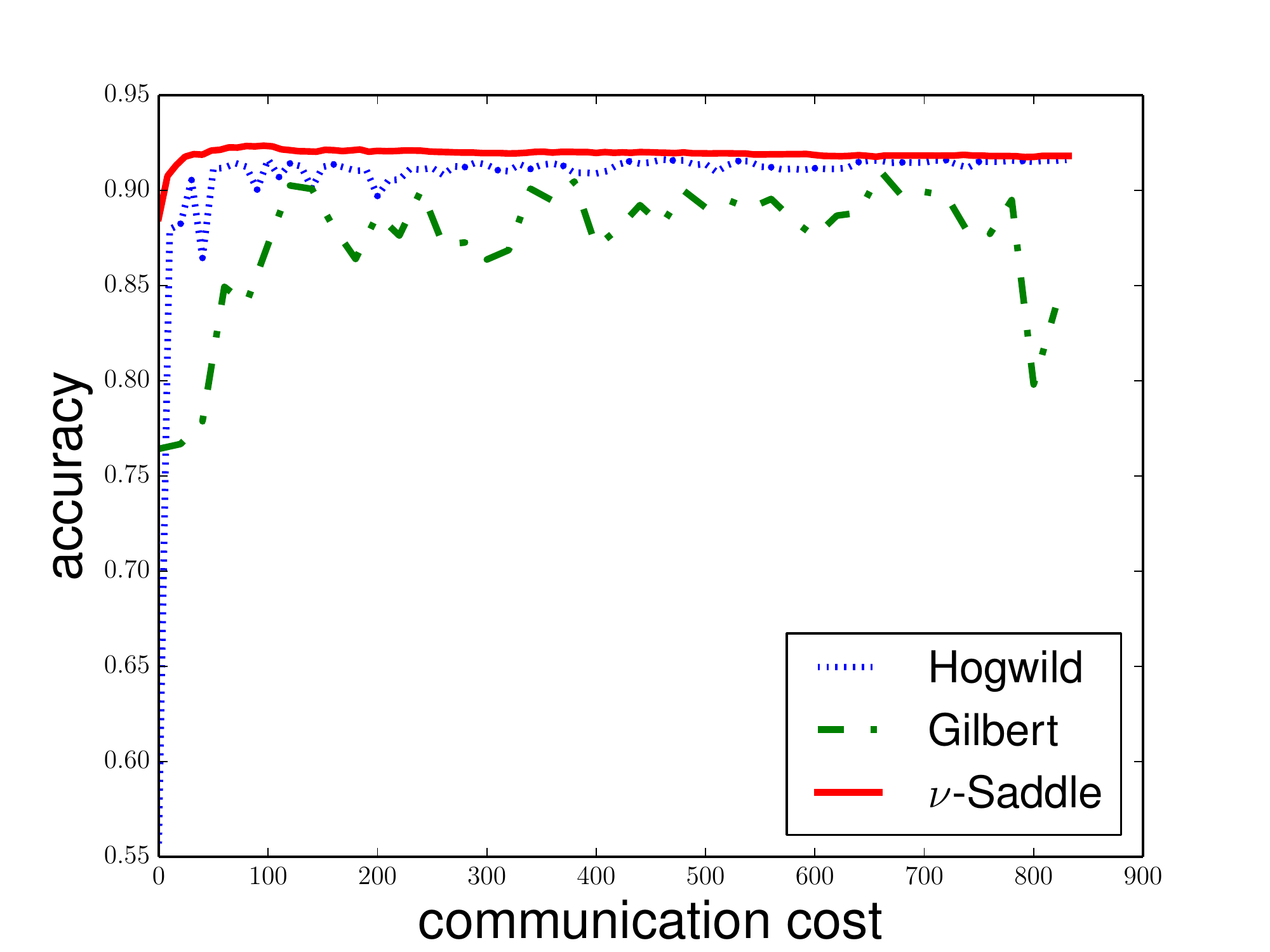}}
  \subfigure[a9a]{
    \includegraphics[width = 0.35\textwidth]{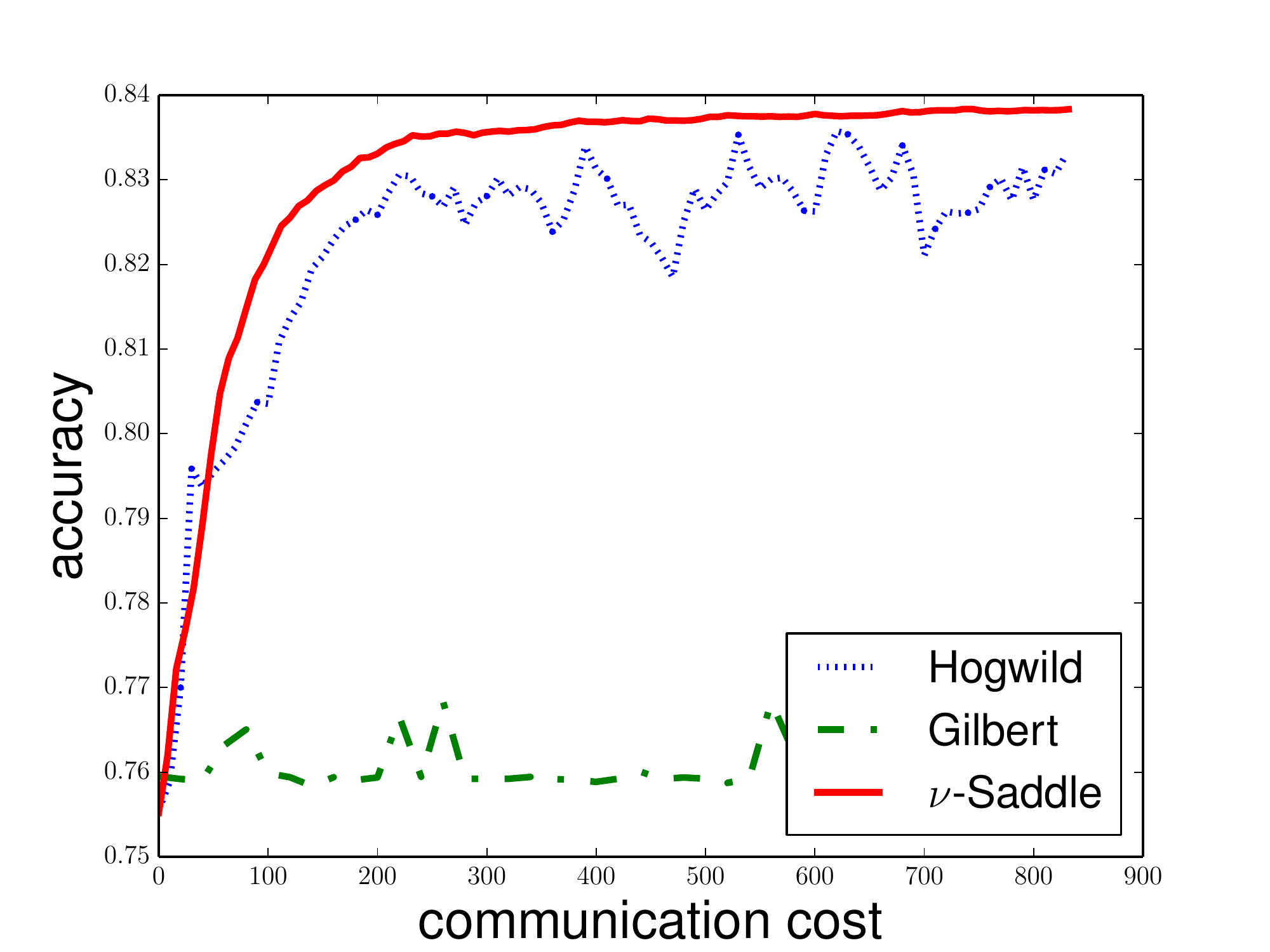}}

  \subfigure[a1a]{
    \includegraphics[width = 0.35\textwidth]{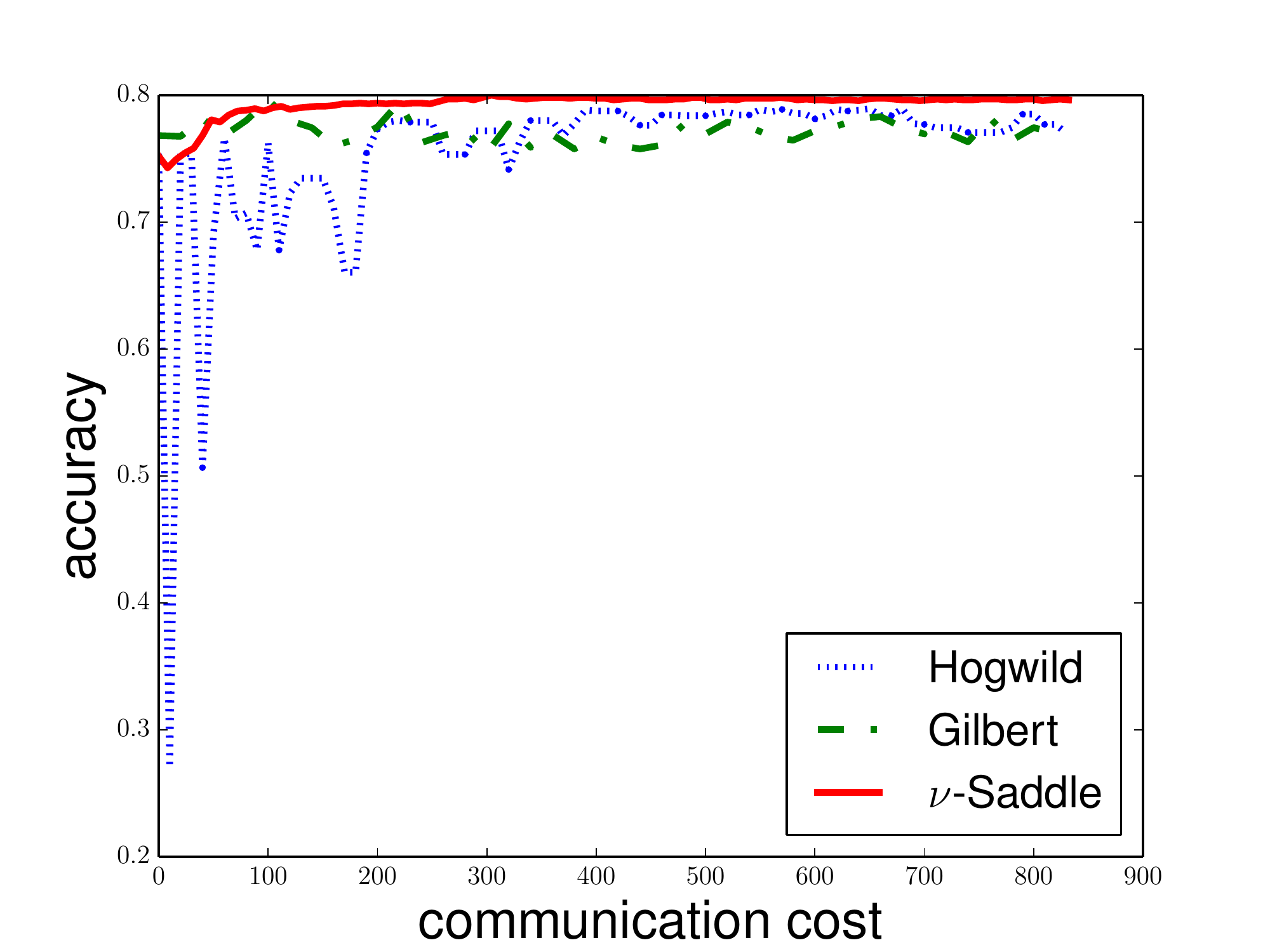}}
  \subfigure[a5a]{
    \includegraphics[width = 0.35\textwidth]{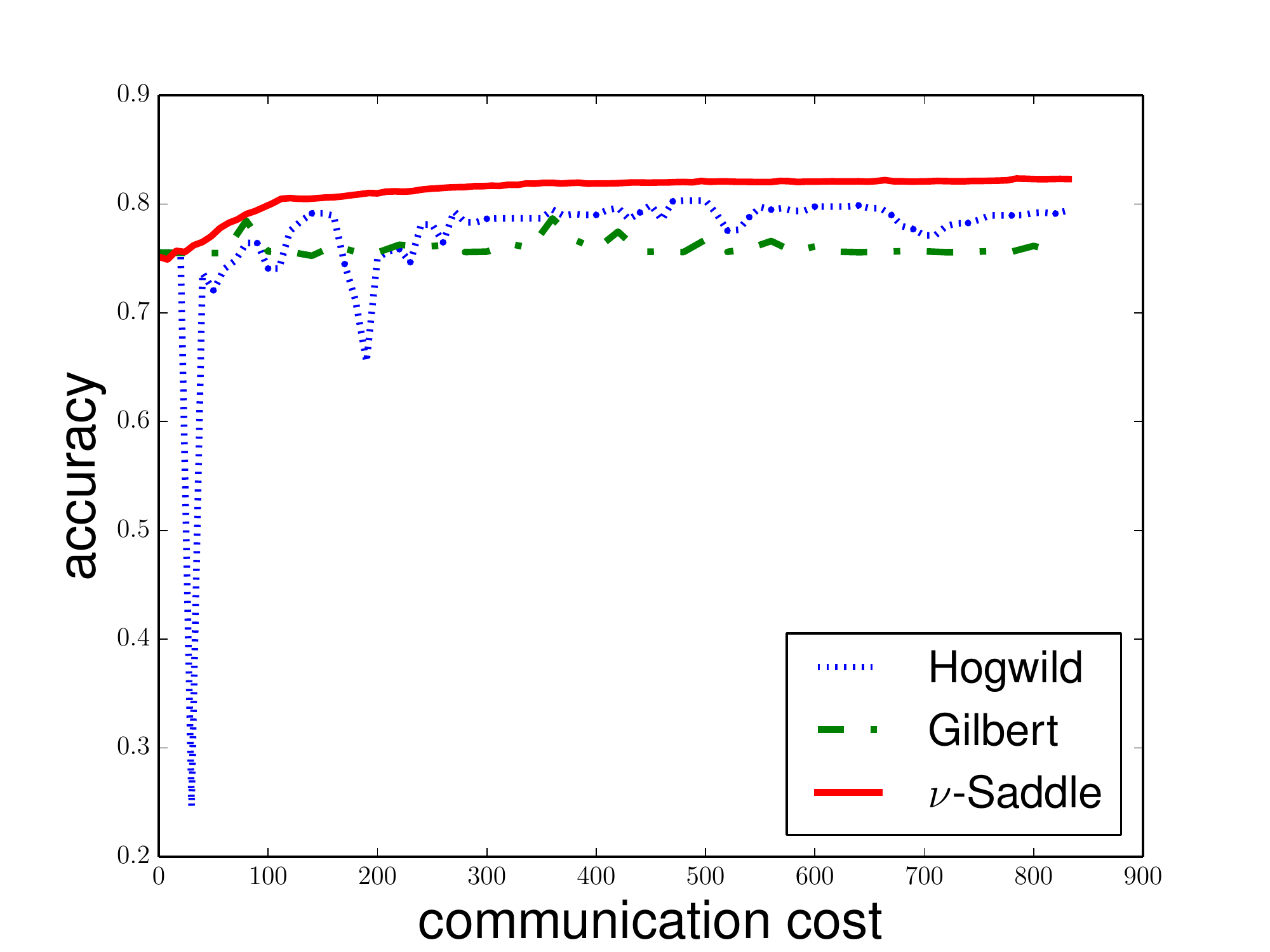}}
  \subfigure[gisette]{
    \includegraphics[width = 0.35\textwidth]{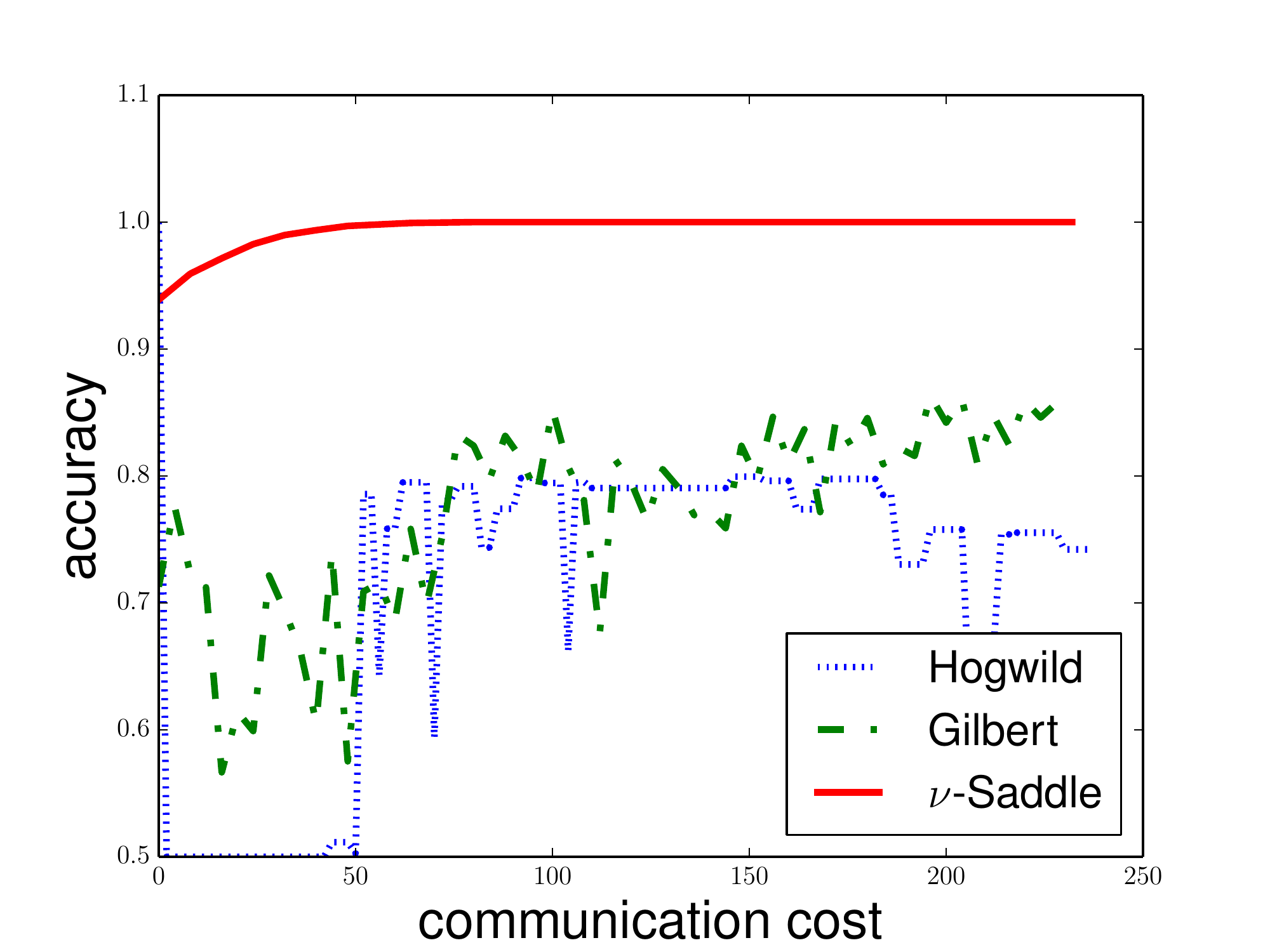}}
  \subfigure[madelon]{
    \includegraphics[width = 0.35\textwidth]{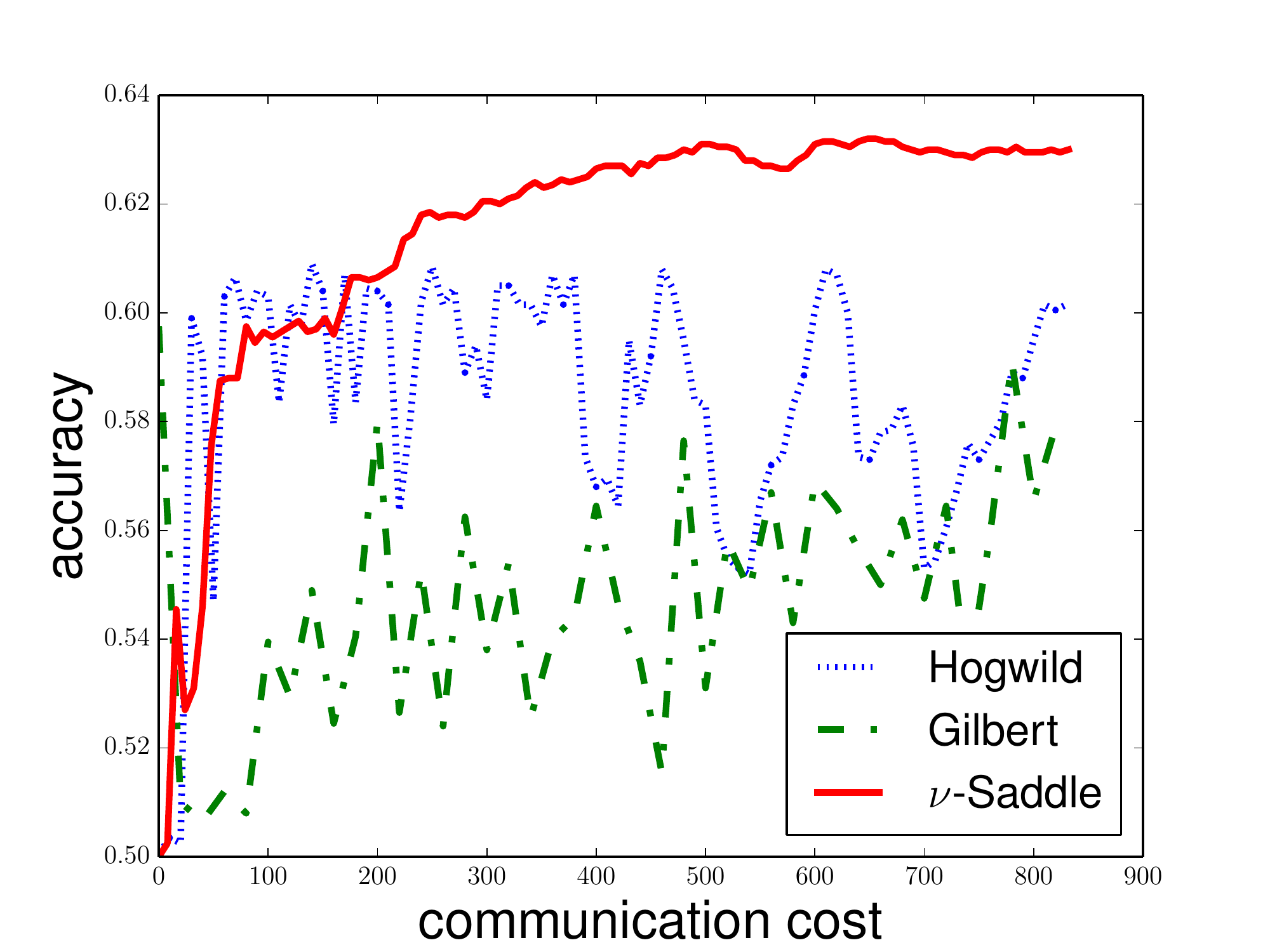}}
  \caption{The accuracy of the distributed Gilbert Algorithm, HOGWILD!, Saddle-DSVC. Here $k =20$.  Synthetic data in Figure(a): $n = 10000, d = 128$,
  Synthetic data in Figure(b): $n = 10000, d =256$.  See Table~\ref{tab:realdata} for the informations of other data sets.  Here, we choose $\alpha = 0.85$ for Saddle-DSVC  and $C = 32$ for HOGWILD!.
  }
  \label{fig:morerate}
\end{figure}


\end{document}